\theoremstyle{definition}
\newtheorem{theorem}{Theorem}[section]
\newtheorem{lemma}[theorem]{Lemma}
\newtheorem{claim}{Claim}
\newtheorem{definition}[theorem]{Definition}
\newtheorem{remark}[theorem]{Remark}
\DeclareMathOperator*{\diag}{diag}
\DeclareMathOperator*{\tr}{Tr}
\renewcommand{\r}{{\textrm r}}
\newcommand{\x}{{\textrm x}}
\newcommand{\y}{{\textrm y}}
\renewcommand{\v}{{\textrm v}}
\newcommand{\e}{{\textrm e}}
\renewcommand{\u}{{\textrm u}}
\renewcommand{\a}{{\textrm a}}
\newcommand{\0}{{\textrm 0}}
\newcommand{\1}{\textrm 1}
\newcommand{\n}{{\textrm n}}
\newcommand{\w}{{\textrm w}}
\renewcommand{\b}{{\textrm b}}
\newcommand{\h}{{\textrm h}}
\newcommand{\I}{\textrm{I}}
\newcommand{\A}{\textrm{A}}
\newcommand{\M}{\textrm{M}}
\newcommand{\G}{\textrm{G}}
\newcommand{\W}{\textrm{W}}
\newcommand{\D}{\textrm{D}}
\newcommand{\U}{\textrm{U}}
\newcommand{\V}{\textrm{V}}
\newcommand{\Rr}{\textrm{Q}}
\newcommand{\X}{\textrm{X}}
\newcommand{\Y}{\textrm{Y}}
\newcommand{\Zz}{\textrm{Z}}
\newcommand{\cB}{\mathcal{B}}
\newcommand{\cD}{\mathcal{D}}
\newcommand{\cE}{\mathcal{E}}
\newcommand{\cX}{\mathcal{X}}
\newcommand{\cS}{\mathcal{S}}
\newcommand{\cU}{\mathcal{U}}
\newcommand{\bb}{\mathbb}
\newcommand{\R}{\bb R}
\newcommand{\E}{\bb E}
\newcommand{\N}{{\bb N}}
\newcommand{\minim}[2]{
\underset{#1}{\textrm{min}} \ #2}
\icmltitlerunning{On the Implicit Bias of Dropout}
\begin{document}

\twocolumn[
\icmltitle{On the Implicit Bias of Dropout}

\icmlsetsymbol{equal}{*}

\begin{icmlauthorlist}
\icmlauthor{Poorya Mianjy}{csjhu}
\icmlauthor{Raman Arora}{csjhu}
\icmlauthor{Rene Vidal}{bmejhu}
\end{icmlauthorlist}

\icmlaffiliation{csjhu}{Department of Computer Science, Johns Hopkins University, Baltimore, USA}
\icmlaffiliation{bmejhu}{Department of Biomedical Engineering, Johns Hopkins University, Baltimore, USA}

\icmlcorrespondingauthor{Raman Arora}{arora@cs.jhu.edu}

\icmlkeywords{Deep Learning, Dropout, Implicit Bias}

\vskip 0.3in
]

\printAffiliationsAndNotice{}

\begin{abstract}
Algorithmic approaches endow deep learning systems with implicit bias that helps them generalize even in over-parametrized settings. In this paper, we focus on understanding such a bias induced in learning through dropout, a popular technique to avoid overfitting in deep learning. For single hidden-layer linear neural networks, we show that dropout tends to make the norm of incoming/outgoing weight vectors of all the hidden nodes equal. In addition, we provide a complete characterization of the optimization landscape induced by dropout.
\end{abstract}

\section{Introduction}
Modern machine learning systems based on deep neural networks are usually over-parameterized, i.e. the number of parameters in the model is much larger than the size of the training data, which makes these systems prone to overfitting. Several explicit regularization strategies have been used~in practice to help these systems generalize, including $\ell_1$ and~$\ell_2$ regularization of the parameters \citep{nowlan1992simplifying}. Recently, \cite{neyshabur2015norm} showed that a variety of such norm-based regularizers can provide size-independent capacity control, suggesting that the network size is not a good measure of complexity in such settings. Such a view had been previously motivated in the context of matrix factorization \citep{srebro2005maximum}, where it is preferable to have many factors of limited overall influence rather than a few important ones.

Besides explicit regularization techniques, practitioners have used a spectrum of algorithmic approaches to improve the generalization ability of  over-parametrized models. This includes early stopping of back-propagation~\citep{caruana2001overfitting}, batch normalization \citep{ioffe2015batch} and  dropout \citep{srivastava2014dropout}. In particular, dropout, which is the focus of this paper, randomly drops hidden nodes along with their connections at  training time. Dropout was introduced by \citet{srivastava2014dropout} as a way of breaking up co-adaptation among neurons, drawing insights from the success of the sexual reproduction model in the evolution of advanced organisms. While dropout has enjoyed tremendous success in training deep neural networks, the theoretical understanding of how dropout (and other algorithmic heuristics)  provide  
regularization in deep learning remains somewhat limited.

We argue that a prerequisite for understanding implicit regularization due to various  algorithmic heuristics in deep learning, including dropout, is to analyze their behavior in simpler models. Therefore, in this paper, we consider the following learning problem. Let $\x \in \R^{d_2}$ represent an  input feature vector with some unknown distribution $\cD$ such that $\E_{\x\sim\cD}[\x\x^\top]=\I$. The output label vector $\y \in \R^{d_1}$ is given as $\y = \M \x$ for some $\M \in \R^{d_1 \times d_2}.$ We consider the hypothesis class represented by a single hidden-layer linear network parametrized as $h_{\U,\V}(\x)=\U\V^\top \x$, where $\V\in \R^{d_2 \times r}$ and $\U\in \R^{d_1\times r}$ are the weight matrices in the first and  the second layers, respectively. The goal of  learning is to find weight matrices $\U, \V$ that minimize the expected~loss $\ell(\U, \V):=\E_{\x \sim \cD}[\|\y - h_{\U, \V}(\x)\|^2]=\E_{\x \sim \cD}[\|\y - \U\V^\top\x\|^2]$. 

A natural learning algorithm to consider is back-propagation with dropout, which can be seen as an instance of stochastic gradient descent on the following objective: 
\begin{equation}\label{eq:opt_dropout}
 \hspace*{-7pt}f(\U,\V)\!:=\!\E_{b_i \sim \text{Ber}(\theta),\x \sim \cD}\!\left[\left\| \y - \frac1{\theta} \U \diag(\b) \V^\top\x \right\|^2\right]\!, 
 \end{equation}
 where the expectation is w.r.t. the underlying distribution 
 on data as well as randomization due to dropout 
 (each hidden unit is dropped independently with probability $1-\theta$). This procedure, which we simply refer to as dropout in this paper, is given in Algorithm~\ref{alg:dropout}.

It is easy to check (see Lemma~\ref{lem:equiv} in the supplementary) that the objective in equation~(\ref{eq:opt_dropout}) can be written as 
\begin{align}\label{eq:main1}
f(\U,\V) = \ell(\U, \V) + \lambda \sum_{i=1}^{r}{\| \u_i \|^2 \| \v_i \|^2},
\end{align}
where $\lambda=\frac{1-\theta}{\theta}$ is the regularization parameter, and $\u_i$ and $\v_i$ represent the $i^{\text{th}}$ columns of $\U$ and $\V$, respectively. Note that while the goal was to minimize the expected squared loss, using dropout with gradient descent amounts to finding a minimum of the objective in equation~(\ref{eq:main1}); we argue that the additional term in the objective serves as a regularizer, 
$R(\U,\V):=\lambda\sum_{i=1}^{r}{\| \u_i \|^2 \|\v_i\|^2}$, and is an explicit instantiation of the implicit bias of dropout. Furthermore, we note that 
this regularizer is closely related to \textit{path regularization} which is given as the square-root of the sum over all paths, from input to output, of the product of the squared weights along the path~\cite{neyshabur2015norm}. Formally, for a single layer network, path regularization is given as
\begin{equation} 
\psi_2(\U ,\V) = \left( \sum_{i=1}^r \sum_{j=1}^{d_1} \sum_{k=1}^{d_2} u^2_{ji} v^2_{ki} \right)^\frac12.  
\end{equation}

Interestingly, the dropout regularizer is equal to the square of the path regularizer, i.e. $R(\U,\V)=\lambda \psi_2^2(\U, \V)$. While this observation is rather immediate, it has profound implications owing to the fact that path regularization provides size-independent capacity control in deep learning, thereby supporting empirical evidence that dropout finds good solutions in over-parametrized settings. 

In this paper, we focus on studying the optimization landscape of the objective in equation~(\ref{eq:main1}) for a single hidden-layer linear network with dropout and the special case of an autoencoder with tied weights. Furthermore, we are interested in  characterizing the solutions to which dropout (i.e. Algorithm~\ref{alg:dropout}) converges. We make the following progress toward addressing these questions. 
\begin{enumerate} 
\item We formally characterize the implicit bias of dropout. 
We show that, when minimizing the expected loss $\ell(\U,\V)$ with 
dropout, any global minimum $(\tilde\U,\tilde\V)$ satisfies $\psi_2(\tilde\U,\tilde\V)=\min\{\psi_2(\U,\V) \textrm{ s.t. } \U\V^\top=\tilde\U\tilde\V^\top\}$. More importantly, for auto-encoders with tied weights, we show that all \textit{local} minima inherit this property.

\item Despite the non-convex nature of the problem, we completely characterize the global optima by giving necessary and sufficient conditions for optimality.

\item We describe the optimization landscape of the dropout problem. In particular, we show that for a sufficiently small dropout rate, all local minima of the objective in equation~\eqref{eq:main1} are global and all saddle points are non-degenerate. This allows Algorithm~\ref{alg:dropout} to efficiently escape saddle points and converge to a global optimum. 
\end{enumerate}  
The rest of the paper is organized as follows. 
In Section~\ref{sec:sym}, we study dropout for single hidden-layer linear auto-encoder networks with weights tied between the first and the second layers. This gives us the tools to study the dropout problem in a more general setting of single hidden-layer linear networks in Section~\ref{sec:asym}. In Section~\ref{sec:landscape}, we characterize the optimization landscape of the objective in~(\ref{eq:main1}), show that it satisfies the strict saddle property, and that there are no spurious local minima. We specialize our results to matrix factorization in Section~\ref{sec:factorization}, and in Section~\ref{sec:experiments}, we discuss preliminary experiments to support our theoretical results.

\begin{algorithm}[t!]
\caption{\label{alg:dropout}Dropout with Stochastic Gradient Descent}
\begin{algorithmic}[1]
\INPUT Data $\{\!(\x_t, \y_t)\!\}_{t=0}^{T-1}$, dropout rate $1\!-\! \theta$, learning rate~$\eta$
\STATE Initialize $\U_0,\V_0$
\FOR{$t = 0,1,\dots,T-1$}
\STATE sample $\b_t$ element-wise from Bernoulli$(\theta)$
\STATE Update the weights
{\small
\begin{align*}
&\U_{t+1} \!\gets\! \U_{t} \!-\! \eta \! \left(\frac1{\theta}\U_{t}\diag(\b_t)\V_{t}^\top\x_t \!-\! \y_t \!\!\right) \! \x_t^\top \V_{t}\diag(\b_t)\\
&\V_{t+1} \!\gets\! \V_{t} \!-\! \eta \x_t \!\!\left(\frac1{\theta}\x_t^\top\V_{t}\diag(\b_t)\U_{t}^\top \!\!-\! \y_t^\top\!\!\right)\! \U_{t}\diag(\b_t)
\end{align*}
}
\ENDFOR
\OUTPUT $\U_T,\V_T$
\end{algorithmic}
\end{algorithm}

\subsection{Notation}\label{sec:notation}
We denote matrices, vectors, scalar variables and sets by Roman capital letters, Roman small letters, small letters and script letters respectively (e.g. $\X$, $\x$, $x$, and $\cX$). For any integer $d$, we represent the set $\{ 1,\ldots,d \}$ by $[d]$. For any integer $i$, $\e_i$ denotes the $i$-th standard basis. For any integer $d$, $\1_d \in \R^d$ is the vector of all ones, $\| \x \|$ represents the $\ell_2$-norm of vector $\x$, and $\| \X \|,\| \X \|_F$, $\| \X\|_*$ and $\lambda_i(\X)$ represent the spectral norm, the Frobenius norm, the nuclear norm and the $i$-th largest singular value of matrix $\X$, respectively. 
 $\langle \cdot, \cdot \rangle$ represents the standard inner product, for vectors or matrices, where $\langle \X,\X' \rangle = \tr(\X^\top \X')$. For a matrix $\X\in \R^{d_1\times d_2}$, $\diag(\X)\in \R^{\min\{ d_1,d_2 \}}$ returns its diagonal elements. Similarly, for a vector $\x\in\R^d$, $\diag(\x)\in \R^{d\times d}$ is a diagonal matrix with $\x$ on its diagonal.  For any scalar $x$, we define $(x)_+=\max\{x,0\}$, and for a matrix $\X$, $(\X)_+$ is the elementwise application of $(\cdot)_+$ to $\X$. For a matrix $\X$ with a compact singular value decomposition $\X=\U\Sigma\V^\top$, and for any scalar $\alpha \geq 0$, we define the singular-value shrinkage-thresholding operator as $\cS_\alpha(\X):=\U(\Sigma-\alpha\I)_+\V^\top$.
 
\section{Linear autoencoders with tied weights}\label{sec:sym}

We begin with a simpler hypothesis family of single hidden-layer linear auto-encoders with weights tied such that $\U=\V$. Studying the problem in this setting helps our intuition about the implicit bias that dropout induces on weight matrices $\U$. This analysis will be extended to the more general setting of single hidden-layer linear networks in the next section.

Recall that the goal here is to find an autoencoder network represented by a weight matrix $\U \in \R^{d_2 \times r}$ that solves: 
\begin{align}\label{eq:opt_sym}
\minim{\U\in\R^{d_2\times r}}{ \ell(\U, \U) + \lambda {\sum_{i=1}^{r}{\| \u_i \|^4}}}, 
\end{align}
where $\u_i$ is the $i^{\text{th}}$ column of $\U.$
Note that the loss function $\ell(\U,\U)$ is invariant under rotations, i.e., for any orthogonal transformation $\Rr\in \R^{d\times d}, \Rr^\top\Rr=\Rr\Rr^\top = \I_d$, it holds that $$\ell(\U,\!\U)\!=\! \E_{\x\sim \cD}[\|\y-\U\Rr\Rr^\top\U^\top\x\|^2]
\!=\!\ell(\U\Rr,\!\U\Rr),$$ so that applying a rotation matrix to a candidate solution $\U$ does not change the value of the loss function. However, the regularizer is not rotation-invariant and clearly depends on the choice of $\Rr$. Therefore, in order to solve Problem~(\ref{eq:opt_sym}), we need to find a rotation matrix that minimizes the value of the regularizer for a given weight matrix. 

To that end, let us denote the squared column norms of the weight matrix $\U$ by $\n_\u=(\| \u_1 \|^2,\ldots,\| \u_r\|^2)$ and let $\1_r \in \R^r$ be the vector of all ones. Then, for any $\U$, 
\begin{align*}
R(\U,\U)&=\lambda\sum_{i=1}^{r}{\| \u_i \|^4} =\frac{\lambda}{r} \| \1_r \|^2 \| \n_\u \|^2 \\
&\geq \frac{\lambda}{r} \langle \1_r,\n_\u \rangle^2 = \frac{\lambda}{r} \left( \sum_{i=1}^{r}{\| \u_i \|^2}\right)^2 =\frac{\lambda}{r} \| \U \|_F^4, 
\end{align*}
where the inequality follows from Cauchy-Schwartz inequality. Hence, the regularizer is lower bounded by $\frac{\lambda}{r}\| \U \|_F^4$, with equality if and only if $\n_\u$ is parallel to $\1_r$, i.e. when all the columns of $\U$ have equal norms. Since the loss function is rotation invariant, one can always decrease the value of the overall objective by rotating $\U$ such that $\U\Rr$ has a smaller regularizer. A natural question to ask, therefore, is 
 \textit{if there always exists a rotation matrix $\Rr$ such that the  matrix $\U\Rr$ has equal column norms.} In order to formally address this question, we introduce the following definition. 

\begin{figure*}[ht!]
\centering
\begin{tabular}{ccc}
$\lambda = 0$  & $\lambda = 0.6$  & $\lambda = 2$  \\ 
\hspace*{-22pt} 
\includegraphics[width=0.37\textwidth]{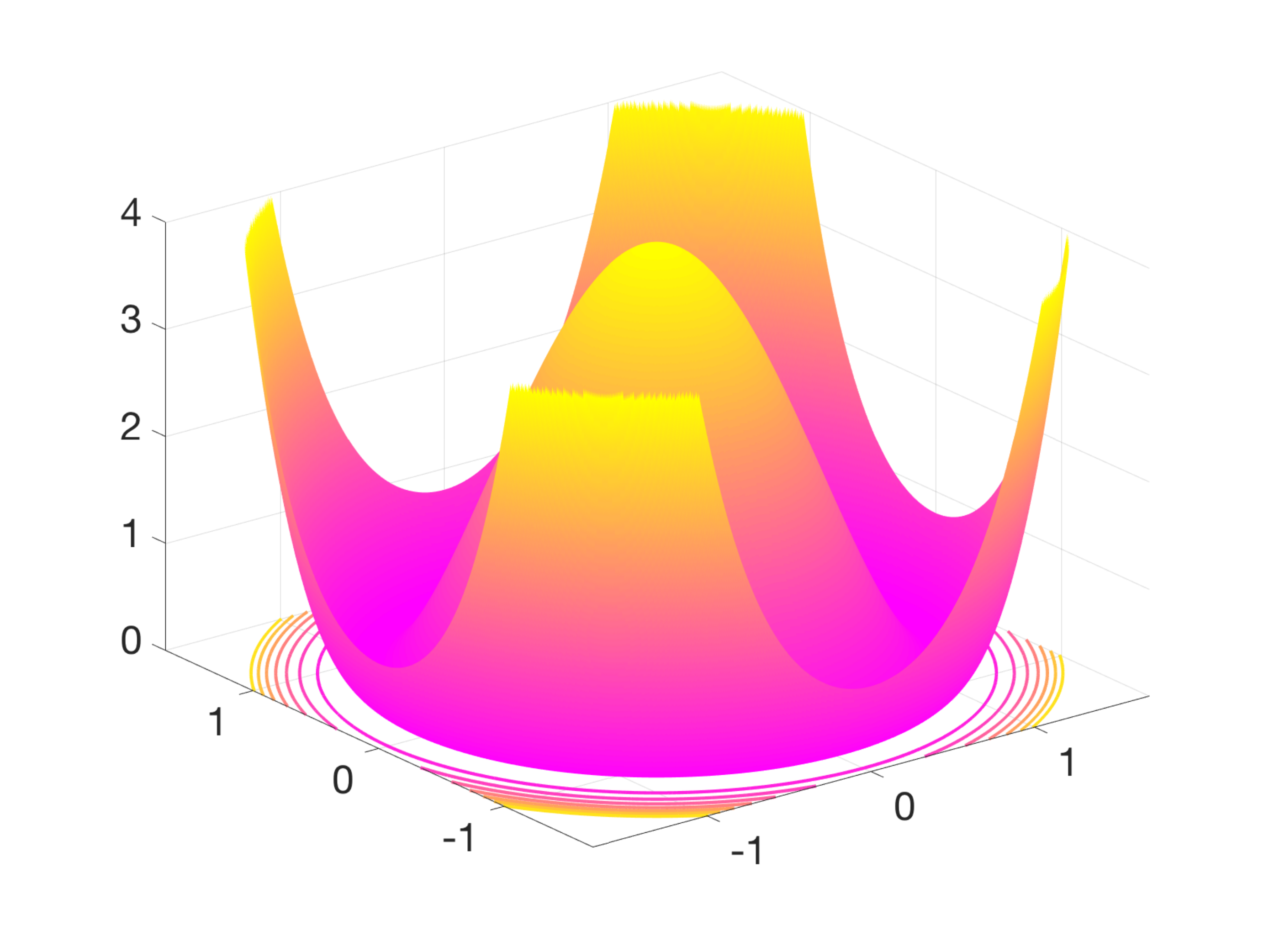}
&
\hspace*{-34pt} 
\includegraphics[width=0.37\textwidth]{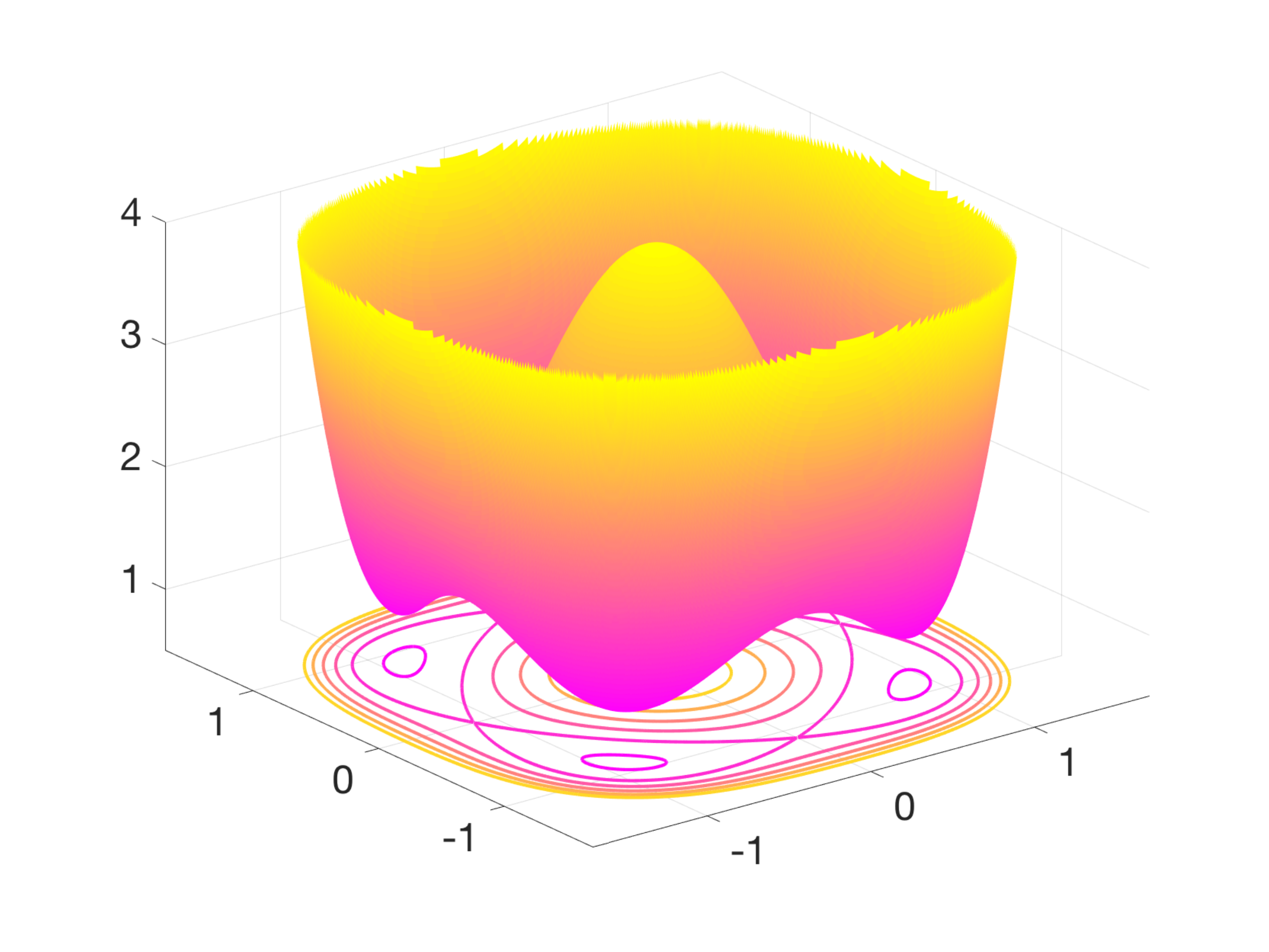}
&
\hspace*{-36pt} 
\includegraphics[width=0.37\textwidth]{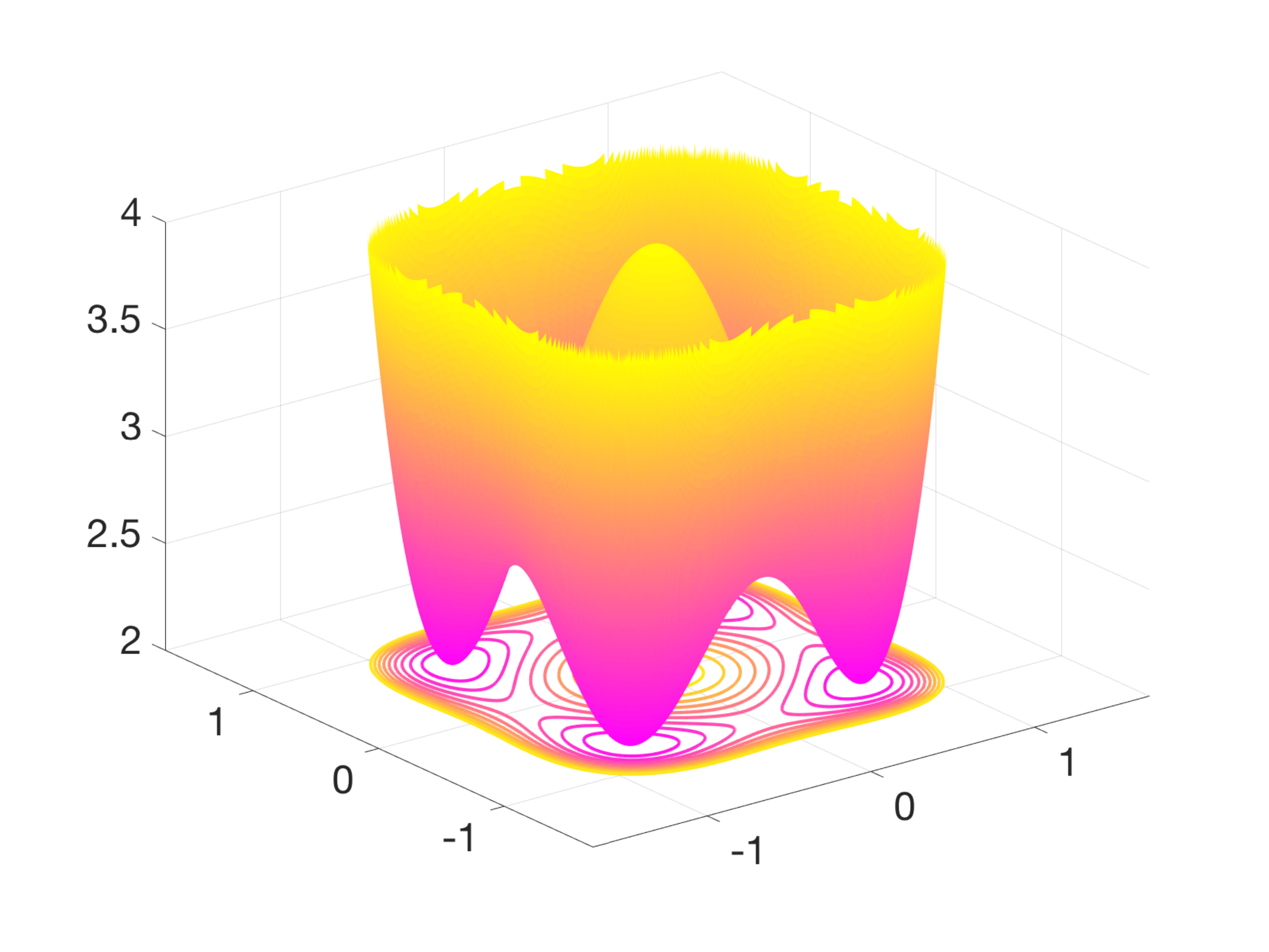} \\ 
\hspace*{-22pt} 
\includegraphics[width=0.37\textwidth]{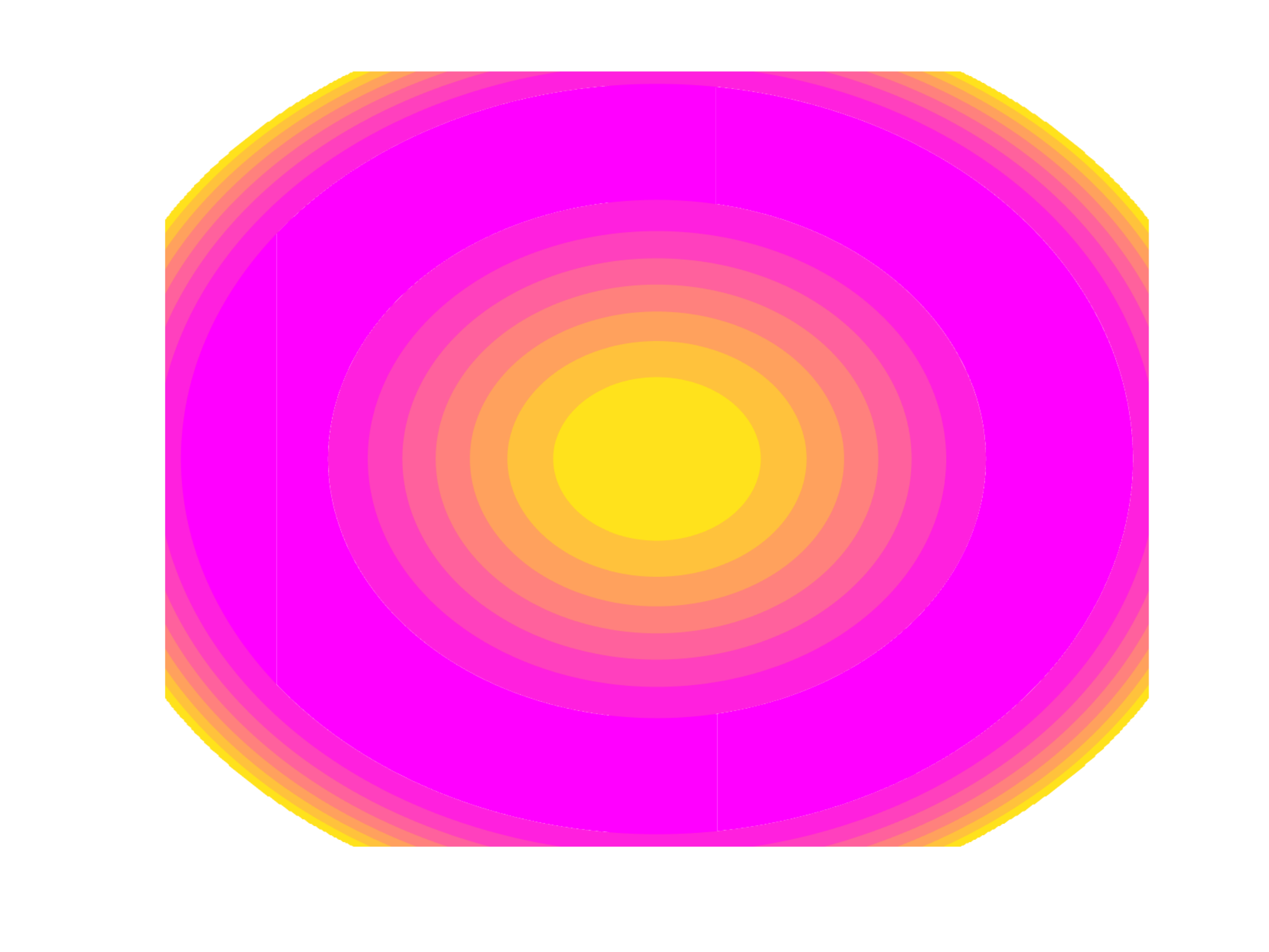}
&
\hspace*{-34pt} 
\includegraphics[width=0.37\textwidth]{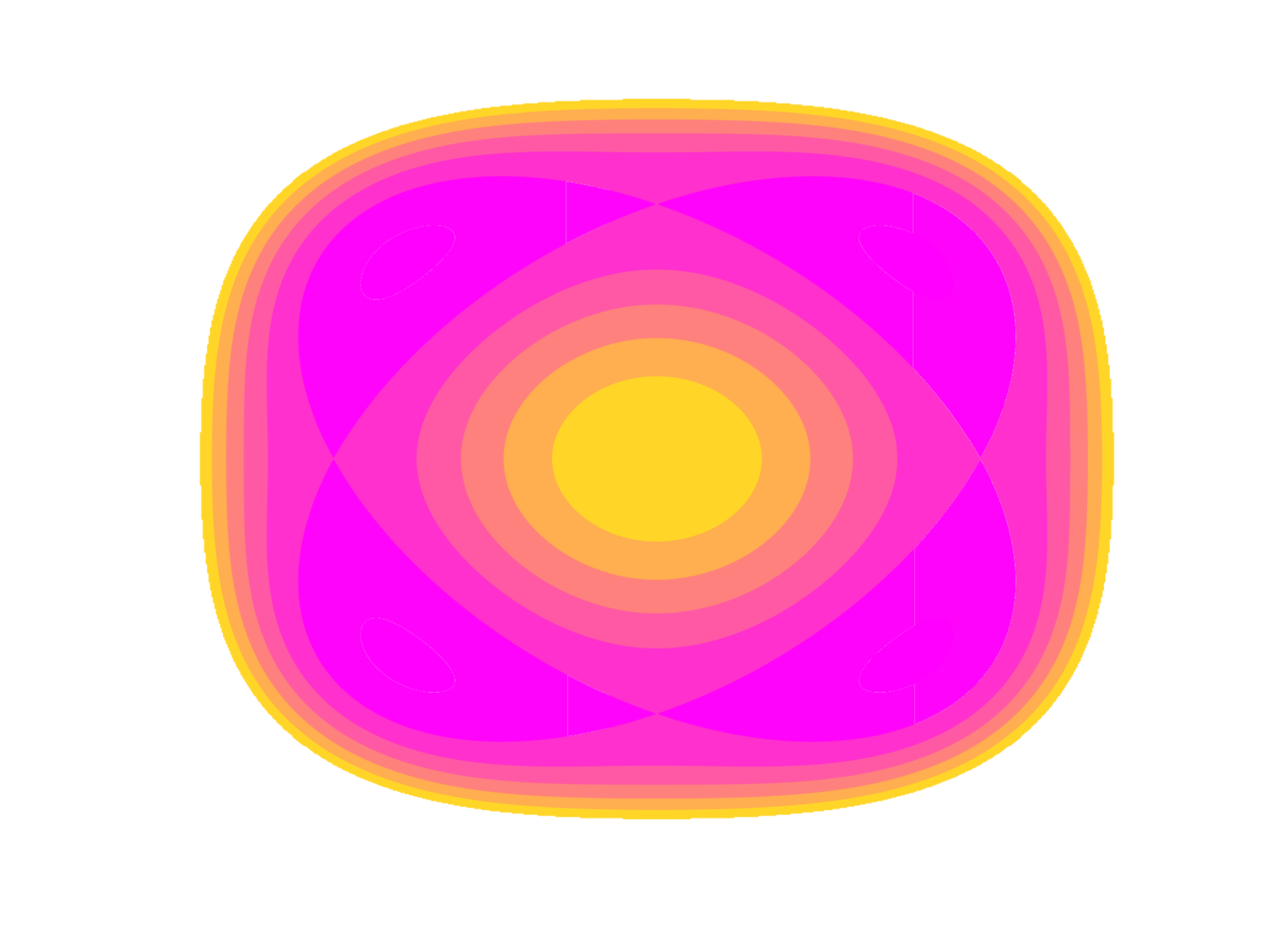}
&
\hspace*{-36pt} 
\includegraphics[width=0.37\textwidth]{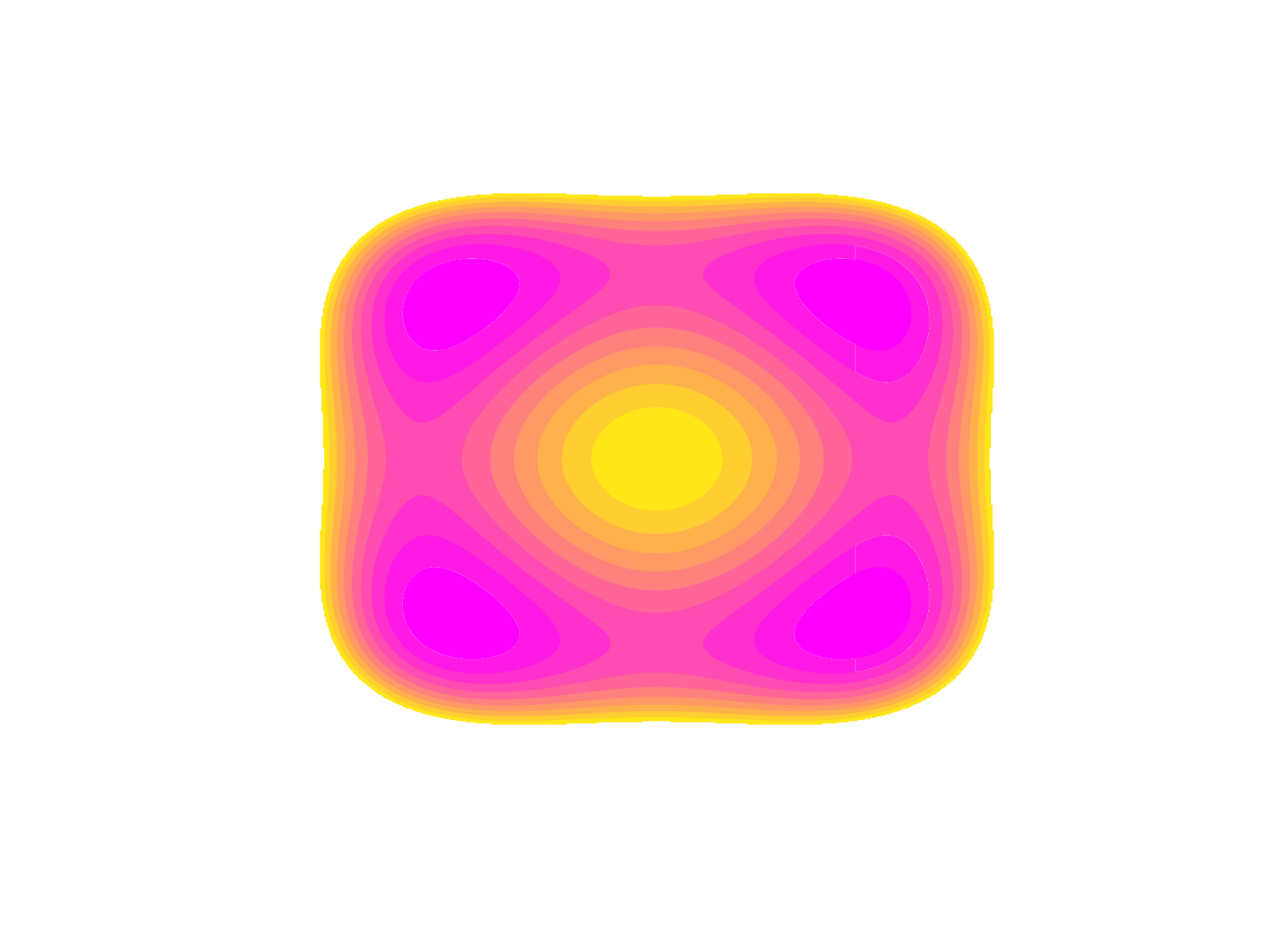}
\end{tabular}
\caption{\label{fig:autoenc}{Optimization landscape (top) and contour plot (bottom) for a single hidden-layer linear autoencoder network with one dimensional input and output and a hidden layer of width  $r=2$ with dropout, for different values of the  regularization parameter $\lambda$. Left: for $\lambda=0$ the problem reduces to squared loss minimization, which is rotation invariant as suggested by the level sets. Middle: for $\lambda > 0$ the global optima shrink toward the origin. All local minima are global, and are equalized, i.e. the weights are parallel to the vector $(\pm1,\pm1)$. Right: as $\lambda$ increases, global optima shrink further.}}
\end{figure*}

\begin{algorithm}[t]
\caption{\label{alg:equalizer}\texttt{EQZ}$(\U)$ equalizer of an auto-encoder $h_{\U,\U}$} 
\begin{algorithmic}[1]
\INPUT $\U \in \R^{d\times r}$ 
\STATE $\G \gets \U^\top \U$
\STATE $\Rr \gets \I_r$
\FOR{$i = 1$ to $r$}
\STATE $[\V, \Lambda] \!\gets\! \texttt{eig}(\G)$ \COMMENT{$\G\!=\!\V\Lambda\V^\top\!$~eigendecomposition}
\STATE $\w=\frac{1}{\sqrt{r-i+1}}\sum_{i=1}^{r-i+1}{\v_i}$
\STATE $\Rr_i \gets [\w \ \w_\perp]$ \COMMENT{$\w_\perp\in \R^{(r-i+1)\times(r-i)}$ orthonormal basis for the Null space of $\w$}
\STATE $\G \gets \Rr_i^\top \G \Rr_i$ \COMMENT{Making first diagonal element zero}
\STATE $\G \gets \G(2:\text{end},2:\text{end})$ \COMMENT{First principal submatrix}
\STATE $\Rr \gets \Rr \begin{bmatrix} &\I_{i-1} &\0 \\ &\0 &\Rr_i \end{bmatrix}$
\ENDFOR
\OUTPUT $\Rr$ \COMMENT{such that $\U\Rr$ is equalized}
\end{algorithmic}
\end{algorithm}

\begin{definition}[Equalized weight matrix, equalized autoencoder, equalizer]
A weight matrix $\U$ is said to be \textit{equalized} if all its columns have equal norms. An autoencoder with tied weights is said to be \textit{equalized} if the norm of the incoming weight vector is equal across all hidden nodes in the network. An orthogonal transformation $\Rr$ is said to be an \emph{equalizer} of $\U$ (equivalently, of the corresponding autoencoder) if $\U\Rr$ is equalized. \end{definition}

Next, we show that any matrix $\U$ can be equalized. 
\begin{theorem}\label{thm:equalization}
Any weight matrix $\U\in \R^{d\times r}$ (equivalently, the corresponding autoencoder network $h_{\U, \U}$) can be equalized. Furthermore, there exists a polynomial time algorithm (Algorithm~\ref{alg:equalizer}) that returns an equalizer for a given matrix.
\end{theorem}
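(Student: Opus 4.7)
The plan is to reduce equalization to a statement about the Gram matrix and then verify that Algorithm~\ref{alg:equalizer} implements an inductive construction. Observe that the squared column norms of $\U\Rr$ are precisely the diagonal entries of $(\U\Rr)^\top(\U\Rr) = \Rr^\top \G \Rr$ where $\G := \U^\top \U$. So the theorem is equivalent to showing that any PSD matrix $\G \in \R^{r\times r}$ can be conjugated by an orthogonal $\Rr$ to have constant diagonal, and by trace invariance that common value must be $\tr(\G)/r$.

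I would prove this by induction on $r$, which precisely mirrors the outer loop of Algorithm~\ref{alg:equalizer}. The base case $r=1$ is vacuous. For the inductive step, the goal is to produce a single orthogonal $\Rr_1$ making the top-left entry of $\Rr_1^\top \G \Rr_1$ equal to $\tr(\G)/r$, and then apply the inductive hypothesis to the trailing $(r-1)\times(r-1)$ principal submatrix. Crucially, that submatrix has trace $\tr(\G) - \tr(\G)/r = \tr(\G)(r-1)/r$, so its target average is again $\tr(\G)/r$, and induction yields the correct common value for the remaining diagonal entries. Assembling the per-iteration rotations as in the last line of the algorithm gives an orthogonal $\Rr$ because each block $\begin{bmatrix} \I_{i-1} & \0 \\ \0 & \Rr_i \end{bmatrix}$ is orthogonal, and products of orthogonal matrices are orthogonal.

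The one substantive ingredient is exhibiting the unit vector $\w$ used to build $\Rr_1$. Writing $\G = \V\Lambda\V^\top$ with orthonormal eigenvectors $\v_1,\ldots,\v_r$ and eigenvalues $\lambda_1,\ldots,\lambda_r$, the algorithm chooses $\w = \tfrac{1}{\sqrt{r}}\sum_{j=1}^{r} \v_j$. Orthonormality of the $\v_j$ immediately gives $\|\w\|=1$, and a direct expansion using $\G\v_j = \lambda_j\v_j$ yields $\w^\top \G \w = \tfrac{1}{r}\sum_j \lambda_j = \tr(\G)/r$, which is exactly what the inductive step demands. Completing $\w$ to an orthonormal basis as $\Rr_1 = [\w\ \w_\perp]$ (using any orthonormal basis of the orthogonal complement, e.g. via Gram–Schmidt or QR) finishes one iteration. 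Polynomial runtime then follows at once: each of the $r$ iterations performs one symmetric eigendecomposition and $O(r^3)$ matrix operations. I do not anticipate a real obstacle here; the only place any cleverness is needed is the closed-form choice of $\w$, and one could alternatively argue its existence by an intermediate-value/convexity argument (the quadratic form $\u \mapsto \u^\top \G \u$ restricted to the unit sphere attains every value in $[\lambda_{\min},\lambda_{\max}]$, which contains the average $\tr(\G)/r$), but the explicit sum-of-eigenvectors formula gives both existence and a polynomial-time algorithm simultaneously.
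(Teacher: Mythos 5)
Your proposal is correct and follows essentially the same recursive argument as the paper: reduce equalization to conjugating the Gram matrix $\G=\U^\top\U$ to constant diagonal, use $\w=\frac{1}{\sqrt r}\sum_i \v_i$ (whose Rayleigh quotient equals $\tr(\G)/r$ by orthonormality of the eigenvectors) to fix the first diagonal entry, and recurse on the trailing principal submatrix while assembling the block-orthogonal rotations. The only cosmetic difference is that the paper first shifts $\G$ by $\frac{\tr\G}{r}\I$ to a trace-zero matrix and targets a zero diagonal (mirroring the classical ``trace-zero implies commutator'' argument), whereas you keep $\G$ and target a constant diagonal of $\tr(\G)/r$ directly; the underlying computation is identical.
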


The key insight here is that if $\G_\U:=\U^\top\U$ is the Gram matrix associated with the weight matrix $\U$, then $h_{\U,\U}$ is equalized by $\Rr$ if and only if all diagonal elements of $\Rr^\top\G_\U\Rr$ are equal. More importantly, if $\G_\U=\V\Lambda\V^\top$ is an eigendecomposition of $\G_\U$, then for $\w=\frac{1}{\sqrt r}\sum_{i=1}^{r}{\v_i}$, it holds that $\w^\top \G_\U \w = \frac{\tr \G_\U}{r}$; Proof of Theorem~\ref{thm:equalization} uses this property to recursively equalize all diagonal elements of $\G_\U$.

Finally, we argue that the implicit bias induced by dropout is closely related to the notion of equalized network introduced above. In particular, our main result of the section states that the dropout enforces any globally optimal network
to be equalized. Formally, we show the following. 
\begin{theorem}\label{thm:equalized_sym}
If $\U$ is a global optimum of Problem~\ref{eq:opt_sym}, then $\U$ is equalized. Furthermore, it holds that $$R(\U)=\frac{\lambda}{r}\| \U \|_F^4.$$
\end{theorem}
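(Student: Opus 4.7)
The plan is to argue by contradiction, leveraging the rotation invariance of $\ell(\U,\U)$ together with the Cauchy--Schwarz lower bound on $R$ and the constructive equalization guaranteed by Theorem~\ref{thm:equalization}. Suppose $\U$ is a global optimum of Problem~(\ref{eq:opt_sym}) but that $\U$ is not equalized, i.e.\ its squared column-norm vector $\n_\u$ is not parallel to $\1_r$. By Theorem~\ref{thm:equalization}, there exists an orthogonal matrix $\Rr$ such that $\tilde\U := \U\Rr$ is equalized. I would then compare the objective values at $\U$ and at $\tilde\U$ term by term.

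For the loss term, rotation invariance of $\ell$ gives $\ell(\tilde\U,\tilde\U) = \ell(\U\Rr,\U\Rr) = \ell(\U,\U)$, as already observed in the text preceding the theorem. For the regularizer, I use the Cauchy--Schwarz chain displayed before the definition of equalized networks: for any weight matrix $\W$,
\begin{equation*}
R(\W,\W) \;=\; \frac{\lambda}{r}\,\|\1_r\|^2\,\|\n_\w\|^2 \;\geq\; \frac{\lambda}{r}\,\langle \1_r,\n_\w\rangle^2 \;=\; \frac{\lambda}{r}\,\|\W\|_F^4,
\end{equation*}
with equality if and only if $\n_\w$ is parallel to $\1_r$, i.e.\ $\W$ is equalized. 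Applied to $\tilde\U$ (which is equalized) this yields $R(\tilde\U,\tilde\U) = \frac{\lambda}{r}\|\tilde\U\|_F^4$, and since the Frobenius norm is invariant under orthogonal transformations, $\|\tilde\U\|_F = \|\U\|_F$. Applied to $\U$ (which is assumed not equalized), the Cauchy--Schwarz inequality is strict, so $R(\U,\U) > \frac{\lambda}{r}\|\U\|_F^4 = R(\tilde\U,\tilde\U)$.

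Combining these two facts, the total objective strictly decreases when passing from $\U$ to $\tilde\U$, contradicting the global optimality of $\U$. Therefore every global optimum must be equalized, and the second assertion $R(\U) = \frac{\lambda}{r}\|\U\|_F^4$ is then simply the equality case of the Cauchy--Schwarz bound above.

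I do not expect any real obstacle here: the argument is essentially bookkeeping that packages three facts already established in the section, namely (i) rotation invariance of $\ell$, (ii) existence of an equalizer from Theorem~\ref{thm:equalization}, and (iii) the Cauchy--Schwarz characterization of equalized matrices as the equality case of $R(\U,\U) \geq \frac{\lambda}{r}\|\U\|_F^4$. The only mild subtlety worth flagging is that one should phrase the contradiction using the strict inequality in Cauchy--Schwarz, which requires $\n_\u$ not being parallel to $\1_r$; this is exactly the negation of ``equalized,'' so no further care is needed.
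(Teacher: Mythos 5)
Your proof is correct and follows exactly the same strategy as the paper's: it combines the rotation invariance of $\ell$, the existence of an equalizer from Theorem~\ref{thm:equalization}, and the Cauchy--Schwarz bound $R(\U,\U)\geq\frac{\lambda}{r}\|\U\|_F^4$ (strict unless $\U$ is equalized) to derive a contradiction with global optimality. The only cosmetic difference is that you explicitly note $\|\U\Rr\|_F=\|\U\|_F$, which the paper leaves implicit.
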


Theorem~\ref{thm:equalized_sym} characterizes the effect of regularization induced by dropout in learning autoencoders with tied weights. It states that for any globally optimal network, the columns of the corresponding weight matrix have equal norms. In other words, dropout tends to give equal weights to all hidden nodes -- it shows that dropout implicitly biases the optimal networks towards having hidden nodes with limited overall influence rather than a few important ones.

While Theorem~\ref{thm:equalized_sym} makes explicit the bias of dropout and gives a necessary condition for global optimality 
in terms of the weight matrix $\U_*$, it does not characterize the bias induced in terms of the network (i.e. in terms of $ \U_* \U_*^\top$). The following theorem completes the characterization by describing  globally optimal autoencoder networks. Since the goal is to understand the implicit bias of dropout, we specify the global optimum in terms of the true concept, $\M$.

\begin{theorem}\label{thm:sym_global}
For any $j\in[r]$, let  $\kappa_j:=\frac{1}{j}\sum_{i=1}^{j}{\lambda_i(\M)}$. Furthermore, define $\rho:=\max\{ j \in [r]: \ \lambda_j(\M) > \frac{\lambda j \kappa_j}{r+\lambda j} \}$. Then, if $\U_*$ is a global optimum of Problem~\ref{eq:opt_sym}, it satisfies that 
$\U_*\U_*^\top = \cS_{\frac{\lambda\rho\kappa_\rho}{r+\lambda\rho}}\left(\M\right)$. 
\end{theorem}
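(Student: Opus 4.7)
My plan is to reduce Problem~(\ref{eq:opt_sym}) to a scalar program in the eigenvalues of $\A := \U_*\U_*^\top$ and then derive the shrinkage formula by first-order analysis. Since $\E[\x\x^\top]=\I$, the loss equals $\ell(\U,\U) = \|\M - \U\U^\top\|_F^2$ and depends only on $\A$. By Theorem~\ref{thm:equalized_sym}, at any global optimum the regularizer attains $\frac{\lambda}{r}\tr(\A)^2$, and by Theorem~\ref{thm:equalization} every PSD $\A$ of rank at most $r$ is realized by some equalized $\U$. Hence Problem~(\ref{eq:opt_sym}) is equivalent to
$$\min_{\A\succeq 0,\ \rank(\A)\le r}\ \|\M-\A\|_F^2 + \frac{\lambda}{r}\tr(\A)^2.$$

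I would then diagonalize. Treating $\M$ as symmetric PSD (only its symmetric part enters the loss, up to an additive constant), write $\M = \W\diag(\lambda_1(\M),\ldots,\lambda_d(\M))\W^\top$ with eigenvalues in decreasing order. Von Neumann's trace inequality implies that at a minimizer $\A$ diagonalizes in the same basis $\W$ with its spectrum sorted in matching order, so $\A = \W\diag(\d_1,\ldots,\d_d)\W^\top$ with $\d_i\ge 0$, at most $r$ nonzeros, and the nonzeros forming a prefix of the spectrum of $\M$. The problem then reduces to
$$\min_{\d\ge 0,\ \|\d\|_0\le r}\ \sum_{i=1}^d (\lambda_i(\M)-\d_i)^2 + \frac{\lambda}{r}\Bigl(\sum_{i=1}^d \d_i\Bigr)^{\!2}.$$
For a candidate prefix $\{1,\ldots,\rho\}$, stationarity gives $\d_j = \lambda_j(\M) - \frac{\lambda}{r}T$ with $T=\sum_{j\le\rho}\d_j$; summing yields $T = \frac{r\rho\kappa_\rho}{r+\lambda\rho}$ and hence $\d_j = \lambda_j(\M) - c_\rho$ with $c_\rho := \frac{\lambda\rho\kappa_\rho}{r+\lambda\rho}$, exhibiting $\A = \cS_{c_\rho}(\M)$.

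The step I anticipate as the main obstacle is matching $\rho$ with the theorem's definition. Three ingredients are needed. First, feasibility $\d_\rho \ge 0$ is equivalent to $\lambda_\rho(\M) > c_\rho$, matching the theorem's inequality. Second, I would verify that the set $\{j\in[r]:\lambda_j(\M)>c_j\}$ is itself a prefix: if the predicate fails at $j+1$, i.e., $\lambda_{j+1}(\M)(r+\lambda(j+1))\le\lambda(j+1)\kappa_{j+1}$, then the decreasing order of eigenvalues gives $\lambda_{j+2}(\M)(r+\lambda(j+1))\le\lambda(j+1)\kappa_{j+1}$, which after a short rearrangement reads $\lambda_{j+2}(\M)\le c_{j+2}$, so the failure propagates and the theorem's $\rho$ is well-defined. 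Third, among valid prefix sizes the largest is globally optimal: substituting the FOC solution back into the objective yields $\sum_{i>j}\lambda_i(\M)^2 + \frac{\lambda(j\kappa_j)^2}{r+\lambda j}$, whose increment from $j$ to $j+1$ computes to $-\frac{r+\lambda j}{r+\lambda(j+1)}(\lambda_{j+1}(\M) - c_j)^2 \le 0$, so extending the support strictly decreases the objective whenever feasibility permits. Combining these pieces gives $\U_*\U_*^\top = \cS_{c_\rho}(\M)$.
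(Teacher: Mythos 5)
Your proposal follows essentially the same route as the paper's proof: reduce to the equalized form via Theorems~\ref{thm:equalization} and~\ref{thm:equalized_sym}, diagonalize using Von Neumann's trace inequality, derive the shrinkage formula from first-order/KKT conditions together with the rank-one-perturbed-identity inverse (Lemma~\ref{lem:inverse} in the paper), and select $\rho$ via the same monotonicity of the reduced objective $g(j)=\sum_{i>j}\lambda_i(\M)^2+\frac{\lambda (j\kappa_j)^2}{r+\lambda j}$ (Lemma~\ref{lem:monotone} in the paper, reproduced in your increment calculation). Your explicit verification that $\{j:\lambda_j(\M)>c_j\}$ is a prefix is a small clarifying addition the paper leaves implicit, but the overall argument is the same.
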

\begin{remark}
In light of Theorem~\ref{thm:equalized_sym}, the proof of Theorem~\ref{thm:sym_global} entails solving the following optimization problem
\vspace*{-10pt}
\begin{equation}\label{opt:lower_sym}
\minim{\U\in\R^{d\times r}}{\ell(\U, \U) + \frac{\lambda}{r}\| \U \|_F^4}, 
\end{equation}
instead of Problem~\ref{eq:opt_sym}. This follows since the loss function $\ell(\U,\U)$ is invariant under rotations, hence a weight matrix $\U$ cannot be optimal if there exists a rotation matrix $\Rr$ such that $R(\U\Rr,\U\Rr) < R(\U,\U)$. Now, while the objective in Problem~\ref{opt:lower_sym} is a lower bound on the  objective in Problem~\ref{eq:opt_sym}, by Theorem~\ref{thm:equalization}, we know that any weight matrix can be equalized. Thus, it follows that the minimum of the two problems coincide. Although Problem~\ref{opt:lower_sym} is still non-convex, it is easier to study owing to a simpler form of the regularizer.
Figure~\ref{fig:autoenc} shows how optimization landscape changes with different dropout rates for a single hidden layer linear autoencoder with one dimensional input and output and with a hidden layer of width two. 
\end{remark}

\section{Single hidden-layer linear  networks}\label{sec:asym}

Next, we consider the more general setting of a shallow linear network with a single hidden layer. Recall,  that the goal is to find weight matrices $\U, \V$ that solve
\vspace*{-6pt}
\begin{equation}\label{eq:opt_asym}
\min_{\U\in \R^{d_1 \times r},\V\in \R^{d_2 \times r}} \ell(\U,\V) + \lambda \sum_{i=1}^r \| \u_i \|^2 \| \v_i \|^2. 
\end{equation}
As in the previous section, we note that the loss function is rotation invariant, i.e. $\ell(\U\Rr,\V\Rr)=\ell(\U,\V)$ for any rotation matrix $\Rr$, however the regularizer is not invariant to rotations. Furthermore, it is easy to verify that both the loss function and
the regularizer are invariant under rescaling of the incoming
and outgoing weights to hidden neurons.

\begin{remark}[Rescaling invariance]
The objective function in Problem~\eqref{eq:main1} is invariant under rescaling of weight matrices, i.e. invariant to transformations of the form $\bar\U=\U\D$, $\bar\V=\V\D^{-1}$, where $\D$ is a diagonal matrix with positive entries. This follows since $\bar\U\bar\V^\top=\U\D\D^{-\top}\V^\top=\U\V^\top$, so that $\ell(\bar\U,\bar\V)=\ell(\U,\V)$, and also $R(\bar\U,\bar\V)=R(\U,\V)$ since $$\sum_{i=1}^{r}{\| \bar{\u}_i \|^2\| \bar{\v}_i \|^2}=\sum_{i=1}^{r}{\| d_i{\u_i} \|^2\| \frac1{d_i} \v_i \|^2}=\sum_{i=1}^{r}{\| \u_i \|^2\| \v_i \|^2}.$$
\end{remark}

As a result of rescaling invariance, $f(\bar\U,\bar\V)=f(\U,\V)$. Now, following similar arguments as in the previous section, we define $\n_{\u,\v}=(\|\u_1\| \|\v_1\|,\ldots,\|\u_r\| \|\v_r\|)$, and note that
\begin{align*}
R(\U,\V)&=\lambda\sum_{i=1}^{r}{\| \u_i \|^2 \| \v_i \|^2} =\frac{\lambda}{r} \| \1_r \|^2 \| \n_{\u,\v} \|^2 \\
&\geq \frac{\lambda}{r} \langle \1_r,\n_{\u,\v} \rangle^2 = \frac{\lambda}{r} \left( \sum_{i=1}^{r}{\| \u_i \| \| \v_i \|}\right)^2,
\end{align*}
where the inequality is due to Cauchy-Schwartz, and the lower bound is achieved if and only if $\n_{\u,\v}$ is a scalar multiple of $\1_r$, i.e. \emph{iff}  $\|\u_i\|\|\v_i\|=\|\u_1\|\|\v_1\|$ for all $i = 1,\ldots,r$. 
This observation motivates the following definition.

\begin{definition}[Jointly equalized weight matrices, equalized linear networks]
A pair of weight matrices $(\U,\V)\in \R^{d_1 \times r}\times \R^{d_2 \times r}$ is said to be \textit{jointly equalized} if
$\|\u_i\| \|\v_i\| = \|\u_1\| \|\v_1\|$ for all $i \in [r]$. A single hidden-layer linear network is said to be equalized if the product of the norms of the incoming and outgoing weights are equal for all hidden nodes. Equivalently, a single hidden-layer network  parametrized by weight matrices $\U, \V$, is equalized if $\U, \V$ are jointly equalized. An orthogonal transformation $\Rr\in \R^{r\times r}$ is an \textit{equalizer} of a single hidden-layer network $h_{\U,\V}$ parametrized by weight matrices $\U,\V$, if $h_{\U\Rr,\V\Rr}$ is equalized. The network $h_{\U,\V}$ (the pair$(\U,\V)$) then are said to be \textit{jointly equalizable} by $\Rr$.
\end{definition}

Note that Theorem~\ref{thm:equalization} only guarantees the existence of an equalizer for an autoencoder with tied weights. It does not inform us regarding the existence of a rotation matrix that jointly equalizes a general network parameterized by a pair of weight matrices $(\U,\V)$; in fact, it is not true in general that any pair $(\U,\V)$ is jointly equalizable. Indeed, the general case requires a more careful treatment. It turns out that while a given pair of matrices $(\U,\V)$ may not be jointly equalizable there exists a pair $(\tilde{\U},\tilde{\V})$ that is jointly equalizable and implements the same network function, i.e. 
$h_{\tilde{\U}, \tilde{\V}} = h_{\U, \V}$. Formally, we state the following result.
\begin{theorem}\label{thm:asym_equalization}
For any given pair of weight matrices $(\U,\V) \in \R^{d_1\times r}\times \R^{d_2\times r}$,  there exists another pair $(\tilde\U,\tilde\V) \in \R^{d_1\times r} \times \R^{d_2\times r}$ and a rotation matrix $\Rr \in \R^{r \times r}$ such that
$h_{\tilde\U, \tilde\V} = h_{\U, \V}$ and $h_{\tilde\U, \tilde\V}$ is jointly equalizable by $\Rr$. 
Furthermore, for $\bar\U:=\tilde\U\Rr$ and $\bar\V:=\tilde\V\Rr$ it holds that 
$\| \bar\u_i \|^2 = \| \bar\v_i \|^2 = \frac1r \| \U\V^\top \|_*$ for $i=1,\ldots,r$. 
\end{theorem}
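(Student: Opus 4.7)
The plan is to reduce to the tied-weights case of Theorem~\ref{thm:equalization} by first picking a canonical factorization $(\tilde\U,\tilde\V)$ of $\U\V^\top$ whose two column-Gram matrices $\tilde\U^\top\tilde\U$ and $\tilde\V^\top\tilde\V$ coincide; once this is arranged, a single rotation delivered by Theorem~\ref{thm:equalization} will equalize both factors simultaneously. Concretely, let $\U\V^\top = \P\Sigma\Rr_0^\top$ denote a compact SVD, with $k:=\rank(\U\V^\top)\le r$ nonzero singular values $\sigma_1,\ldots,\sigma_k$ on the diagonal of $\Sigma$, and define
$$\tilde\U := [\P\Sigma^{1/2},\,\0_{d_1\times(r-k)}],\qquad \tilde\V := [\Rr_0\Sigma^{1/2},\,\0_{d_2\times(r-k)}].$$
Trivially $\tilde\U\tilde\V^\top = \P\Sigma\Rr_0^\top = \U\V^\top$, so $h_{\tilde\U,\tilde\V}=h_{\U,\V}$; and since $\P^\top\P=\I_k=\Rr_0^\top\Rr_0$, a direct block computation gives
$$\tilde\U^\top\tilde\U \;=\; \tilde\V^\top\tilde\V \;=\; \diag(\sigma_1,\ldots,\sigma_k,0,\ldots,0),$$
an $r\times r$ diagonal matrix whose trace equals $\sum_i\sigma_i=\|\U\V^\top\|_*$.

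Next, I would feed $\tilde\U$ (or, equivalently, its Gram matrix above) into Algorithm~\ref{alg:equalizer} to obtain, by Theorem~\ref{thm:equalization}, a rotation $\Rr\in\R^{r\times r}$ such that the diagonal of $\Rr^\top\tilde\U^\top\tilde\U\Rr$ is constant. Because $\tilde\V^\top\tilde\V=\tilde\U^\top\tilde\U$, the same $\Rr$ also yields a constant diagonal for $\Rr^\top\tilde\V^\top\tilde\V\Rr$, with the common entry equal to $\frac{1}{r}\tr(\Rr^\top\tilde\U^\top\tilde\U\Rr) = \frac{1}{r}\sum_i\sigma_i = \frac{1}{r}\|\U\V^\top\|_*$. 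Setting $\bar\U:=\tilde\U\Rr$ and $\bar\V:=\tilde\V\Rr$ then yields $\|\bar\u_i\|^2=\|\bar\v_i\|^2=\frac{1}{r}\|\U\V^\top\|_*$ for every $i\in[r]$. In particular, the products $\|\bar\u_i\|\|\bar\v_i\|$ are constant in $i$, so by definition $(\tilde\U,\tilde\V)$ is jointly equalizable by $\Rr$, completing the argument.

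The main conceptual point -- and the only nontrivial step -- is that Theorem~\ref{thm:equalization} only handles a single matrix, whereas here we need to balance two column-norm profiles simultaneously. The symmetric SVD factorization dissolves this issue: splitting the singular values as $\Sigma^{1/2}$ onto each side makes the two Gram matrices identical, and a single equalizer then works for both sides at once. Padding with zero columns when $k<r$ merely appends zeros to the diagonal of the common Gram matrix and does not affect the argument; everything else is routine linear-algebra bookkeeping.
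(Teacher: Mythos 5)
Your proof is correct and follows essentially the same route as the paper's: take the symmetric SVD split $\tilde\U=\P\Sigma^{1/2}$, $\tilde\V=\Rr_0\Sigma^{1/2}$, observe that the two Gram matrices coincide (and are diagonal), and then a single equalizing rotation from Theorem~\ref{thm:equalization} handles both factors at once. The one place you are actually more careful than the paper is the explicit zero-column padding when $\rank(\U\V^\top)<r$, which is needed for $\tilde\U,\tilde\V$ to be genuinely $d\times r$ as required by the statement; the paper leaves this implicit.
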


Theorem~\ref{thm:asym_equalization} implies that for any network $h_{\U, \V}$ there exists an equalized network $h_{\bar\U, \bar\V}$ such that $h_{\bar\U, \bar\V} = h_{\U, \V}$. Hence, it is always possible to reduce the objective by equalizing the network, and a network $h_{\U,\V}$ is globally optimal only if it is equalized. 

\begin{theorem}\label{thm:equalized_asym}
If $(\U,\V)$ is a global optimum of Problem~\ref{eq:opt_asym}, then $\U,\V$ are jointly equalized. Furthermore, it holds that $$R(\U,\V)=\frac{\lambda}{r}\left(\sum_{i=1}^{r}{\|\u_i\|\|\v_i\|}\right)^2 = \frac{\lambda}{r} \| \U\V^\top \|_*^2$$
\end{theorem}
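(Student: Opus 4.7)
The plan is to leverage Theorem~\ref{thm:asym_equalization} as a ``witness'' that achieves a specific objective value, and then to sandwich the regularizer of any global optimum between the Cauchy--Schwarz/nuclear-norm lower bound and the value attained by this witness. Concretely, starting from a global optimum $(\U,\V)$, Theorem~\ref{thm:asym_equalization} produces a pair $(\bar\U,\bar\V)$ with $h_{\bar\U,\bar\V}=h_{\U,\V}$ (so $\ell(\bar\U,\bar\V)=\ell(\U,\V)$) and with $\|\bar\u_i\|^2=\|\bar\v_i\|^2=\tfrac1r\|\U\V^\top\|_*$ for all $i$. Plugging these into the regularizer gives $R(\bar\U,\bar\V)=\lambda\sum_i\|\bar\u_i\|^2\|\bar\v_i\|^2=\frac{\lambda}{r}\|\U\V^\top\|_*^2$, so the witness attains objective value $\ell(\U,\V)+\frac{\lambda}{r}\|\U\V^\top\|_*^2$.

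Next I would derive the matching lower bound on $R(\U,\V)$ that already appears (essentially) in the discussion preceding the theorem. By the Cauchy--Schwarz inequality applied to $\n_{\u,\v}$ and $\1_r$,
\begin{equation*}
R(\U,\V)=\lambda\sum_{i=1}^{r}\|\u_i\|^2\|\v_i\|^2\;\geq\;\frac{\lambda}{r}\Bigl(\sum_{i=1}^{r}\|\u_i\|\|\v_i\|\Bigr)^2,
\end{equation*}
with equality iff $\|\u_i\|\|\v_i\|$ is constant in $i$, i.e.\ iff $(\U,\V)$ is jointly equalized. Writing $\U\V^\top=\sum_i \u_i\v_i^\top$ and using the triangle inequality for the nuclear norm together with $\|\u_i\v_i^\top\|_*=\|\u_i\|\|\v_i\|$, I get
\begin{equation*}
\sum_{i=1}^{r}\|\u_i\|\|\v_i\|\;\geq\;\|\U\V^\top\|_*,
\end{equation*}
so that $R(\U,\V)\geq\frac{\lambda}{r}\|\U\V^\top\|_*^2$ unconditionally.

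Now I close the argument by global optimality: since $(\U,\V)$ minimizes $f$ and the witness $(\bar\U,\bar\V)$ produces the same loss,
\begin{equation*}
\ell(\U,\V)+R(\U,\V)\;\leq\;\ell(\bar\U,\bar\V)+R(\bar\U,\bar\V)\;=\;\ell(\U,\V)+\tfrac{\lambda}{r}\|\U\V^\top\|_*^2,
\end{equation*}
which forces $R(\U,\V)\leq\frac{\lambda}{r}\|\U\V^\top\|_*^2$. Combined with the chain of lower bounds, equality must hold in both Cauchy--Schwarz and the nuclear-norm triangle inequality. The Cauchy--Schwarz equality is exactly the joint equalization condition $\|\u_i\|\|\v_i\|=\|\u_1\|\|\v_1\|$ for all $i$, and together with the triangle-inequality equality it yields the claimed $R(\U,\V)=\tfrac{\lambda}{r}(\sum_i\|\u_i\|\|\v_i\|)^2=\tfrac{\lambda}{r}\|\U\V^\top\|_*^2$.

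The main conceptual step is the existence of the equalized witness, which is already handed to us by Theorem~\ref{thm:asym_equalization}; beyond that, the only subtle point is the nuclear-norm triangle inequality bound, which is clean once one recalls $\|\u_i\v_i^\top\|_*=\|\u_i\|\|\v_i\|$. Everything else is bookkeeping around a sandwich argument.
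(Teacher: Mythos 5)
Your proof is correct and follows essentially the same route as the paper: use Theorem~\ref{thm:asym_equalization} to build an equalized witness $(\bar\U,\bar\V)$ with the same product (hence the same loss) and regularizer $\tfrac{\lambda}{r}\|\U\V^\top\|_*^2$, lower bound $R(\U,\V)$ via Cauchy--Schwarz, and invoke global optimality to force equality. One small but worthwhile refinement on your part: you make explicit the nuclear-norm triangle inequality $\sum_i\|\u_i\|\|\v_i\|=\sum_i\|\u_i\v_i^\top\|_*\geq\|\U\V^\top\|_*$, which is needed to close the sandwich and which the paper's proof only uses implicitly when it jumps from ``jointly equalized'' to the explicit form $\|\u_i\|^2=\|\v_i\|^2=\tfrac1r\tr\Sigma$; spelling this out also cleanly shows that joint equalization alone (tightness in Cauchy--Schwarz) is necessary but not sufficient, and that tightness in the nuclear-norm bound is also forced at a global optimum.
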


\begin{remark} 
As in the case of autoencoders with tied weights in Section~\ref{sec:sym}, a complete characterization of the implicit bias of dropout is given by considering the global optimality in terms of the network, i.e. in terms of the product of the weight matrices $\U \V^\top$. Not surprisingly, even in the case of single hidden-layer networks,    dropout promotes sparsity, i.e. favors low-rank weight matrices. 
\end{remark} 

\begin{theorem}\label{thm:asym_global}
For any $j\in[r]$, let  $\kappa_j:=\frac{1}{j}\sum_{i=1}^{j}{\lambda_i(\M)}$. Furthermore, define $\rho:=\max\{ j \in [r]: \ \lambda_j(\M) > \frac{\lambda j \kappa_j}{r+\lambda j} \}$. Then, if $(\U_*,\V_*)$ is a global optimum of Problem~\ref{eq:opt_asym}, it satisfies that 
$\U_*\V_*^\top = \cS_{\frac{\lambda \rho \kappa_\rho}{r+\lambda \rho}}(\M)$.
\end{theorem}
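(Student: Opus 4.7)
The plan is to leverage Theorem~\ref{thm:asym_equalization} to reduce the dropout problem to a simpler problem depending only on the product $\W := \U\V^\top$, and then solve it in closed form. The first step is to show
\[
\min_{\U,\V} f(\U,\V)\ =\ \min_{\W:\rank(\W)\leq r}\ \left[\,\ell(\W) + \tfrac{\lambda}{r}\|\W\|_*^2\,\right].
\]
The inequality $\geq$ follows from the Cauchy-Schwarz bound derived earlier in this section, combined with the sub-additivity estimate $\|\U\V^\top\|_* \leq \sum_{i=1}^r \|\u_i\|\|\v_i\|$ (an instance of the triangle inequality for the nuclear norm applied to $\U\V^\top = \sum_i \u_i \v_i^\top$). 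For the reverse inequality, Theorem~\ref{thm:asym_equalization} produces, for any rank-$r$ target $\W$, a jointly equalized pair $(\bar\U,\bar\V)$ with $\bar\U\bar\V^\top = \W$ and $R(\bar\U,\bar\V) = \tfrac{\lambda}{r}\|\W\|_*^2$. Hence the two minimum values coincide and any optimizer $\W_*$ of the right-hand side yields an optimizer $(\U_*,\V_*)$ of $f$ with $\U_*\V_*^\top = \W_*$.

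Next I would simplify $\ell$: using $\y=\M\x$ and $\E_\x[\x\x^\top]=\I$, expanding the squared norm yields $\ell(\W)=\|\M-\W\|_F^2$ up to an additive constant. Since $\|\W\|_*^2$ depends only on the singular values of $\W$, applying Mirsky's inequality to the Frobenius part gives, for any $\W$ of rank at most $r$ with ordered singular values $\sigma_1\geq\cdots\geq\sigma_r\geq 0$,
\[
\ell(\W)+\tfrac{\lambda}{r}\|\W\|_*^2 \ \geq\ \sum_{i=1}^{r}\bigl(\lambda_i(\M)-\sigma_i\bigr)^2 + \tfrac{\lambda}{r}\Bigl(\sum_{i=1}^r \sigma_i\Bigr)^2 + C,
\]
with $C=\sum_{i>r}\lambda_i(\M)^2$ independent of $\W$; equality holds when the top-$r$ left and right singular vectors of $\W$ agree, in order, with those of $\M$. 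The problem therefore reduces to a scalar convex optimization over $\sigma_1,\ldots,\sigma_r \geq 0$.

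Its KKT conditions give, for each $j\in[r]$, either $\sigma_j=0$ and $\lambda_j(\M)\leq c$, or $\sigma_j = \lambda_j(\M)-c > 0$, where $c := \tfrac{\lambda}{r}\sum_i \sigma_i$. If the active set has size $k$, summing the active equations produces the linear relation $c(1+\tfrac{\lambda k}{r}) = \tfrac{\lambda k \kappa_k}{r}$, equivalently $c = \tfrac{\lambda k \kappa_k}{r+\lambda k}$. Matching this with the active-set conditions $\lambda_k(\M) > c \geq \lambda_{k+1}(\M)$ isolates $k=\rho$ as defined in the theorem, and hence $\sigma_j = (\lambda_j(\M) - c)_+$ with $c = \tfrac{\lambda\rho\kappa_\rho}{r+\lambda\rho}$. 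Substituting back into the alignment SVD form of $\W$ gives $\W_* = \cS_c(\M)$, as claimed.

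The main technical obstacle is the spectral reduction in the second step: the nuclear-norm-squared term depends only on the spectrum of $\W$, so the displayed inequality hinges on the Mirsky (equivalently, von Neumann) bound on the Frobenius term together with a careful treatment of its equality case to pin down the alignment of the optimal singular vectors with those of $\M$. A secondary subtlety is verifying that the $\rho$ defined in the theorem is indeed the active-set size produced by the KKT analysis, which boils down to monotonicity properties of the map $k \mapsto \tfrac{\lambda k \kappa_k}{r+\lambda k}$ relative to the decreasing sequence $\{\lambda_k(\M)\}$.
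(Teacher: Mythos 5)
Your proposal mirrors the paper's proof almost step for step: the reduction to a problem in the product matrix $\W=\U\V^\top$ via the equalization results (the paper invokes Theorem~\ref{thm:equalized_asym}, which you re-derive inline from Theorem~\ref{thm:asym_equalization} plus the Cauchy--Schwarz and nuclear-norm bounds), the von Neumann/Mirsky alignment of singular vectors with those of $\M$, and the KKT analysis of the resulting convex program over singular values, including the Woodbury-type inversion that gives $c=\tfrac{\lambda\rho\kappa_\rho}{r+\lambda\rho}$, are all the same as the paper's. The only point you leave hanging is the verification that the KKT complementarity condition $\lambda_{\rho+1}(\M)\le c_\rho$ indeed holds for the $\rho$ defined in the theorem; the paper discharges this via Lemma~\ref{lem:monotone} (monotonicity of the objective value $g(\rho)$ in $\rho$), whereas you correctly flag the needed monotonicity but do not supply it.
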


\section{Geometry of the Optimization Problem}\label{sec:landscape}

While the focus in Section~\ref{sec:sym} and Section~\ref{sec:asym} was on understanding the implicit bias of dropout in terms of the global optima of the resulting regularized learning problem, here we focus on computational aspects of dropout as an optimization procedure. Since dropout is a first-order method (see Algorithm~\ref{alg:dropout}) and the landscape of  Problem~\ref{eq:opt_sym} is highly non-convex, we can perhaps only hope to find a \textit{local} minimum, that too provided if the problem has no degenerate saddle points~\cite{lee2016gradient,ge2015escaping}. Therefore, in this section, we pose the following questions: \emph{What is the implicit bias of dropout in terms of local minima? Do local minima share anything with global minima structurally or in terms of the objective? Can dropout find a local optimum?}

For the sake of simplicity of  analysis, we focus on the case of autoencoders with tied weight as in Section~\ref{sec:sym}. We show in Section~\ref{sec:local} that (a) local minima of Problem~\ref{eq:opt_sym} inherit the same implicit bias as the global optima, i.e. all local minima are equalized. Then, in Section~\ref{sec:ss}, we show that for sufficiently small regularization parameter,  
(b) there are no spurious local minima, i.e. all local minima are global, and (c) all saddle points are non-degenerate (see Definition~\ref{def:ss}).

\subsection{Implicit bias in local optima}\label{sec:local}

We begin by recalling that the loss $\ell(\U, \U)$ is rotation invariant, i.e. $\ell(\U\Rr, \U\Rr)=\ell(\U, \U)$ for any rotation matrix $\Rr$. Now, if the weight matrix $\U$ were not equalized, then there exist indices $i,j\in[r]$ such that $\|\u_i \| > \| \u_j \|$.  We show that it is easy to design a rotation matrix (equal to identity everywhere expect for columns $i$ and $j$) that moves mass from $\u_i$ to $\u_j$ such that the 
difference in the norms of the corresponding columns of $\U\Rr$ decreases strictly while leaving the norms of other columns invariant. In other words, this rotation strictly reduces the regularizer and hence the objective. Formally, this implies the following result. 
\begin{lemma}\label{lem:local}
All local optima of Problem~\ref{eq:opt_sym} are equalized, i.e. if $\U$ is a local optimum, then $\|\u_i\|=\|\u_j\|$ $\forall i, j \in [r].$
\end{lemma}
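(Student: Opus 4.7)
The plan is to argue by contradiction: suppose $\U$ is a local minimum of Problem~\ref{eq:opt_sym} but $\|\u_i\|\neq\|\u_j\|$ for some $i,j\in[r]$. I will construct a one-parameter family of orthogonal matrices $\Rr_\theta$ with $\Rr_0=\I_r$ such that $f(\U\Rr_\theta,\U\Rr_\theta)<f(\U,\U)$ for $\theta$ arbitrarily close to $0$; since $\U\Rr_\theta\to\U$ in Frobenius norm as $\theta\to 0$, this contradicts local optimality. By the rotation invariance of $\ell$ noted at the start of Section~\ref{sec:sym}, we have $\ell(\U\Rr_\theta,\U\Rr_\theta)=\ell(\U,\U)$ for every $\theta$, so it suffices to show that the regularizer $R(\U\Rr_\theta,\U\Rr_\theta)=\lambda\sum_k\|(\U\Rr_\theta)_k\|^4$ is strictly smaller than $R(\U,\U)$ for some $\theta$ near $0$.

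I take $\Rr_\theta$ to be the planar rotation by angle $\theta$ acting on coordinates $i,j$ only, and equal to the identity on every other coordinate. Then the columns of $\U\Rr_\theta$ outside $\{i,j\}$ are unchanged, while $\u_i(\theta)=\cos\theta\,\u_i+\sin\theta\,\u_j$ and $\u_j(\theta)=-\sin\theta\,\u_i+\cos\theta\,\u_j$. In particular $\|\u_i(\theta)\|^2+\|\u_j(\theta)\|^2$ is independent of $\theta$, so by strict convexity of $t\mapsto t^2$ the sum $\|\u_i(\theta)\|^4+\|\u_j(\theta)\|^4$ is strictly minimized along the path precisely when $\|\u_i(\theta)\|=\|\u_j(\theta)\|$. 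The problem thus reduces to showing that the one-dimensional function $g(\theta):=\|\u_i(\theta)\|^4+\|\u_j(\theta)\|^4$ is not locally minimized at $\theta=0$.

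A direct calculation yields $g'(0)=4(\|\u_i\|^2-\|\u_j\|^2)\langle\u_i,\u_j\rangle$. When $\langle\u_i,\u_j\rangle\neq 0$, the sign of a small $\theta$ can be chosen to make $g$ strictly decrease, which closes the argument. The delicate case, and in my view the main obstacle, is $\langle\u_i,\u_j\rangle=0$: here the first-order perturbation is flat, $g'(0)=0$. In this subcase I plan to use the closed form $g(\theta)=\mathrm{const}+\tfrac{1}{2}\cos^2(2\theta)(\|\u_i\|^2-\|\u_j\|^2)^2$, which follows from expanding $\|\u_i(\theta)\|^2=\cos^2\theta\,\|\u_i\|^2+\sin^2\theta\,\|\u_j\|^2$ and the analogous expression for $\|\u_j(\theta)\|^2$; from this one reads off $g''(0)=-4(\|\u_i\|^2-\|\u_j\|^2)^2<0$, so that $\theta=0$ is a strict local maximum of $g$ and the strict decrease persists for every sufficiently small $\theta\neq 0$. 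In both cases $g(\theta)<g(0)$ for some $\theta$ arbitrarily close to $0$, yielding $f(\U\Rr_\theta,\U\Rr_\theta)<f(\U,\U)$ and contradicting the assumed local optimality of $\U$.
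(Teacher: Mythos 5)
Your proof is correct and follows essentially the same approach as the paper's Lemma~\ref{lem:minima_eqz}: a two-coordinate planar rotation that moves mass between a pair of unequal-norm columns, combined with rotation invariance of $\ell$ and the constancy of $\|\u_i(\theta)\|^2+\|\u_j(\theta)\|^2$, to show the regularizer (hence the objective) strictly decreases arbitrarily close to $\U$. Your explicit case split at $\langle\u_i,\u_j\rangle=0$, using the closed form $g(\theta)=\mathrm{const}+\tfrac12(\|\u_i\|^2-\|\u_j\|^2)^2\cos^2(2\theta)$ and $g''(0)=-4(\|\u_i\|^2-\|\u_j\|^2)^2<0$, is a welcome clarification: the paper's one-line choice $\delta=-c\cdot\mathrm{sgn}(\u_1^\top\u_2)$ degenerates (literally gives $\delta=0$ under the usual convention $\mathrm{sgn}(0)=0$) precisely in that subcase, and the paper only revisits it via the second derivative when proving the strict-saddle claim, whereas you cover both subcases directly within the local-minimum argument.
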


Lemma~\ref{lem:local} unveils a fundamental property of dropout. As soon as we perform dropout in the hidden layer -- \emph{no matter how small the dropout rate} -- all local minima become equalized. 

\subsection{Landscape properties}\label{sec:ss}

Next, we characterize the solutions to which dropout (i.e. Algorithm~\ref{alg:dropout}) converges. We do so by understanding the optimization landscape of Problem~\ref{eq:opt_sym}. Central to our analysis, is the following notion of \textit{strict saddle property}.

\begin{definition}[Strict saddle point/property]\label{def:ss}
Let $f:\cU\to \R$ be a twice differentiable function and let $\U\in \cU$ be a critical point of  $f$. Then, $\U$ is a \textit{strict saddle point} of $f$~if the Hessian of $f$ at $\U$ has at least one  negative eigenvalue, i.e. $\lambda_{\min}(\nabla^2 f(\U)) < 0$. Furthermore, $f$ satisfies  \textit{strict saddle property} if all saddle points of $f$ are strict saddle.
\end{definition}
Strict saddle property ensures that for any critical point $\U$ that is not a local optimum, the Hessian has a significant negative eigenvalue which allows first order methods such as gradient descent (GD) and stochastic gradient descent (SGD) to escape saddle points and converge to a local minimum~\citep{lee2016gradient,ge2015escaping}. Following this idea, there has been a flurry of works on studying the landscape of different machine learning problems, including low rank matrix recovery~\citep{bhojanapalli2016global}, generalized phase retrieval problem~\citep{sun2016geometric}, matrix completion~\citep{ge2016matrix}, deep linear networks~\citep{kawaguchi2016deep}, matrix sensing and robust PCA~\citep{ge2017no} and tensor decomposition~\citep{ge2015escaping}, making a case for global optimality of first order methods.

For the special case of no regularization (i.e. $\lambda=0$; equivalently, no dropout), Problem~\ref{eq:opt_sym} reduces to standard squared loss minimization which has been shown to have no spurious local minima and satisfy strict saddle property (see, e.g. \citep{baldi1989neural, jin2017escape}). However, the  regularizer induced by dropout can potentially introduce new spurious local minima as well as degenerate saddle points. Our next result establishes that that is not the case, at least when the dropout rate is sufficiently small. 

\begin{theorem}\label{thm:main_geometry}
For regularization parameter $\lambda<\frac{r\lambda_r(\M)}{\sum_{i=1}^{r}\lambda_i(\M)-r\lambda_r(\M)}$, 
(a) all local minima of Problem~\ref{eq:opt_sym} are global, and (b) all saddle points 
are strict saddle points.
\end{theorem}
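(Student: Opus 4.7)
The plan is to establish (a) and (b) simultaneously by showing that every critical point of $F(\U) := \ell(\U,\U) + \lambda\sum_i \|\u_i\|^4$ that is not a global minimum is a strict saddle. I split the critical points into non-equalized and equalized and treat each separately.

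For a non-equalized critical point, the rotation construction that proves Lemma~\ref{lem:local} supplies, through a planar rotation $R(t)$ acting on two columns of unequal norm, a smooth curve along which $F(\U R(t))<F(\U)$ for all small $t>0$. Because $\nabla F(\U)=0$, the linear term of $t\mapsto F(\U R(t))$ vanishes, so the strict decrease forces $\nabla^2 F(\U)[\U R'(0),\U R'(0)]<0$; hence every non-equalized critical point is a strict saddle.

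For an equalized critical point, stationarity reduces to $(\M-\U\U^\top-\alpha\I)\U = 0$ with $\alpha = \lambda \|\u_1\|^2$. Writing $\U = P\Sigma Q^\top$, each left singular vector with $\sigma_i>0$ must be an eigenvector $p_{j_i}$ of $\M$ with $\sigma_i^2 = \lambda_{j_i}(\M) - \alpha$; by Theorem~\ref{thm:sym_global}, the configuration with $k=r$ nonzero singular values and $\{j_i\}=\{1,\ldots,r\}$ is a global optimum. At any other equalized critical point, some index $\ell\le r$ is absent from $\{j_i\}$, and the canonical descent direction is $\Delta = p_\ell \v^\top$ for a vector $\v\in\R^r$ to be chosen. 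The orthogonality $\U^\top p_\ell = 0$ makes $\u_i^\top\delta_i=0$ for every $i$, so the $8\lambda\sum_i(\u_i^\top\delta_i)^2$ contribution to the Hessian vanishes, and a direct computation yields
\begin{equation*}
\nabla^2 F(\U)[\Delta,\Delta] = 4\|\U\v\|^2 + 4(\alpha - \lambda_\ell(\M))\|\v\|^2.
\end{equation*}
If the subset $\{j_i\}$ is not the top $k$ (in particular whenever $k=r$ but the subset is wrong), I pick $\v = q_{i^\star}$, the right singular vector of $\U$ associated with an index $i^\star$ with $j_{i^\star}>k$, so that $\|\U\v\|^2 = \lambda_{j_{i^\star}}(\M)-\alpha$ and the Hessian reduces to $4(\lambda_{j_{i^\star}}(\M)-\lambda_\ell(\M))$, which is negative because $\lambda_\ell(\M)\geq\lambda_k(\M)\geq\lambda_{k+1}(\M)\geq\lambda_{j_{i^\star}}(\M)$ (with equality only on a degenerate eigenspace where the critical point already coincides with a global optimum). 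If instead $\{j_i\}$ is the top $k$ for some $k<r$, I take $\v$ in the null space of $\U$; the Hessian reduces to $4(\alpha - \lambda_\ell(\M))\|\v\|^2$, which is negative precisely when $\lambda_\ell(\M) > \alpha$.

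The principal obstacle---and the only place where the regularization bound is required---is establishing $\lambda_\ell(\M)>\alpha$ at every rank-deficient top-$k$ equalized critical point. The hypothesis rearranges to $r\lambda_r(\M)/\lambda > \sum_{i=1}^{r-1}(\lambda_i(\M)-\lambda_r(\M))$. At such a critical point, $\alpha_k := \lambda\sum_{i=1}^k\lambda_i(\M)/(r+k\lambda)$ and the required $\lambda_{k+1}(\M)>\alpha_k$ becomes $r\lambda_{k+1}(\M)/\lambda > \sum_{i=1}^k(\lambda_i(\M)-\lambda_{k+1}(\M))$, which I would verify via the chain
\begin{equation*}
\tfrac{r\lambda_{k+1}(\M)}{\lambda} \;\geq\; \tfrac{r\lambda_r(\M)}{\lambda} \;>\; \sum_{i=1}^{r-1}(\lambda_i(\M)-\lambda_r(\M)) \;\geq\; \sum_{i=1}^{k}(\lambda_i(\M)-\lambda_{k+1}(\M)),
\end{equation*}
where the first step uses $\lambda_{k+1}(\M)\geq\lambda_r(\M)$, the middle is the hypothesis, and the last reduces to the non-negative identity $\sum_{i=k+1}^{r-1}(\lambda_i(\M)-\lambda_r(\M))+k(\lambda_{k+1}(\M)-\lambda_r(\M))\geq 0$.
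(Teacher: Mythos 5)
Your proof is correct and follows the same overall architecture as the paper's (which routes Theorem~\ref{thm:main_geometry} through Lemma~\ref{lem:minima_eqz} for non-equalized critical points and Lemma~\ref{lem:no_spurious} for equalized ones, with the latter split by whether the support $\cE$ of eigenvectors equals $[r']$), but your execution of the equalized case is cleaner and slightly different. Where the paper works with two distinct parametrized curves---a rank-$(r-r')$ curve $\U(t)=\W_r\Sigma'\V_r^\top$ with $\sigma'_i=\sqrt{\sigma_i^2+t^2}$ for Case $\cE=[r']$, $r'<\rho$, and a planar rotation $\w_j(t)=\sqrt{1-t^2}\w_j+t\w'$ for Case $\cE\neq[r']$---you derive a single unified Hessian identity $\nabla^2 F(\U)[p_\ell\v^\top,p_\ell\v^\top]=4\|\U\v\|^2+4(\alpha-\lambda_\ell(\M))\|\v\|^2$ that covers both cases with one rank-one direction by specializing $\v$ (a surviving right singular vector to swap a wrongly-placed eigenvector, or a null vector of $\U$ to grow a missing top direction). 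Your derived sufficient condition $\lambda_{k+1}(\M)>\alpha_k$ in the top-$k$ deficient case differs algebraically from the paper's $-4\sum_{i=1}^r\lambda_i+4(1+\lambda)\|\U\|_F^2<0$, but both are implied by the hypothesis, and your monotone chain (reduce to $k=r-1$) closes it in the same spirit as the paper's observation that $h(r')$ decreases in $r'$. You also correctly observe the $\lambda$ bound is only needed in the top-$k$ case, matching the paper. Two small caveats: (i) in the non-equalized part, strict decrease with a vanishing linear term does not by itself force a negative second derivative (the cubic could carry it); the paper's Lemma~\ref{lem:minima_eqz} actually computes $g''(0)=-4(\|\u_1\|^2-\|\u_2\|^2)^2<0$ explicitly, so you should cite that calculation rather than assert the implication. (ii) Your dismissal of the degenerate-eigenvalue boundary ($\lambda_\ell=\lambda_{j_{i^\star}}$) is at the same informal level as the paper's Case~3, which also asserts strict inequality without addressing repeated eigenvalues; neither argument is fully airtight there, but you have not introduced a new gap.
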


A couple of remarks are in order. First, Theorem~\ref{thm:main_geometry} guarantees that any critical point $\U$ that is not a global optimum is a strict saddle point, i.e. $\nabla^2 f(\U,\U)$ has a negative eigenvalue. This property allows first order methods, such as dropout given in Algorithm~\ref{alg:dropout}, to escape such saddle points.
Second, note that the guarantees in Theorem~\ref{thm:main_geometry} hold when the regularization parameter $\lambda$ is sufficiently small. Assumptions of this kind are common in the literature (see, for example \citep{ge2017no}). While this is a {\textit{sufficient}} condition for the result in Theorem~\ref{thm:main_geometry}, it is not clear if it is {\textit{necessary}}.

\begin{figure*}[t]
\centering
\begin{tabular}{ccc}
$\lambda = 0.1$  & $\lambda = 0.5$  & $\lambda = 1$  \\ 
\hspace*{-22pt} 
\includegraphics[width=0.36\textwidth]{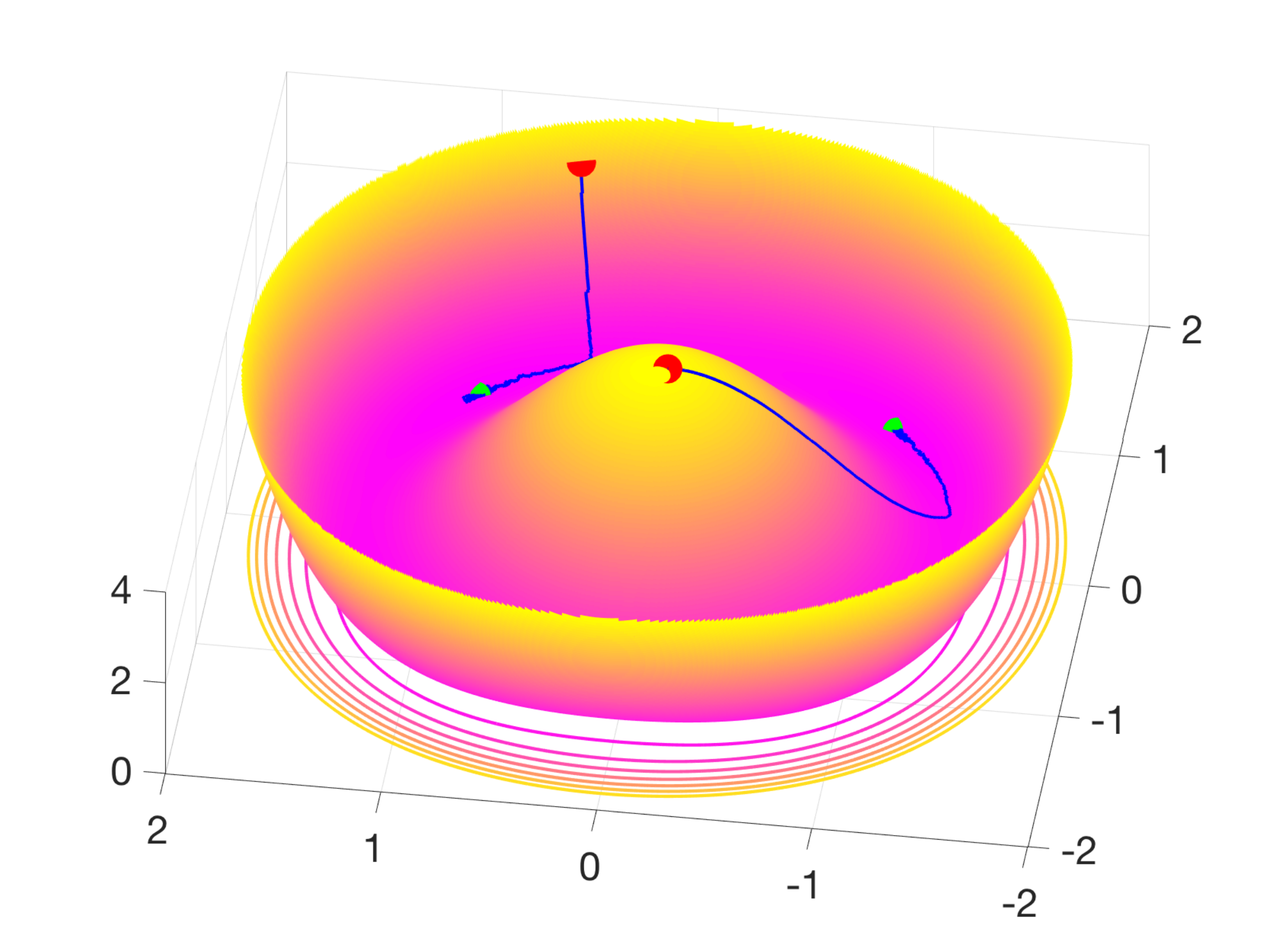}
&
\hspace*{-34pt} 
\includegraphics[width=0.36\textwidth]{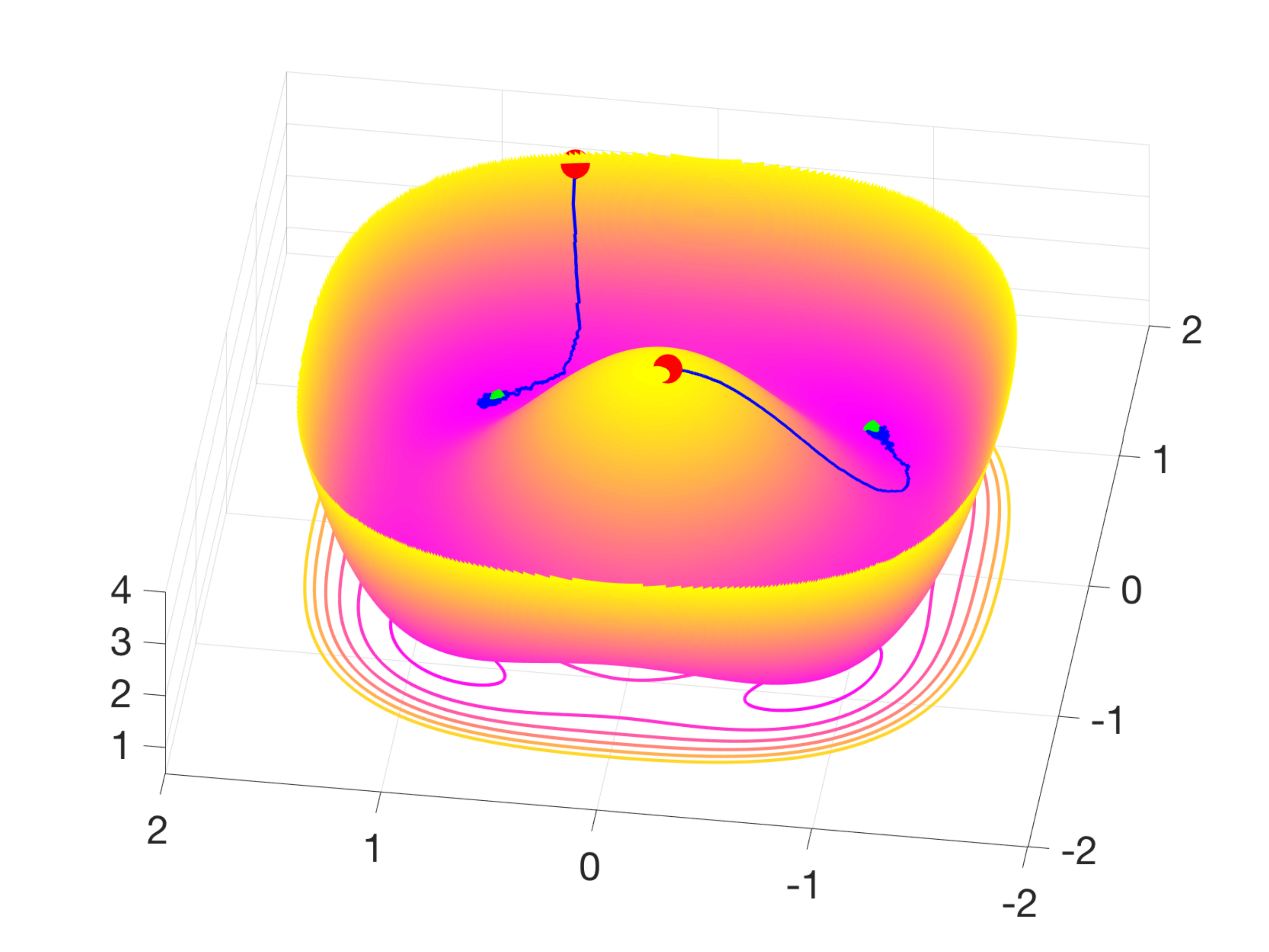}
&
\hspace*{-34pt} 
\includegraphics[width=0.36\textwidth]{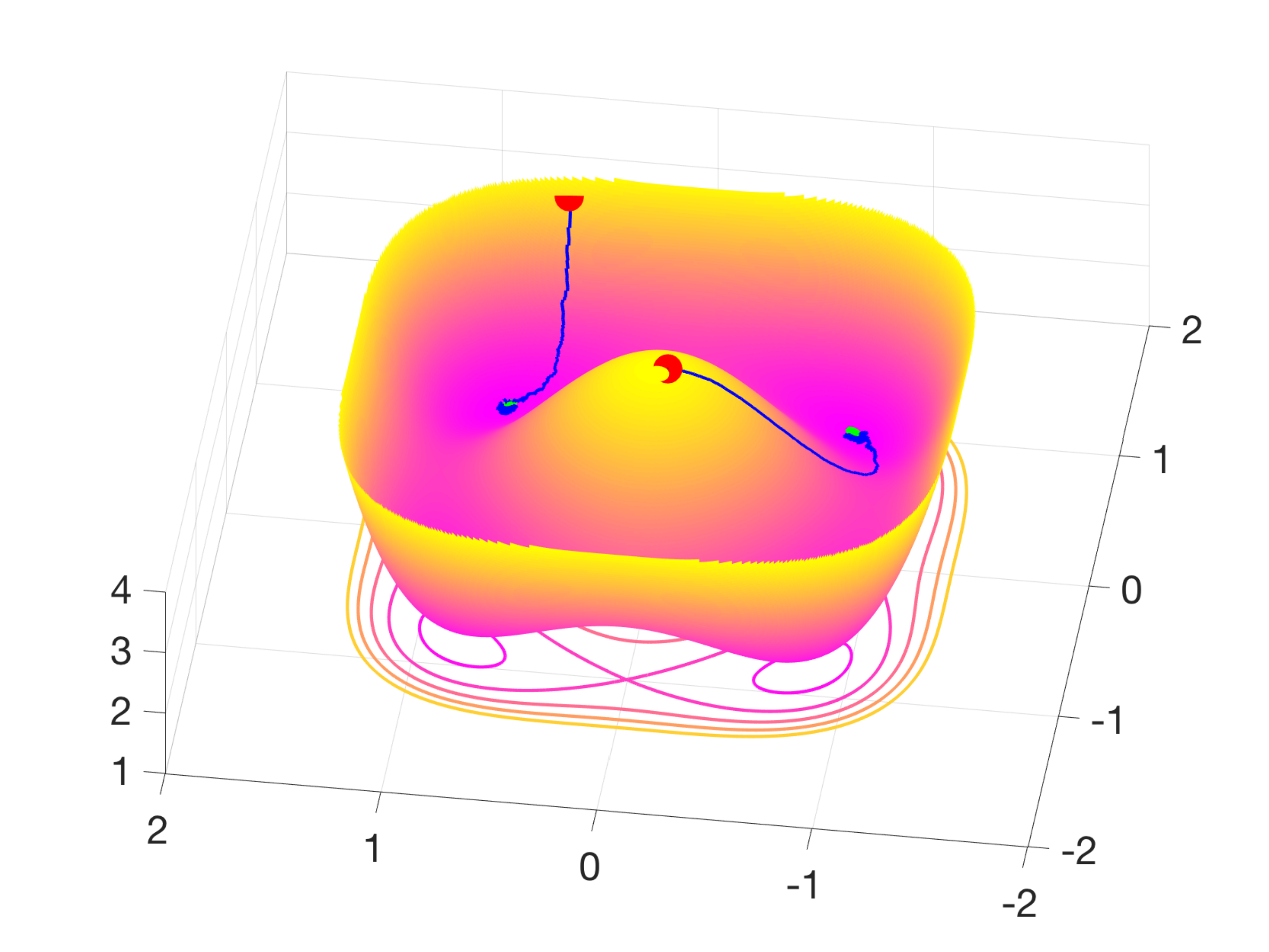}
\end{tabular}
\caption{\label{fig:conv_1d}{Convergence of dropout (Algorithm~\ref{alg:dropout}) from two different initialization (marked in red circles) to a global optimum of Problem~\ref{eq:opt_sym} (marked in green circles), for the simple case of scalar $\M$ (one dimensional input and output) and $r=2$. It can be seen that dropout quickly converges to a global optimum, which is equalized (i.e. weights are parallel to $(\pm1,\pm1)$) regardless of the value of the regularization parameter, $\lambda = 0.1$ (left), $\lambda =  0.5$ (middle) and $\lambda = 1.0$ (right).}}
\end{figure*}

\section{Matrix Factorization with Dropout}
\label{sec:factorization}
\begin{algorithm}[b]
\caption{\label{alg:polytime}Polynomial time solver for Problem~\ref{eq:matrix_dropout_sym}}
\begin{algorithmic}[1]
\INPUT Matrix $\M\in \R^{d_2\times d_1}$ to be factorized, size of factorization $r$, regularization parameter $\lambda$
\STATE $\rho\gets\max\{ j \in [r]: \ \lambda_j(\M) > \frac{\lambda j \kappa_j}{r+\lambda j} \}$, \\ where $\kappa_j=\frac{1}{j}\sum_{i=1}^{j}{\lambda_i(\M)}$ for $j\in[r]$.
\STATE $\bar\M \gets \cS_{\frac{\lambda \rho \kappa_\rho}{r+\lambda \rho}}(\M)$
\STATE $(\U,\Sigma,\V)\gets \texttt{svd}(\bar\M)$
\STATE $\tilde\U \gets \U\Sigma^{\frac12}$, $\tilde\V \gets \V\Sigma^{\frac12}$
\STATE $\Rr\gets \texttt{EQZ}(\tilde\U)$ \COMMENT{Algorithm~\ref{alg:equalizer}}
\STATE $\bar\U\gets \tilde\U\Rr$, $\bar\V\gets \tilde\V\Rr$ 
\OUTPUT $\bar\U,\bar\V$ \COMMENT{global optimum of Problem~\ref{eq:matrix_dropout_sym}}
\end{algorithmic}
\end{algorithm}

The optimization problem associated with learning a shallow network, i.e. Problem~\ref{eq:opt_asym}, is closely related to the optimization problem for matrix factorization. Recall that in matrix factorization,  given a matrix $\M \in \R^{d_1 \times d_2}$, one seeks to find factors $\U, \V$ that minimize $\ell(\U,\V)=\|\M-\U\V^\top\|_F^2$.
Matrix factorization has recently been studied with dropout by~\citet{zhai2015dropout,he2016dropout} and \citet{Cavazza2017analysis} where at each iteration of gradient descent on the loss function, the columns of factors $\U, \V$ are dropped independently and with equal probability. Following~\citet{Cavazza2017analysis}, we can write the resulting problem as 
\begin{align}\label{eq:matrix_dropout_sym}
\minim{\U\in\R^{d_1\times r},\V\in\R^{d_2\times r}}{ \| \M - \U\V^\top \|_F^2 + \lambda \sum_{i=1}^{r}{\| \u_i \|^2 \| \v_i \|^2}}, 
\end{align}
which is identical to Problem~\ref{eq:opt_asym}. However, there are two key distinctions. First, we are interested in stochastic optimization problem whereas the matrix factorization problem is typically posed for a given matrix. Second, for the learning problem that we consider here, it is unreasonable to assume access to the true model (i.e. matrix $\M$). Nonetheless, many of the insights we develop here as well as the technical results and algorithmic contributions apply to matrix factorization. Therefore, the goal in this section is to bring to bear the results in Sections~\ref{sec:sym}, \ref{sec:asym} and \ref{sec:landscape} to matrix factorization. 

We note that Theorem~\ref{thm:asym_global} and Theorem~\ref{thm:asym_equalization}, both of which hold for matrix factorization, imply that there is a polynomial time algorithm to solve the matrix factorization problem. In order to find a global optimum of Problem~\ref{eq:matrix_dropout_sym},  we first compute the optimal $\bar\M = \tilde\U \tilde\V^\top$ using  shrinkage-thresholding operation (see Theorem~\ref{thm:asym_global}). A global optimum $(\bar\U, \bar\V)$ is then obtained by joint equalization of $(\tilde\U,\tilde\V)$ (see Theorem~\ref{thm:asym_equalization})  using Algorithm~\ref{alg:equalizer}. The whole procedure is described in Algorithm~\ref{alg:polytime}. Few remarks are in order. 
 
\begin{remark}[Computational cost of Algorithm~\ref{alg:polytime}]
It is easy to check that computing $\rho,\bar\M,\tilde\U$ and $\tilde\V$ requires computing a rank-$r$ SVD of $\M$, which costs $O(d^2r)$, where $d=\max\{ d_1,d_2 \}$. Algorithm~\ref{alg:equalizer} entails computing $\G_\U = \U^\top \U$, which costs $O(r^2d)$ and the cost of each iterate of Algorithm~\ref{alg:equalizer} is dominated by computing the eigendecomposition which is $O(r^3)$. Overall, the computational cost of Algorithm~\ref{alg:polytime} is $O(d^2 r+ dr^2+r^4)$.
\end{remark}

\begin{remark}[Universal Equalizer]\label{remark:universal}
While Algorithm~\ref{alg:equalizer} is efficient (only linear in the dimension) for any rank $r$, there is a more effective equalization procedure when $r$ is a power of $2$. In this case, we can give a universal equalizer which works simultaneously for all matrices in $\R^{d\times r}$. Let $\U\in \R^{d\times r}$, $r=2^k$, $k \in \N$ and let $\U=\W \Sigma \V^\top$ be its full SVD. The matrix $\tilde\U = \U\Rr$ is equalized, where $\Rr=\V\Zz_k$ and
\begin{equation*}
\Zz_k := \begin{cases} 
      1 & k = 1 \\
      2^{\frac{-k+1}{2}}\begin{bmatrix} &\Zz_{k-1} &\Zz_{k-1} \\ &-\Zz_{k-1} &\Zz_{k-1} \end{bmatrix} & k>1 
   \end{cases}.
   \end{equation*}
\end{remark}

Finally, we note that Problem~\ref{eq:matrix_dropout_sym} is an instance of regularized matrix factorization which has recently received considerable attention in the machine learning  literature~\citep{ge2016matrix,ge2017no,haeffele2017structured}. These works show that the saddle points of a class of regularized matrix factorization problems have certain ``nice'' properties (i.e. escape directions characterized by negative curvature around saddle points) which allow variants of first-order methods such as perturbed gradient descent~\citep{ge2015escaping,jin2017escape} to converge to a local optimum. Distinct from that line of research, we completely characterize the set of 
global optima of Problem~\ref{eq:matrix_dropout_sym}, and provide a polynomial time algorithm to find a global optimum.

The work most similar to the matrix factorization problem we consider in this section is that of  \citet{Cavazza2017analysis}, with respect to which we make several important contributions: (I) \citet{Cavazza2017analysis} characterize optimal solutions only in terms of the product of the factors, and not in terms of the factors themselves, whereas we provide globally optimal solutions in terms of the factors; (II) \citet{Cavazza2017analysis} require the rank $r$ of the desired factorization to be variable and above some threshold, whereas we consider fixed rank-$r$ factorization for any $r$; (III)  \citet{Cavazza2017analysis} can only find low rank solutions using an adaptive dropout rate, which is not how dropout is used in practice, whereas we consider any fixed dropout rate; and (IV) we give an efficient poly time algorithm to find optimal factors. 

\begin{figure*}[t]
\centering
\begin{tabular}{cccc}
$\lambda = 1$  & $\lambda = 0.5$  & $\lambda=0.1$  &  equalization  \\ %
\hspace*{-12pt} 
\includegraphics[width=0.27\textwidth]{./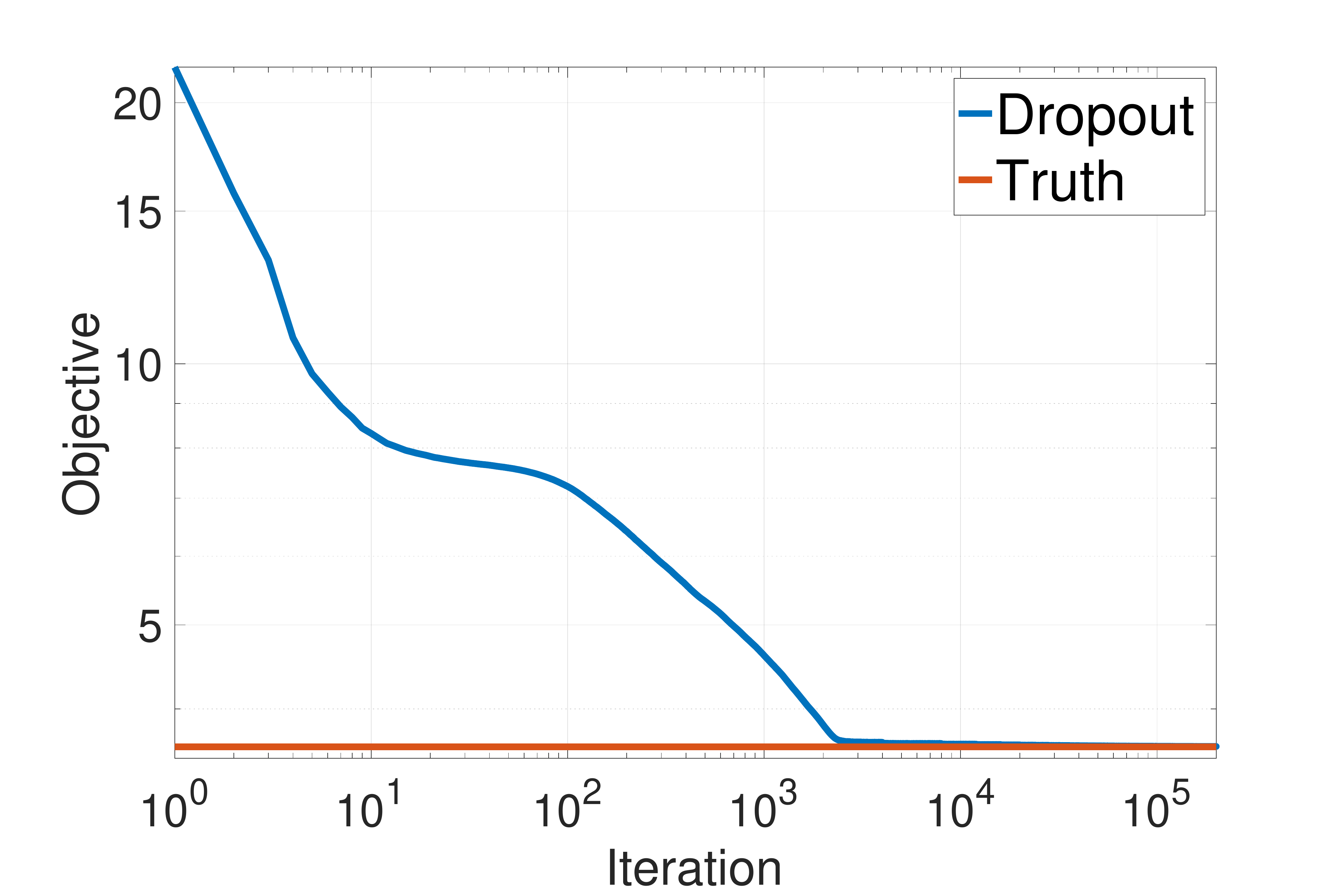}
&
\hspace*{-25pt} 
\includegraphics[width=0.27\textwidth]{./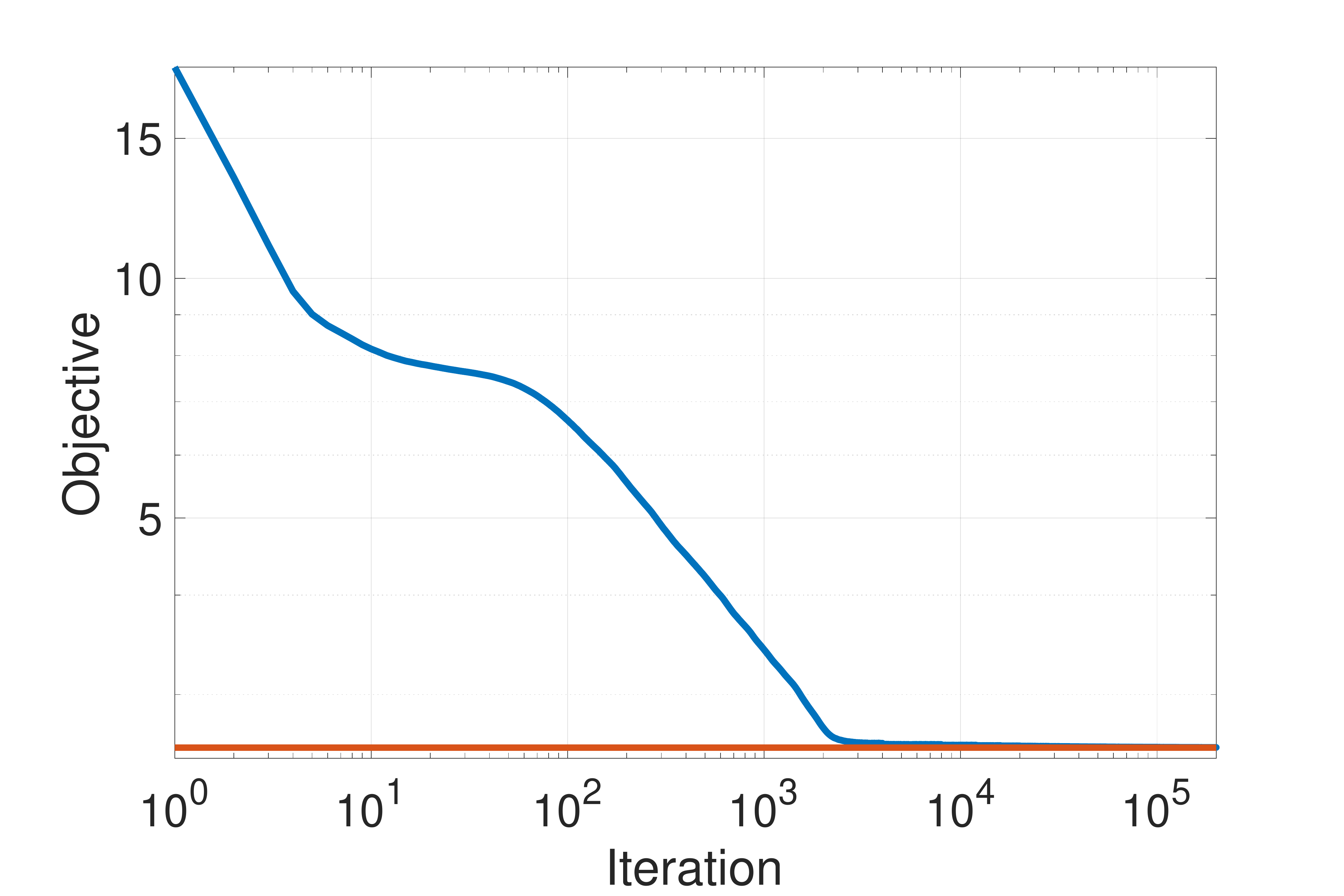}
&
\hspace*{-25pt} 
\includegraphics[width=0.27\textwidth]{./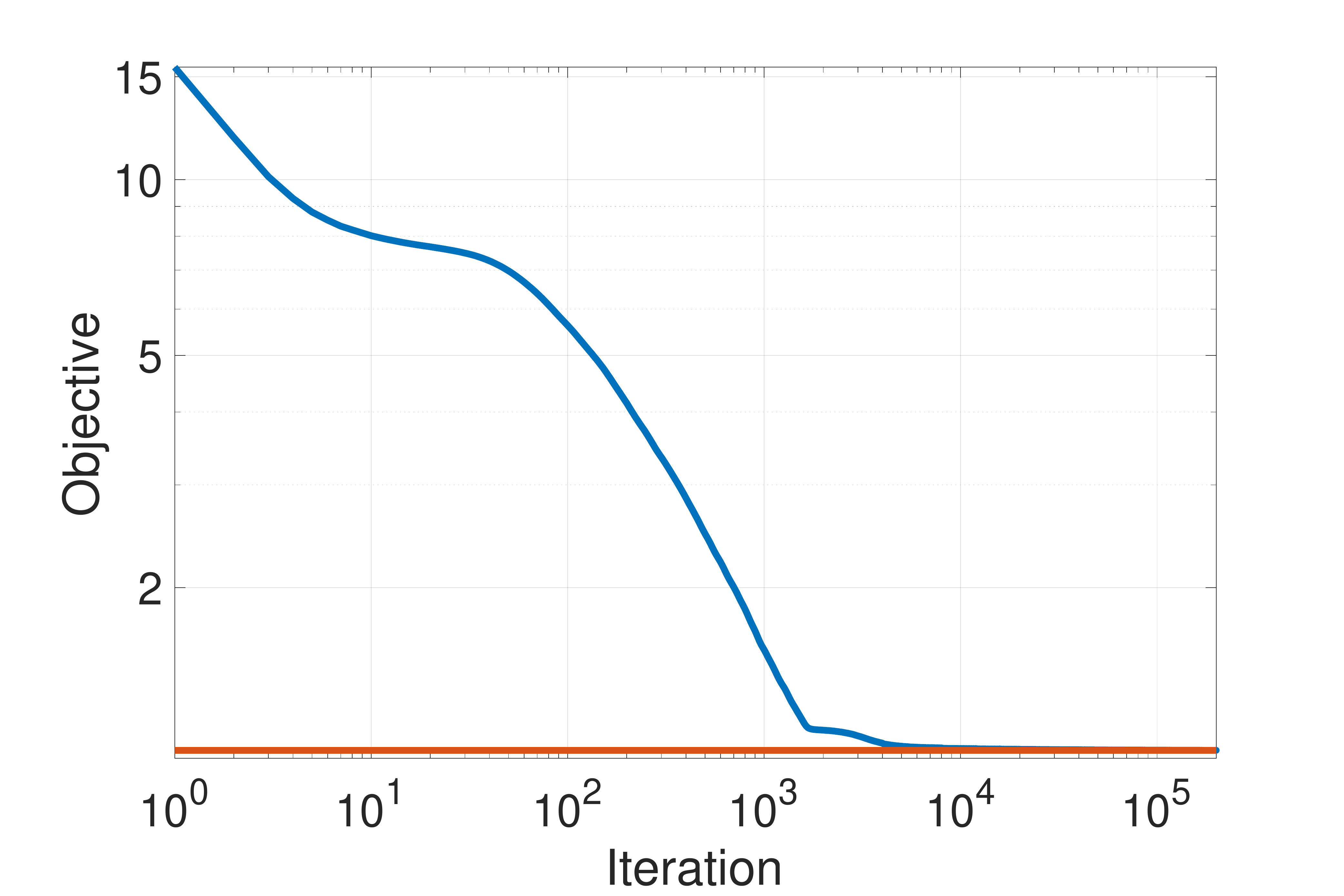}
&
\hspace*{-25pt} 
\includegraphics[width=0.27\textwidth]{./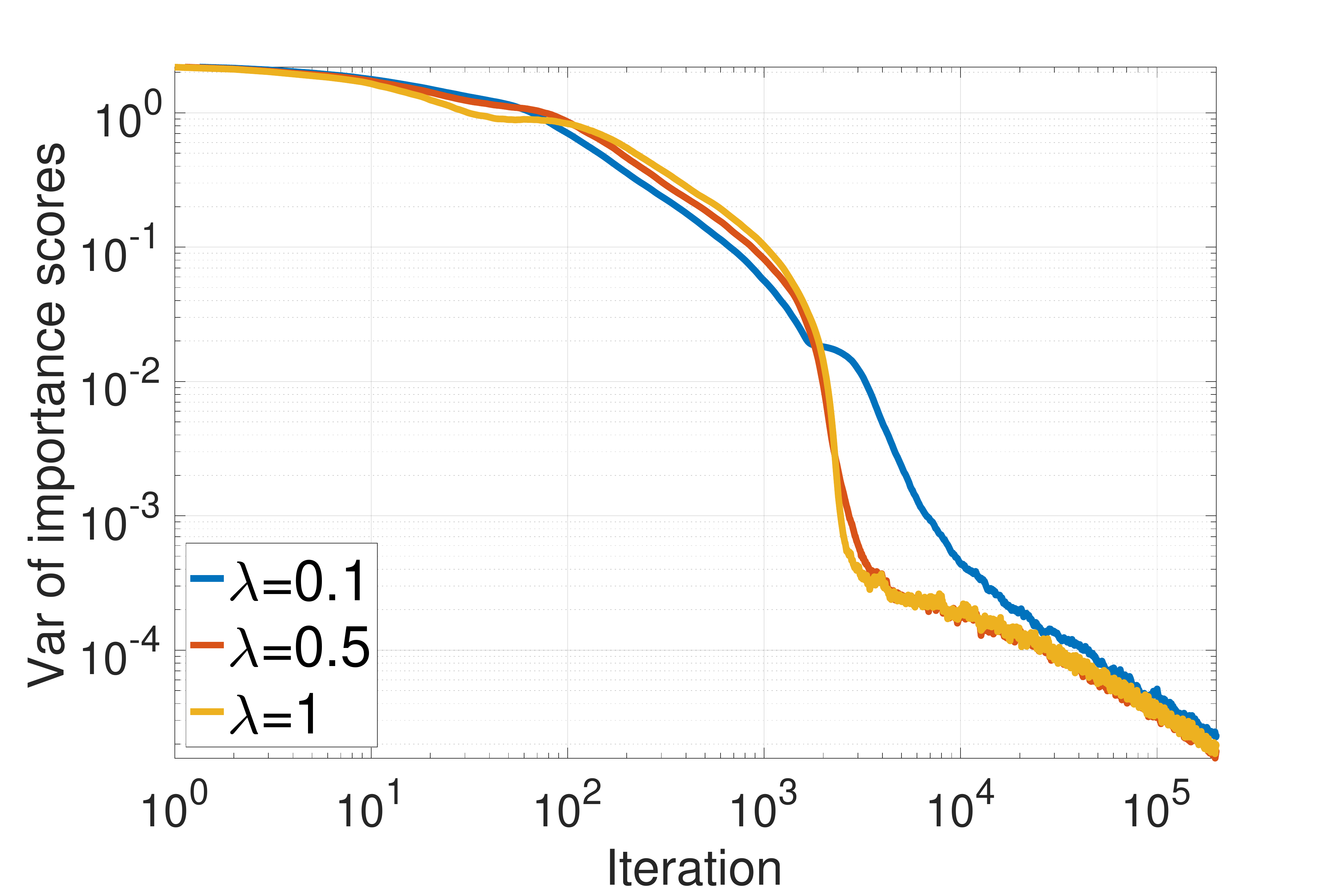}
\\
\hspace*{-12pt} 
\includegraphics[width=0.27\textwidth]{./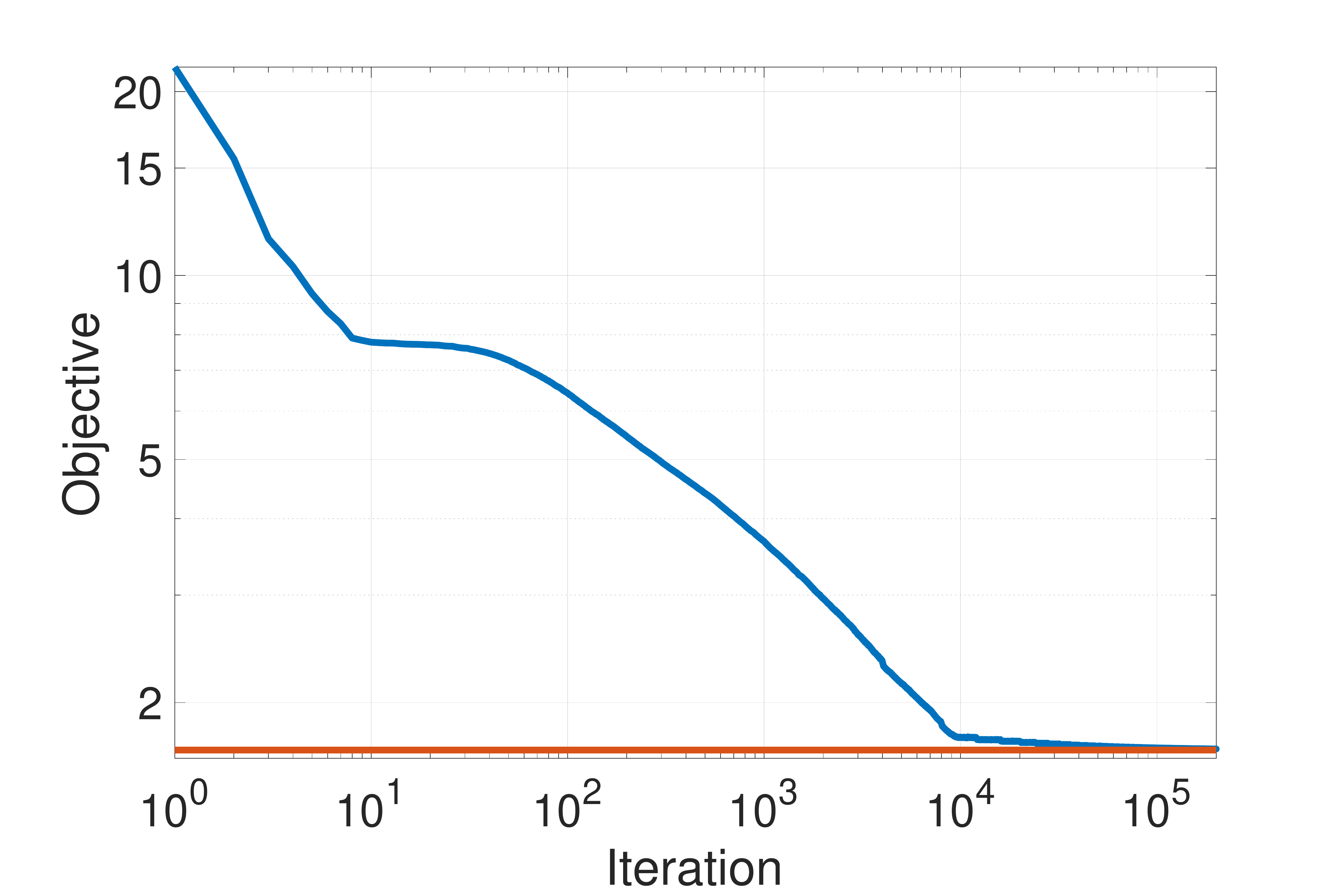}
&
\hspace*{-25pt} 
\includegraphics[width=0.26\textwidth]{./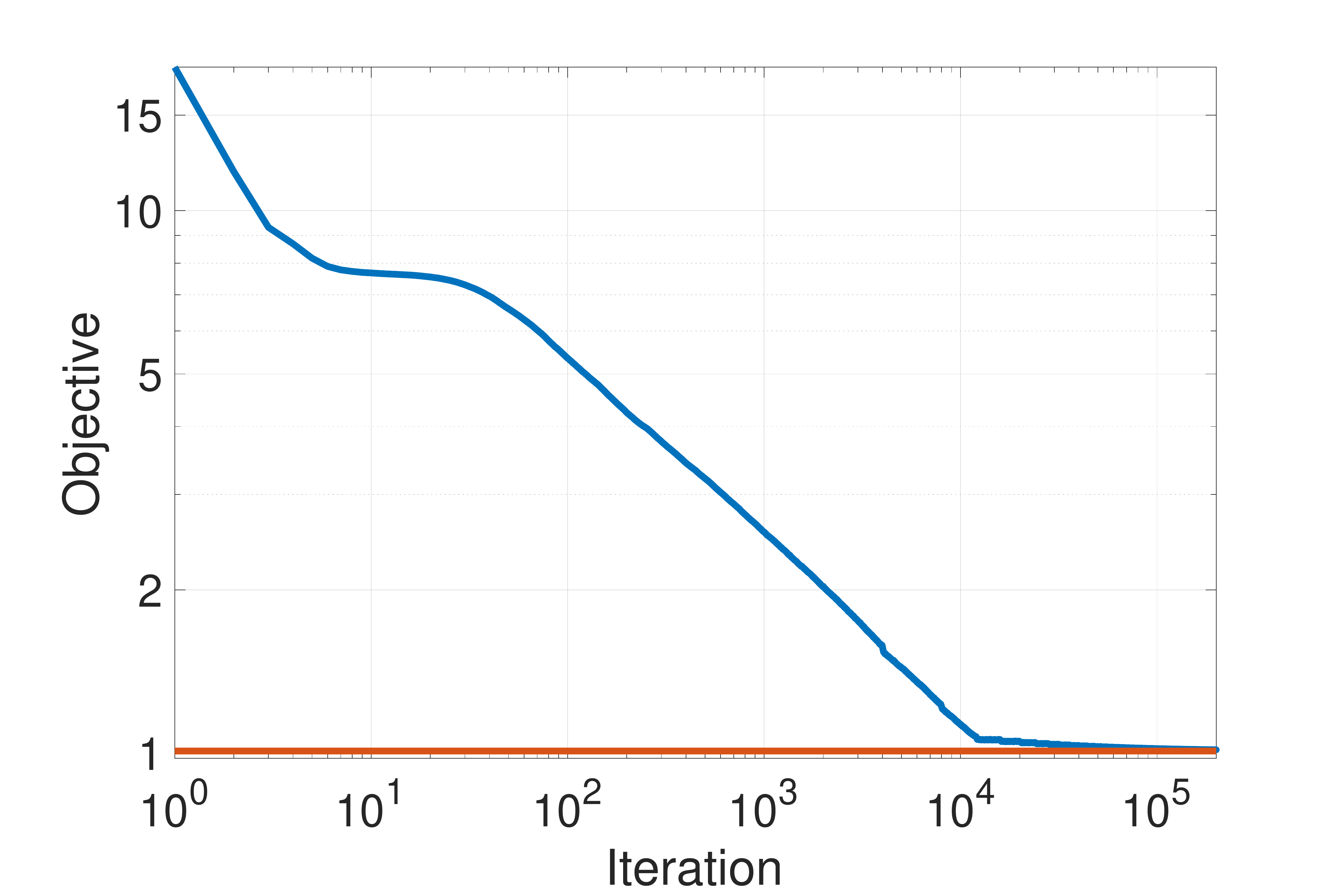}
&
\hspace*{-25pt} 
\includegraphics[width=0.26\textwidth]{./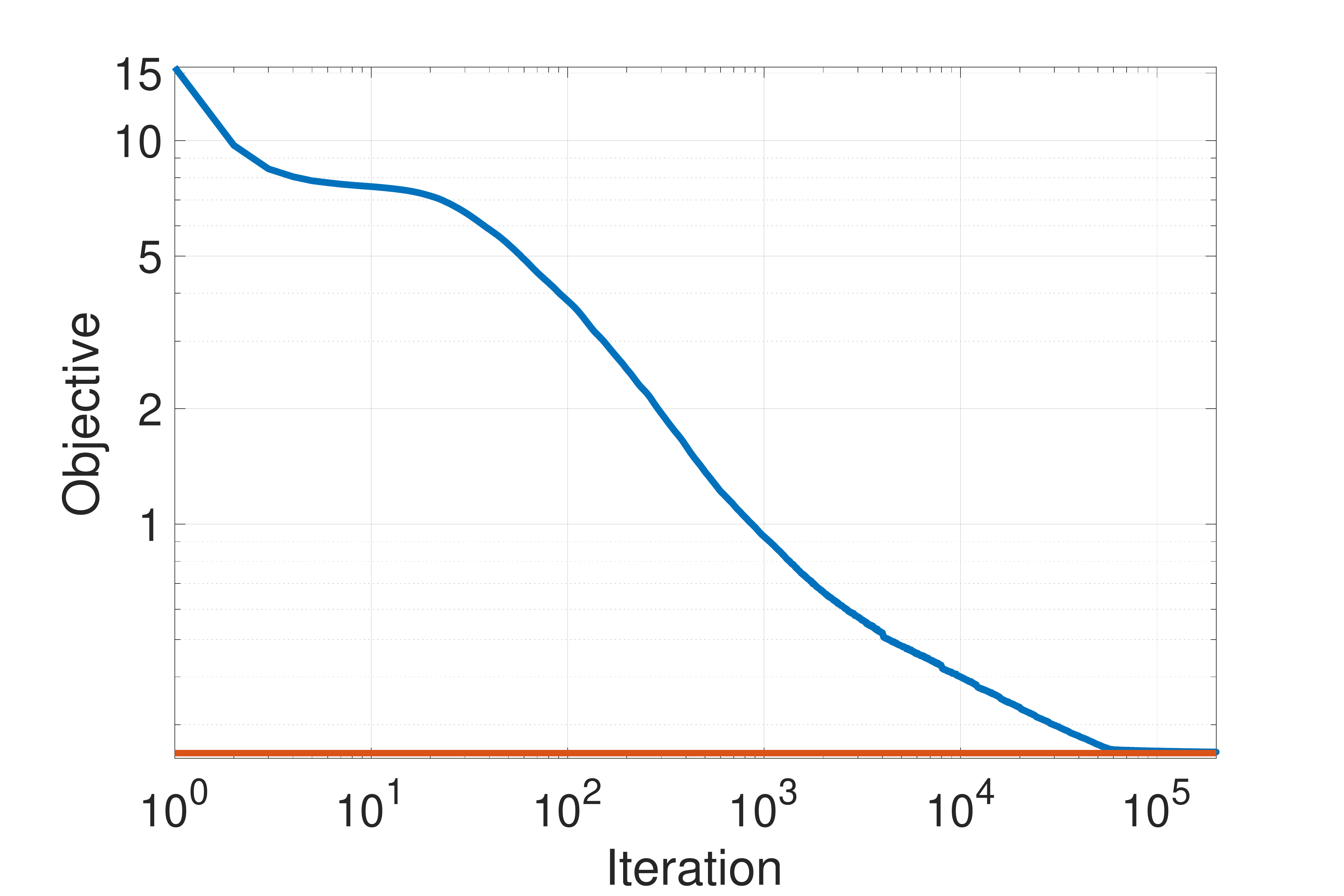}
&
\hspace*{-25pt} 
\includegraphics[width=0.26\textwidth]{./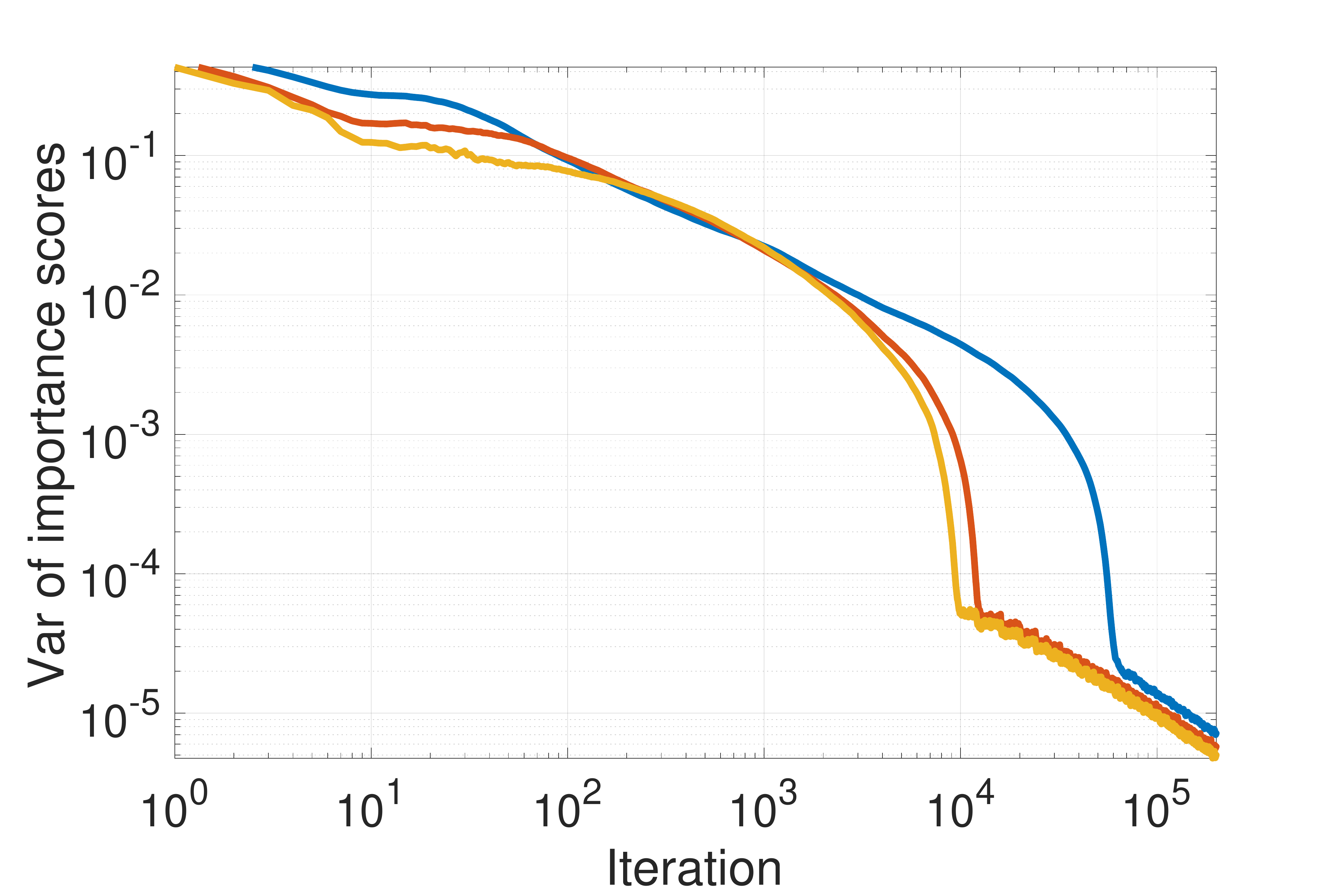}
\end{tabular}
\vspace*{-3pt}
\caption{\label{fig:conv_syn}{Dropout converges to global optima for different values of $\lambda \in \{ 0.1,0.5,1 \}$ and different widths of the hidden layer $r=20$ (top) and $r=80$ (bottom). The right column shows the variance of the product of column-wise norms for each of the weight matrices. As can be seen, the weight matrices become equalized very quickly since variance goes to zero.}}
\end{figure*}

\section{Empirical Results}\label{sec:experiments}

Dropout is a popular algorithmic technique used for avoiding overfitting when training large deep neural networks. The goal of this section is not to attest to the already well-established success of dropout. Instead, the purpose of this section is to simply confirm the theoretical results we showed in the previous section, as a proof of concept. 

We begin with a toy example in order to visually illustrate the optimization landscape. We use dropout to learn a simple linear auto-encoder with one-dimensional input and output (i.e. a network represented by a scalar $\M=2$) and a single hidden layer of width $r=2$. The input features are sampled for a standard normal distribution. Figure~\ref{fig:conv_1d} shows the optimization landscape along with the contours of the level sets, and a trace of iterates of dropout (Algorithm~\ref{alg:dropout}). The initial iterates and global optima (given by Theorem~\ref{thm:sym_global}) are shown by red and  green dots, respectively. Since at any global optimum the weights are equalized, the 
optimal weight vector in this case  is parallel to the vector $(\pm 1, \pm 1)$. We see that dropout converges to a global minimum. 

For a second illustrative experiment, we use  Algorithm~\ref{alg:dropout} to train a shallow linear network, where the input $\x\in\R^{80}$ is distributed according to the standard Normal distribution. The output $\y\in \R^{120}$ is generated as $\y=\M\x$, where  $\M\in\R^{120 \times 80}$ is drawn randomly by uniformly sampling the right and left singular subspaces and with a spectrum decaying  exponentially. Figure~\ref{fig:conv_syn} illustrates the behavior of Algorithm~\ref{alg:dropout} for different values of the regularization parameter ($\lambda \in \{ 0.1,0.5,1 \}$), and for different sizes of factors ($r\in \{ 20,80\}$). The curve in blue shows the objective value for the iterates of dropout, and the line in red shows the optimal value of the objective (i.e. objective for a global optimum found using  Theorem~\ref{thm:asym_global}). All plots are averaged over 50 runs of Algorithm~\ref{alg:dropout} (averaged over different random initializations, random realizations of Bernoulli dropout, as well as random draws of training examples).

To verify that the solution found by dropout actually has equalized factors, we consider the following measure. At each iteration, we compute the ``importance scores'', $\alpha_t^{(i)}=\|{\u_t}_i\| \|{\v_t}_i\|, \ i \in [r]$, where ${\u_t}_i$ and ${\v_t}_i$ are the $i$-th columns of $\U_t$ and $\V_t$, respectively. The rightmost panel of Figure~\ref{fig:conv_syn} shows the variance of $\alpha_t^{(i)}$'s,  over the hidden nodes $i \in [r]$, at each iterate $t$. Note that a high variance in $\alpha_t$ corresponds to large variation in the values of $\|{\u_t}_i\| \|{\v_t}_i\|$. When the variance is equal to zero, all importance scores are equal, thus the factors are equalized. We see that iterations of Algorithm~\ref{alg:dropout} decrease this measure monotonically, and the larger the value of $\lambda$, the faster the weights become equalized.

\section{Discussion}
\vspace*{4pt}

There has been much effort in recent years to understand the theoretical underpinnings of dropout~(see~\citet{ baldi2013understanding, gal2016dropout, wager2013dropout, helmbold2015inductive}). In this paper, we study the implicit bias of dropout in shallow linear networks. We show that dropout prefers solutions with minimal path regularization which yield strong capacity control guarantees in deep learning. Despite being a non-convex optimization problem, we are able to fully characterize the global optima of the dropout objective. Our analysis shows that dropout favors low-rank weight matrices that are equalized. This theoretical finding confirms that dropout as a procedure uniformly allocates weights to different subnetworks, which is akin to preventing co-adaptation.
\vspace*{5pt}

We characterize the optimization landscape of learning autoencoders
with dropout. We first show that the local optima inherit the same implicit bias as global optimal, i.e. all local optima are equalized. Then, we show that for  sufficiently small dropout rates, there are no spurious local minima in the landscape, and all saddle points are non-degenerate. These properties suggest that dropout -- as an optimization procedure -- can efficiently converge to a globally optimal solution specified by our theorems.
\vspace*{5pt}

Understanding dropout in shallow linear networks is a prerequisite for understanding dropout in deep learning. We see natural extensions of our results in two directions: 1) shallow networks with non-linear activation function such as rectified linear units (ReLU) which have been shown to enable faster training~\cite{glorot2011deep} and are better understood in terms of the family of functions represented by ReLU-nets~\cite{arora2018understanding}, and 2) exploring the global optimality in deeper networks, even for linear activations.

\newpage

\section*{Acknowledgements}
\vspace*{-5pt}
\noindent This research was supported in part by NSF BIGDATA grant IIS-1546482 and NSF grant IIS-1618485. 
\bibliographystyle{plainnat}

\vspace*{-5pt}
\bibliography{references}
\clearpage

\clearpage
\appendix
\section{Auxiliary Lemmas}
In this section, we prove Lemma~\ref{lem:equiv} and a few auxiliary lemmas that we will need for the proofs of Theorem~\ref{thm:sym_global} and Theorem~\ref{thm:asym_global}.

\begin{lemma}\label{lem:equiv}
Let $\x \in \R^{d_2}$ be distributed according to distribution $\cD$ with $\E_{\x}[\x\x^\top]=\I$. Then, for $\ell(\U,\V):=\E_{\x}[\|\y-\U\V^\top\x\|^2]$ and 
$f(\U,\V):=\E_{b,\x}[\| \y - \frac1{\theta} \U \diag(\b) \V^\top\x \|^2],$
it holds that 
\begin{equation}
f(\U,\V)=\ell(\U,\V)+\lambda\sum_{i=1}^{r}{\|\u_i\|^2 \|\v_i\|^2}.
\end{equation}
Furthermore, $\ell(\U,\V)=\|\M-\U\V^\top\|_F^2$.
\end{lemma}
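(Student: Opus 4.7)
The plan is to expand the squared norm inside the expectation defining $f(\U,\V)$, apply the tower property by first integrating over the dropout mask $\b$ (which is independent of $\x$), and then over $\x$ using the assumption $\E[\x\x^\top]=\I$. Writing $A(\b):=\frac{1}{\theta}\U\diag(\b)\V^\top=\frac{1}{\theta}\sum_{i=1}^r b_i\u_i\v_i^\top$, the first useful observation is that $\E_\b[A(\b)]=\U\V^\top$, since $\E[b_i]=\theta$. Expanding $\|\y-A(\b)\x\|^2=\|\y\|^2-2\y^\top A(\b)\x+\x^\top A(\b)^\top A(\b)\x$ and taking $\E_\b$, the linear term reduces immediately to $-2\y^\top\U\V^\top\x$.

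The main computation, and the step that does most of the work, is evaluating $\E_\b[A(\b)^\top A(\b)]$. Using $\E[b_i b_j]=\theta^2$ for $i\neq j$ and $\E[b_i^2]=\theta$, a direct calculation gives
\begin{equation*}
\E_\b[A(\b)^\top A(\b)]=\V\U^\top\U\V^\top+\frac{1-\theta}{\theta}\sum_{i=1}^r\|\u_i\|^2\,\v_i\v_i^\top,
\end{equation*}
where the first term collects the $\theta^2$ contributions (restoring the full double sum) and the second is the diagonal correction from $\E[b_i^2]-\E[b_i]^2=\theta(1-\theta)$. This is the only non-routine identity in the proof; the rest is bookkeeping.

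Next, I would take $\E_\x$ of the above, using $\E[\x\x^\top]=\I$. The cross term combines with $\|\y\|^2$ and the quadratic $\E_\x[\x^\top\V\U^\top\U\V^\top\x]=\E_\x[\|\U\V^\top\x\|^2]$ to reassemble exactly $\ell(\U,\V)=\E_\x\|\y-\U\V^\top\x\|^2$. For the correction term, $\E_\x[\x^\top\v_i\v_i^\top\x]=\v_i^\top\E[\x\x^\top]\v_i=\|\v_i\|^2$, so the sum collapses to $\lambda\sum_i\|\u_i\|^2\|\v_i\|^2$ with $\lambda=\frac{1-\theta}{\theta}$, yielding the claimed decomposition of $f(\U,\V)$.

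Finally, for the ``furthermore'' part, substituting $\y=\M\x$ gives $\ell(\U,\V)=\E_\x[\x^\top(\M-\U\V^\top)^\top(\M-\U\V^\top)\x]=\tr((\M-\U\V^\top)^\top(\M-\U\V^\top)\,\E[\x\x^\top])=\|\M-\U\V^\top\|_F^2$, again by $\E[\x\x^\top]=\I$. The only real obstacle is not overlooking the off-diagonal versus diagonal split in $\E[b_i b_j]$, which is exactly what generates the dropout-induced regularizer.
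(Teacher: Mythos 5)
Your proposal is correct and follows essentially the same route as the paper's proof: expand the square, handle the linear term via $\E_\b[\diag(\b)]=\theta\I$, and split $\E[b_ib_j]$ into diagonal ($=\theta$) and off-diagonal ($=\theta^2$) contributions to isolate the $\lambda\sum_i\|\u_i\|^2\|\v_i\|^2$ correction. The only difference is presentational: you compute $\E_\b[A(\b)^\top A(\b)]$ in closed matrix form, whereas the paper does the equivalent coordinate-wise double-sum calculation over $i\in[d_2]$ and $j,k\in[r]$; both reduce to the same identity and both then apply $\E_\x[\x\x^\top]=\I$ to finish.
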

\begin{proof}[Proof of Lemma~\ref{lem:equiv}]
The proof closely follows~\cite{Cavazza2017analysis}. Recall that $\y=\M\x$, for some unknown $\M\in \R^{d_2\times d_1}$. Observe that
\begin{align}\label{eq:equiv1}
f(\U,\V)&=\E_\x[\|\y\|^2]+\frac1{\theta^2}\E_{\b,\x}[\|\U\diag(\b)\V^\top\x \|^2] \nonumber\\
&- \frac{2}{\theta}\E_\x[\langle \M\x,\E_\b[\U\diag(\b)\V^\top]\x \rangle]
\end{align}
where we used the fact that $\y=\M\x$. We have the following set of equalities for the second term on the right hand side of Equation~\eqref{eq:equiv1}:
\begin{align}\label{eq:equiv2}
&\E_{\b,\x}[\|\U\diag(\b)\V^\top\x \|^2]=\E_\x \sum_{i=1}^{d_2}{\E_\b\left(\sum_{j=1}^{r}{u_{ij}b_j\v_j^\top \x}\right)^2} \nonumber\\
&=\E_\x \sum_{i=1}^{d_2}{\E_\b[\sum_{j,k=1}^{r}{u_{ij}u_{ik}b_j b_k (\v_j^\top \x) (\v_k^\top\x)}]}\nonumber\\
&=\E_\x \sum_{i=1}^{d_2}{\sum_{j,k=1}^{r}{u_{ij}u_{ik}(\theta^2 \1_{j\neq k} + \theta \1_{j=k}) (\v_j^\top \x) (\v_k^\top\x)}}\nonumber\\
&=\theta^2\E_\x[\|\U\V^\top\x \|^2] + (\theta-\theta^2)\E_\x \sum_{i=1}^{d_2}{\sum_{j=1}^{r}{u_{ij}^2 (\v_j^\top \x)^2}}\nonumber\\
&=\theta^2\E_\x[\|\U\V^\top\x \|^2] + (\theta-\theta^2) \sum_{j=1}^{r}\|\v_j\|^2{\sum_{i=1}^{d_2}{u_{ij}^2}}\nonumber\\
&=\theta^2\E_\x[\|\U\V^\top\x \|^2] + (\theta-\theta^2) \sum_{j=1}^{r}\|\v_j\|^2\|\u_j\|^2,
\end{align}
where the second to last equality follows because $\E_\x[(\v_j^\top\x)^2]=\v_j^\top\E_\x[\x\x^\top]\v_j=\|\v_j\|^2$. For the third term in Equation~\eqref{eq:equiv1} we have:
\begin{equation}\label{eq:equiv3}
\langle \M\x,\E_\b[\U\diag(\b)\V^\top]\x\rangle = \theta \langle \M\x,\U\V^\top\x \rangle
\end{equation}
Plugging Equations~\eqref{eq:equiv2}~and~\eqref{eq:equiv3} into~\eqref{eq:equiv1}, we get 
\begin{align}\label{eq:equiv4}
f(\U,\V)&=\E_\x[\|\y\|^2]+\E_\x[\|\U\V^\top\x\|^2]- 2\E_\x\langle \M\x,\U\V^\top\x \rangle\nonumber\\
&+\frac{1-\theta}{\theta}\sum_{i=1}^{r}{\|\u_i\|^2\|\v_i\|^2}
\end{align}
It is easy to check that the first three terms in Equation~\eqref{eq:equiv4} sum to $\ell(\U,\V)$. Furthermore, since for any $\A\in \R^{d_2\times d_1}$ it holds that $\|\A\x\|^2=\|\A\|_F^2$, we should have $\ell(\U,\V)=\| \M-\U\V^\top\|_F^2$.
\end{proof}

\begin{lemma}\label{lem:inverse}
For any pair of integers $\rho$ and $r$, and for any $\lambda \in \R_+$, it holds that $$(\I_\rho + \frac{\lambda}{r}\1\1^\top)^{-1}=\I_\rho - \frac{\lambda}{r+\lambda\rho}\1\1^\top.$$
\end{lemma}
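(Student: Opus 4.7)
The claim is simply a rank-one inverse identity, so the cleanest approach is direct verification by multiplication rather than invoking machinery. I would also mention that this is a special case of the Sherman--Morrison formula, but prove it from scratch for self-containedness.

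The plan is as follows. Let $\A := \I_\rho + \tfrac{\lambda}{r}\1\1^\top$ and $\b := \I_\rho - \tfrac{\lambda}{r+\lambda\rho}\1\1^\top$. Since $\A$ is symmetric, it suffices to show $\A\b = \I_\rho$. Expanding the product gives
\begin{equation*}
\A\b = \I_\rho + \left(\frac{\lambda}{r} - \frac{\lambda}{r+\lambda\rho}\right)\1\1^\top - \frac{\lambda^2}{r(r+\lambda\rho)}\,\1\1^\top\1\1^\top.
\end{equation*}
The key identity to use here is that $\1 \in \R^\rho$, so $\1^\top\1 = \rho$ and therefore $\1\1^\top\1\1^\top = \rho \cdot \1\1^\top$. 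Substituting this collapses everything into a single scalar multiple of $\1\1^\top$, namely
\begin{equation*}
\A\b = \I_\rho + \left(\frac{\lambda}{r} - \frac{\lambda}{r+\lambda\rho} - \frac{\lambda^2\rho}{r(r+\lambda\rho)}\right)\1\1^\top.
\end{equation*}
The remaining step is purely arithmetic: combine the three fractions over the common denominator $r(r+\lambda\rho)$, and observe that the numerator becomes $\lambda(r+\lambda\rho) - \lambda r - \lambda^2 \rho = 0$, which makes the parenthesized coefficient vanish and yields $\A\b = \I_\rho$.

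There is no real obstacle here; the only thing to be careful about is the dimension convention $\1 \in \R^\rho$ (which is what makes $\1^\top\1 = \rho$ and thereby explains the appearance of $r + \lambda\rho$ in the denominator of the claimed inverse). If one wants to avoid the guess-and-verify style, an equivalent route is to apply Sherman--Morrison to $\A = \I_\rho + uv^\top$ with $u = v = \sqrt{\lambda/r}\,\1$; the same scalar $1 + v^\top \A^{-1} u|_{\A=\I} = 1 + \lambda\rho/r$ appears, giving the same inverse after simplification. Either path is short enough to be written inline in a single display if desired.
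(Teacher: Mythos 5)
Your proof is correct and takes essentially the same approach as the paper: direct verification by multiplication, using $\1\1^\top\1\1^\top=\rho\,\1\1^\top$ and showing the scalar coefficient of $\1\1^\top$ vanishes. The paper likewise notes this is an instance of the Woodbury identity (of which Sherman--Morrison is the rank-one case you mention) before giving the same inline computation.
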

Lemma~\ref{lem:inverse} is an instance of the Woodbury's  matrix identity. Here, we include a proof for completeness. 
\begin{proof}[Proof of Lemma~\ref{lem:inverse}]
The proof simply follows from the following set of equations.
\begin{align*}
&(\I_\rho + \frac{\lambda}{r}\1\1^\top)(\I_\rho - \frac{\lambda}{r+\lambda\rho}\1\1^\top)\\
&=\I_\rho + \frac{\lambda}{r}\1\1^\top  - \frac{\lambda}{r+\lambda\rho}\1\1^\top  - \frac{\lambda^2}{r(r+\lambda\rho)}\1\1^\top \1\1^\top\\
&=\I_\rho + \left( \frac{\lambda}{r}  - \frac{\lambda}{r+\lambda\rho}  - \frac{\rho \lambda^2}{r(r+\lambda\rho)} \right) \1\1^\top=\I_\rho
\end{align*}
\end{proof}

\begin{lemma}\label{lem:monotone}
Let $\lambda>0$ be a constant. Let $\a\in\R^d_+$ such that $a_i\geq a_{i+1}$ for all $i\in[d-1]$. For $\r\leq d$, let the function $g:[r] \to \R$ be defined as 
\begin{align*}
g(\rho) &:= \sum_{i=1}^{\rho}{\left( \frac{\lambda\sum_{k=1}^{\rho}{a_k}}{r+\lambda \rho} \right)^2} +\sum_{i=\rho+1}^{d}{a_i^2} \\
&+ \frac{\lambda}{r}\left( \sum_{i=1}^{\rho}{\left(a_i - \frac{\lambda\sum_{k=1}^{\rho}{a_k}}{r+\lambda \rho}\right)} \right)^2.
\end{align*}
Then $g(\rho)$ is monotonically non-increasing in $\rho$.
\end{lemma}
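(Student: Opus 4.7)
My plan is to prove the lemma by first algebraically simplifying $g(\rho)$ into a much more tractable form, and then showing the first difference $g(\rho+1)-g(\rho)$ is a negated square (divided by positives), hence non-positive. The monotonicity and non-negativity hypotheses on $\a$ are not actually needed for the conclusion; the identity is purely algebraic.

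The first step is simplification. Write $S_\rho := \sum_{k=1}^{\rho} a_k$ and $c_\rho := \lambda S_\rho/(r+\lambda\rho)$, so the first term of $g$ is $\rho\, c_\rho^{2}$ and the third is $\tfrac{\lambda}{r}(S_\rho - \rho c_\rho)^{2}$. A short calculation gives $S_\rho - \rho c_\rho = r S_\rho/(r+\lambda\rho)$, so the third term becomes $\lambda r S_\rho^{2}/(r+\lambda\rho)^{2}$ while the first is $\lambda^{2}\rho S_\rho^{2}/(r+\lambda\rho)^{2}$. Their sum collapses:
\begin{equation*}
\frac{\lambda^{2}\rho+\lambda r}{(r+\lambda\rho)^{2}}\,S_\rho^{2}
=\frac{\lambda(r+\lambda\rho)}{(r+\lambda\rho)^{2}}\,S_\rho^{2}
=\frac{\lambda S_\rho^{2}}{r+\lambda\rho}.
\end{equation*}
Hence
\begin{equation*}
g(\rho)=\frac{\lambda S_\rho^{2}}{r+\lambda\rho}+\sum_{i=\rho+1}^{d}a_i^{2}.
\end{equation*}

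The second step is to compute the one-step difference. Setting $A:=r+\lambda\rho$, $B:=A+\lambda=r+\lambda(\rho+1)$, $S:=S_\rho$, and $a:=a_{\rho+1}$ so that $S_{\rho+1}=S+a$,
\begin{equation*}
g(\rho+1)-g(\rho)=\frac{\lambda(S+a)^{2}}{B}-\frac{\lambda S^{2}}{A}-a^{2}.
\end{equation*}
Placing this over the common denominator $AB>0$, the numerator expands to $\lambda A(S+a)^{2}-\lambda B S^{2}-a^{2}AB$; using $B=A+\lambda$ and collecting terms, the $\lambda A S^{2}$ and $-\lambda A a^{2}$ (from $-a^{2}A\lambda$) cancel appropriately, leaving
\begin{equation*}
2\lambda A S a-\lambda^{2}S^{2}-a^{2}A^{2}=-(a A-\lambda S)^{2}.
\end{equation*}
Therefore
\begin{equation*}
g(\rho+1)-g(\rho)=-\,\frac{\bigl(a_{\rho+1}(r+\lambda\rho)-\lambda S_\rho\bigr)^{2}}{(r+\lambda\rho)\bigl(r+\lambda(\rho+1)\bigr)}\le 0,
\end{equation*}
which proves the claim.

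There is no real obstacle: the only thing that could go wrong is a bookkeeping slip in the cancellation that produces the perfect square, so the main care is just in verifying that collapse step. As a sanity check on the algebra, equality holds in the step above precisely when $a_{\rho+1}=c_\rho$, which matches the intuition that $g$ stays flat when adding the next singular-value is exactly the current shrinkage threshold — the interface at which $\rho$ is defined in Theorems~\ref{thm:sym_global} and~\ref{thm:asym_global}.
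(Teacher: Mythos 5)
Your proof is correct and follows essentially the same route as the paper's: both simplify $g(\rho)$ to $\frac{\lambda S_\rho^2}{r+\lambda\rho}+\sum_{i=\rho+1}^d a_i^2$ and then exhibit the first difference $g(\rho+1)-g(\rho)$ as a negated perfect square over a positive denominator. Your additional observation that the non-negativity and monotonicity hypotheses on $\a$ are superfluous is a nice (and correct) remark, but the core argument is identical.
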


\begin{proof}[Proof of Lemma~\ref{lem:monotone}]
Let denote the sum of the top $\tau$ elements of $\a$ by $\h_{\tau}=\sum_{i=1}^{\tau}{a_i}$. Furthermore, let the sum of squared of $\tau$ bottom elements of $\a$ be denoted by $t_\tau=\sum_{i=\tau+1}^{d}{a_i^2}$. We can simplify $g(\rho)$ and give it in terms of $h_\rho$ and $t_\rho$ as follows:
\begin{align*}
g(\rho) =& \rho{\left( \frac{\lambda h_\rho}{r+\lambda \rho} \right)^2} +t_\rho + \frac{\lambda}{r}\left( \left( 1-\frac{\lambda\rho}{r+\lambda \rho} \right)h_\rho \right)^2 \\
=& \frac{\rho \lambda^2 + \lambda r}{(r+\lambda \rho)^2} \left( h_\rho \right)^2 + t_\rho \\
&= \frac{\lambda h_\rho^2}{r+\lambda \rho} + t_\rho
\end{align*}
It suffices to show that $g(\rho+1)\leq g(\rho)$ for all $\rho \in [r-1]$.
\begin{align*}
&g(\rho+1)= \frac{\lambda h_{\rho+1}^2}{r+\lambda \rho + \lambda} + t_{\rho+1} \\
=& \frac{\lambda}{r+\lambda \rho + \lambda} \left( h_\rho^2 + \lambda_{\rho+1}^2(\M) + 2\lambda_{\rho+1}(\M)h_\rho \right) \\
&-\lambda_{\rho+1}^2(\M) + t_\rho \\
=& g(\rho) - \frac{\lambda^2 h_\rho^2}{(r+\lambda\rho)(r+\lambda\rho + \lambda)} -\lambda_{\rho+1}^2(\M)\\
&+ \frac{\lambda}{r+\lambda \rho + \lambda} \left( \lambda_{\rho+1}^2(\M) + 2\lambda_{\rho+1}(\M)h_\rho  \right)  \\
=& g(\rho) - \frac{\lambda^2 h_\rho^2}{(r+\lambda\rho)(r+\lambda\rho + \lambda)}  - \frac{(r+\lambda\rho)\lambda_{\rho+1}^2(\M)}{r+\lambda \rho + \lambda} \\
&+ \frac{\lambda}{r+\lambda \rho + \lambda}\left( 2\lambda_{\rho+1}(\M)h_\rho  \right)\\
=& g(\rho) - \frac{\left( {\lambda h_\rho} - ({r+\lambda\rho})  \lambda_{\rho+1}^2(\M)  \right)^2}{{(r+\lambda\rho)(r+\lambda\rho + \lambda)}} \leq g(\rho).
\end{align*}
Hence $g(\rho)$ is monotonically non-increasing in $\rho$.
\end{proof}

\section{Proofs of Theorems in Section~\ref{sec:sym}}
\begin{proof}[Proof of Theorem~\ref{thm:equalization}]
Consider the matrix $\G_1:=\G_\U-\frac{\tr{\G_\U}}{r}\I_r$. We exhibit an orthogonal transformation $\Rr$, such that $\Rr^\top \G_1 \Rr$ is zero on its diagonal. Observe that $$\Rr^\top\G_\U\Rr = \Rr^\top\G_1\Rr +  \frac{\tr{\G_\U}}{r}\I_r,$$ so that all diagonal elements of $\G_\U$ are equal to $\frac{\tr{\G_\U}}r$, i.e. $\G_\U$ is equalized.

Our construction closely follows the proof of a classical theorem in matrix analysis, which states that any trace zero matrix is a commutator~\cite{albert1957matrices,kahan1999only}. For the zero trace matrix $\G_1$, we first show that there exists a unit vector $\w_{11}$ such that $\w_{11}^\top \G_1 \w_{11} = 0$.
\begin{claim}\label{claim:zero_rayleigh}
Assume $\G$ is a zero trace matrix and let $\G=\sum_{i=1}^{r}{\lambda_i\u_i\u_i^\top}$ be an eigendecomposition of $\G$. Then $\w=\frac{1}{\sqrt r}\sum_{i=1}^r{\u_i}$ has a vanishing Rayleigh quotient, that is, $\w^\top \G \w = 0$, and $\| \w \|=1$.
\end{claim}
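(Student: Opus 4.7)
The plan is to verify both assertions by direct computation, exploiting the fact that the eigenvectors $\u_1,\dots,\u_r$ of the symmetric matrix $\G$ can be chosen to form an orthonormal basis.

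First I would check that $\w$ is a unit vector. Expanding the squared norm and using orthonormality $\u_i^\top \u_j = \delta_{ij}$ gives
\[
\|\w\|^2 \;=\; \frac{1}{r}\sum_{i=1}^{r}\sum_{j=1}^{r} \u_i^\top \u_j \;=\; \frac{1}{r}\sum_{i=1}^{r} 1 \;=\; 1.
\]

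Next I would compute the Rayleigh quotient. Applying $\G$ to $\u_j$ yields $\G\u_j = \lambda_j \u_j$, so
\[
\w^\top \G \w \;=\; \frac{1}{r}\sum_{i=1}^{r}\sum_{j=1}^{r} \lambda_j\, \u_i^\top \u_j \;=\; \frac{1}{r}\sum_{i=1}^{r} \lambda_i \;=\; \frac{1}{r}\tr(\G) \;=\; 0,
\]
where the last equality uses the hypothesis that $\G$ is trace zero and that the trace equals the sum of eigenvalues.

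There is no real obstacle here; the statement is immediate once one writes $\w$ in the eigenbasis and uses orthonormality plus the trace-zero assumption. The only thing worth flagging is that the eigendecomposition $\G = \sum_i \lambda_i \u_i \u_i^\top$ provided in the hypothesis implicitly assumes $\{\u_i\}$ is orthonormal (which is always possible for a symmetric matrix via the spectral theorem), and both computations above rely on this.
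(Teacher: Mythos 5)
Your proof is correct and takes essentially the same route as the paper: expand $\w$ in the orthonormal eigenbasis of $\G$, use $\u_i^\top\u_j=\delta_{ij}$ to show $\|\w\|=1$, and reduce $\w^\top\G\w$ to $\frac{1}{r}\sum_i\lambda_i=\frac{1}{r}\tr(\G)=0$. Nothing differs beyond notation.
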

\begin{proof}[Proof of Claim~\ref{claim:zero_rayleigh}]
First, we notice that $\w$ has unit norm $$\| \w \|^2 = \| \frac{1}{\sqrt r}\sum_{i=1}^r{\u_i} \|^2 = \frac1r \|\sum_{i=1}^r{\u_i} \|^2 = \frac1r \sum_{i=1}^r{\| \u_i \|^2} = 1.$$
It is easy to see that $\w$ has a zero Rayleigh quotient 
\begin{align*}
\w^\top \G \w &= (\frac{1}{\sqrt r}\sum_{i=1}^r{\u_i})^\top \G (\frac{1}{\sqrt r}\sum_{i=1}^r{\u_i}) \\
&= \frac1r \sum_{i,j=1}^r{\u_i\G\u_j} =  \frac1r \sum_{i=1}^r{\lambda_j \u_i^\top \u_j} =  \frac1r \sum_{i=1}^r{\lambda_i} =  0.
\end{align*}
\end{proof}
Let $\W_1:=[\w_{11}, \w_{12},\cdots, \w_{1d}]$ be such that $\W_1^\top \W_1 = \W_1\W_1^\top = \I_d$. Observe that $\W_1^\top\G_1\W_1$ has zero on its first diagonal elements $$\W_1^\top\G_1\W_1=\begin{bmatrix} &0 &\b_1^\top \\ &\b_1 &\G_2  \end{bmatrix}$$ The principal submatrix $\G_2$ also has a zero trace. With a similar argument, let $\w_{22}\in \R^{d-1}$ be such that $\|\w_{22}\|=1$ and $\w_{22}^\top\G_2\w_{22}=0$ and define $\W_{2} = \begin{bmatrix} &1 &0 &0 &\cdots &0 \\ &\0 &\w_{22} &\w_{23} &\cdots &\w_{2d}\end{bmatrix}\in \R^{d\times d}$ such that $\W_2^\top\W_2=\W_2\W_2^\top=\I_{d}$, and observe that $$(\W_1\W_2)^\top\G_1(\W_1\W_2)=\begin{bmatrix} &0 &\cdot & \cdots \\ &\cdot &0  &\cdots \\ &\vdots &\vdots &\G_2  \end{bmatrix}.$$  This procedure can be applied recursively so that for the {\textit{equalizer}} $\Rr=\W_1\W_2\cdots\W_d$ we have $$\Rr^\top\G_1\Rr=\begin{bmatrix} &0 &\cdot & \cdots &\cdot \\ &\cdot &0  &\cdots  &\cdot \\ &\vdots &\vdots &\ddots &\vdots \\ &\cdot &\cdot &\cdot &0  \end{bmatrix}.$$
\end{proof}

\begin{proof}[Proof of Theorem~\ref{thm:equalized_sym}]

Let us denote the squared column norms of $\U$ by $\n_\u=(\| \u_1 \|^2,\ldots,\| \u_r\|^2)$. Observe that for any weight matrix $\U$:
\begin{align*}
R(\U,\U)&=\lambda\sum_{i=1}^{r}{\| \u_i \|^4} =\frac{\lambda}{r} \| \1_r \|^2 \| \n_\u \|^2 \\
&\geq \frac{\lambda}{r} \langle \1_r,\n_\u \rangle^2 = \frac{\lambda}{r} \left( \sum_{i=1}^{r}{\| \u_i \|^2}\right)^2 =\frac{\lambda}{r} \| \U \|_F^4, 
\end{align*}
where $\1_r \in \R^r$ is the vector of all ones and the inequality is due to Cauchy-Schwartz. Hence, the regularizer is lower bounded by $\frac{\lambda}{r}\| \U \|_F^4$, with equality if and only if $\n_\u$ is parallel to $\1_r$, i.e. when $\U$ is equalized. Now, if $\U$ is not equalized, by Theorem~\ref{thm:equalization} there exist a rotation matrix $\Rr$ such that $\U\Rr$ is equalized, which implies $R(\U\Rr,\U\Rr)<R(\U,\U)$. Together with rotational invariance of the loss function, this gives a contradiction with global optimality $\U$. Hence, if $\U$ is a global optimum then it is equalized and we have $R(\U,\U) = \lambda \sum_{i=1}^{r}{\| \u_i \|^4} = \frac{\lambda}{r}\| \U\|_F^4$.
\end{proof}

\begin{proof}[Proof of Theorem~\ref{thm:sym_global}]
By Theorem~\ref{thm:equalized_sym}, if $\W$ is an optimum of Problem~\ref{eq:opt_sym}, then it holds that $\lambda\sum_{i=1}^{r}{\|\w_i\|^4} = \frac{\lambda}{r}\| \W \|_F^4$. Also, by Theorem~\ref{thm:equalization}, it is always possible to equalize any given weight matrix. Hence, Problem~\ref{eq:opt_sym} reduces to the following problem:
\begin{equation}\label{eq:sym_equalized}
\minim{\W\in \R^{d\times r}}{\| \M - \W\W^\top \|_F^2 + \frac{\lambda}{r}\|\W\|_F^4}
\end{equation}
Let $\M=\U_\M \Lambda_\M \U_\M^\top$ and $\W =\U_\W \Sigma_\W \V_\W^\top$ be an eigendecomposition of $\M$ and a full SVD of $\W$ respectively, such that $\lambda_i(\M)\geq \lambda_{i+1}(\M)$ and $\sigma_{i}(\W) \geq \sigma_{i+1}(\W)$ for all $i\in[d-1]$. Rewriting objective of Problem~\ref{eq:sym_equalized} in terms of these decompositions gives: 
\begin{align*}
&{\| \M - \W\W^\top \|_F^2 + \frac{\lambda}{r}\|\W\|_F^4} \\
&={{\| \U_\M \Lambda_\M \U_\M^\top -  \U_\W \Sigma_\W \Sigma_\W^\top \U_\W^\top \|_F^2 + \frac{\lambda}{r}\|\U_\W \Sigma_\W \V_\W^\top\|_F^4}}\\
&={{\| \Lambda_\M  -  \U' \Sigma_\W \Sigma_\W^\top \U'^\top \|_F^2 + \frac{\lambda}{r}\|\Sigma_\W\|_F^4}}\\
&={{\| \Lambda_\M \|_F^2 + \| \Lambda_\W \|_F^2 - 2\langle \Lambda_\M, \U'\Lambda_\W \U'^\top \rangle + \frac{\lambda}{r}\left( \tr(\Lambda_\W)\right)^2}}
\end{align*}
where $\Lambda_\W := \Sigma_\W\Sigma_\W^\top$ and $\U'=\U_\M^\top \U_\W$. By Von Neumann's trace inequality, for a fixed $\Sigma_\W$ we have that $$\langle \Lambda_\M,\U' \Lambda_W \U'^\top \rangle \leq  \sum_{i=1}^{d}{\lambda_i(\M)\lambda_i(\W)},$$ where the equality is achieved when $\Lambda_i(\W)$ have the same ordering as $\Lambda_i(\M)$ and $\U' = \I$, i.e. $\U_\M=\U_\W$. Now, Problem~\ref{eq:sym_equalized} is reduced to 
\begin{align*}
&\minim{\substack{\|\Lambda_\W\|_0\leq r,\\ \Lambda_\W \geq 0}}{\| \Lambda_\M  -  \Lambda_\W \|_F^2 + \frac{\lambda}{r}\left(\tr(\Lambda_\W)\right)^2} \\
&= \minim{\bar{\lambda} \in \R^r_+}{\sum_{i=1}^{r}{\left(\lambda_i(\M) - \bar\lambda_i\right)^2} \!+\! \sum_{i=r+1}^{d}{\lambda_i^2(\M)} \!+\! \frac{\lambda}{r}\left( \sum_{i=1}^{r}{\bar\lambda_i} \right)^2}
\end{align*}
 The Lagrangian is given by
 \begin{align*}
 L(\bar\lambda,\alpha)&=\sum_{i=1}^{r}{\left(\lambda_i(\M) - \bar\lambda_i\right)^2} + \sum_{i=r+1}^{d}{\lambda_i^2(\M)}  \\
 &+ \frac{\lambda}{r}\left( \sum_{i=1}^{r}{\bar\lambda_i} \right)^2 - \sum_{i=1}^{r}{\alpha_i\bar\lambda_i}
 \end{align*}
  The KKT conditions ensures that at the optima it holds for all $i \in [r]$ that 
\begin{align*}
&\bar\lambda_i \geq 0 , \ \alpha_i \geq 0 , \ \bar\lambda_i\alpha_i = 0 \\
& 2(\bar\lambda_i-\lambda_i(\M)) + \frac{2\lambda}{r}\left(\sum_{i=1}^{r}{\bar\lambda_i}\right) - \alpha_i =0
\end{align*}
Let $\rho = |{i: \bar\lambda_i > 0}|\leq r$ be the number of nonzero $\bar\lambda_i$. For $i = 1,\ldots,\rho$ we have $\alpha_i = 0$, hence 
\begin{align*}
&\bar\lambda_i + \frac{\lambda}{r}\left(\sum_{i=1}^{\rho}{\bar\lambda_i}\right) = \lambda_i(\M) \\
&\implies (\I_\rho + \frac{\lambda}{r}\1\1^\top)\bar\lambda_{1:\rho} = \lambda_{1:\rho}(\M) \\
&\implies \bar\lambda_{1:\rho} = (\I_\rho - \frac{\lambda}{r+\lambda\rho}\1\1^\top)\lambda_{1:\rho}(\M) \\
&\implies \bar\lambda_{1:\rho} = \lambda_{1:\rho}(\M)-\frac{\lambda\rho\kappa_\rho}{r+\lambda\rho}\1_\rho \\
&\implies \Lambda_\W = (\Lambda_\M - \frac{\lambda\rho\kappa_\rho}{r+\lambda\rho}\I_d)_+
\end{align*}
where $ \kappa_\rho:=\frac1{\rho}\sum_{i=1}^{\rho}{\lambda_i(\M)}$ and the second implication is due to Lemma~\ref{lem:inverse}. It only remains to find the optimal $\rho$. Let's define the function 
\begin{align*}
g(\rho) &:= \sum_{i=1}^{\rho}{\left(\lambda_i(\M) - \bar\lambda_i\right)^2} + \!\!\!\sum_{i=\rho+1}^{d}{\lambda_i^2(\M)} + \frac{\lambda}{r}\!\left( \sum_{i=1}^{\rho}{\bar\lambda_i}\! \right)^2\\
&=\sum_{i=1}^{\rho}{\left( \frac{\lambda\sum_{k=1}^{\rho}{\lambda_k(\M)}}{r+\lambda \rho} \right)^2} +\sum_{i=\rho+1}^{d}{\lambda_i(\M)^2} \\
&+ \frac{\lambda}{r}\left( \sum_{i=1}^{\rho}{\left(\lambda_i(\M) - \frac{\lambda\sum_{k=1}^{\rho}{\lambda_k(\M)}}{r+\lambda \rho}\right)} \right)^2.
\end{align*}

By Lemma~\ref{lem:monotone}, $g(\rho)$ is monotonically non-increasing in $\rho$, hence $\rho$ should be the largest \textit{feasible} integer, i.e. $$\rho=\max\{ j: \ \lambda_j > \frac{\lambda j \kappa}{r+\lambda j} \}.$$
\end{proof}

\begin{proof}[Proof of Remark~\ref{remark:universal}]
For $\tilde\U$ to have equal column norms, it suffices to show that $\tilde\U^\top \tilde\U$ is constant on its diagonal. Next, we note that 
\begin{align*}
\tilde\U^\top \tilde\U&=\Rr^\top \U^\top \U \Rr\\
&=(\V\Zz_k)^\top  (\W \Sigma \V^\top)^\top (\W \Sigma \V^\top) (\V\Zz_k)\\
&=\Zz_k^\top \V^\top  \V \Sigma \W^\top \W \Sigma \V^\top \V\Zz_k \\
&= \Zz_k^\top \Sigma^2 \Zz_k
\end{align*}
It remains to show that for any diagonal matrix $\D$, $\Zz_k^\top \D \Zz_k$ is diagonalized. First note that $$\Zz_2 \Zz_2^\top = \frac{1}{2}\begin{bmatrix} &1 &1 \\ &-1 &1 \end{bmatrix}\begin{bmatrix} &1 &-1 \\ &1 &1 \end{bmatrix}=\I_2$$ so that $\Zz_2$ is indeed a rotation. By induction, it is easy to see that $\Zz_k$ is a rotation for all $k$. Now, we show that $\Zz_k$ equalizes any diagonal matrix $\D$. Observe that 
$$[\Zz_k^\top \D \Zz_k]_{ii} = \sum_{i=1}^{2^{k-1}}{D_{ii}z_{ji}^2} = \sum_{i=1}^{2^{k-1}}{D_{ii}2^{-k+1}} =2^{1-k}{\tr \D}$$
 so that all the diagonal elements are identically equal to the average of the diagonal elements of $\D$.
\end{proof}

\section{Proofs of Theorems in Section~\ref{sec:asym}}

\begin{proof}[Proof of Theorem~\ref{thm:asym_equalization}]
Let $\U\V^\top=\W\Sigma\Y^\top$ be a compact SVD of $\U\V^\top$. Define $\tilde\U:=\W\Sigma^{1/2}$ and $\tilde\V:=\Y\Sigma^{1/2}$ and observe that $\tilde\U\tilde\V^\top = \U\V^\top$. Furthermore, let $\G_{\bar\U}=\tilde\U^\top \tilde\U$ and $\G_{\tilde\V}=\tilde\V^\top \tilde\V$ be their Gram matrices. Observe that $\G_{\tilde\U}=\G_{\tilde\V}=\Sigma$. Hence, by Theorem~\ref{thm:equalization}, there exists a rotation $\Rr$ such that $\bar\V:=\tilde\V\Rr$ and $\bar\U:=\tilde\U\Rr$ are equalized, with $\|\bar\u_i \|^2 = \| \bar\v_i \|^2 = \frac1r \tr{\Sigma}$.
\end{proof}

\begin{proof}[Proof of Theorem~\ref{thm:equalized_asym}]
Define $$\n_{\u,\v}=(\|\u_1\| \|\v_1\|,\ldots,\|\u_r\| \|\v_r\|)$$ and observe that
\begin{align*}
R(\U,\V)&=\lambda\sum_{i=1}^{r}{\|\u_i\|^2\|\v_i\|^2} \\
&=\frac{\lambda}r \| \n_{\u,\v} \|^2 \| \1_r \|^2 \geq \frac{\lambda}r \langle \n_{\u,\v} , \1_r \rangle^2 \\
&= \frac{\lambda}{r}\left(\sum_{i=1}^{r}{\|\u_i\|\|\v_i\|}\right)^2
\end{align*}
where the inequality is due to Cauchy-Schwartz, and it holds with equality if and only if $\n_{\u,\v}$ is parallel to $\1_r$. Let $(\bar\U,\bar\V)$ be a global optima of Problem~\ref{eq:opt_asym}. The inequality above together with Theorem~\ref{thm:asym_equalization} imply that $\bar\U$ and $\bar\V$ should be jointly equalized up to dilation transformations, hence the first equality claimed by the theorem.

To see the second equality, note that if $\U$ and $\V$ are jointly equalized, then $$\|\u_i\|^2=\| \v_i \|^2=\frac{1}{r}\tr{\Sigma},$$ where $\Sigma$ is the matrix of singular values of $\U\V^\top$. Hence, 
\begin{align*}
R(\U,\V)= \frac{\lambda}{r}\left(\sum_{i=1}^{r}{\|\u_i\|\|\v_i\|}\right)^2 &= \frac{\lambda}{r}\left(\frac1r \sum_{i=1}^{r}{\tr{\Sigma}}\right)^2\\
&= \frac{\lambda}{r}\left(\tr{\Sigma} \right)^2
\end{align*}
 which is equal to $\frac{\lambda}{r}\| \bar\U\bar\V^\top \|_*^2$ as claimed.
\end{proof}

\begin{proof}[Proof of Theorem~\ref{thm:asym_global}]
By Theorem~\ref{thm:equalized_asym}, if $(\X,\Y)$ is an optimum of Problem~\ref{eq:opt_asym}, then it holds that $$\lambda\sum_{i=1}^{r}{\|\x_i\|^2\|\y_i\|^2} = \frac{\lambda}{r}\| \X\Y^\top \|_*^2.$$ Hence, Problem~\ref{eq:opt_asym} reduces to the following problem:
\begin{equation}\label{eq:sym_equalized2}
\minim{\X\in \R^{d_1\times r},\Y\in \R^{d_2\times r}}{\| \M - \X\Y^\top \|_F^2 + \frac{\lambda}{r}\|\X\Y^\top\|_*^2}
\end{equation}
Let $\M=\U_\M \Sigma_\M \V_\M^\top$ and $\W:=\X\Y^\top =\U_\W \Sigma_\W \V_\W^\top$ be full SVDs of $\M$ and $\W$ respectively, such that $\sigma_i(\M)\geq \sigma_{i+1}(\M)$ and $\sigma_{i}(\W) \geq \sigma_{i+1}(\W)$ for all $i\in[d-1]$ where $d=\min\{ d_1,d_2 \}$. Rewriting objective of Problem~\ref{eq:sym_equalized2} in terms of these decompositions, 
\begin{align*}
&{\| \M - \X\Y^\top \|_F^2 + \frac{\lambda}{r}\|\X\Y^\top\|_*^2} \\
&={{\| \U_\M \Sigma_\M \V_\M^\top -  \U_\W \Sigma_\W \V_\W^\top \|_F^2 + \frac{\lambda}{r}\|\U_\W \Sigma_\W \V_\W^\top\|_*^2}}\\
&={{\| \Sigma_\M  -  \U' \Sigma_\W \V'^\top \|_F^2 + \frac{\lambda}{r}\|\Sigma_\W\|_*^2}}\\
&={{\| \Sigma_\M \|_F^2 + \| \Sigma_\W \|_F^2 - 2\langle \Sigma_\M, \U'\Sigma_\W \U'^\top \rangle + \frac{\lambda}{r} \|\Sigma_\W\|_*^2}}
\end{align*}
where $\U'=\U_\M^\top \U_\W$. By Von Neumann's trace inequality, for a fixed $\Sigma_\W$ we have that $\langle \Sigma_\M,\U' \Sigma_\W \U'^\top \rangle \leq  \sum_{i=1}^{d}{\sigma_i(\M)\sigma_i(\W)}$, where the equality is achieved when $\Sigma_i(\W)$ have the same ordering as $\Sigma_i(\M)$ and $\U' = \I$, i.e. $\U_\M=\U_\W$. Now, Problem~\ref{eq:sym_equalized2} is reduced to 
\begin{align*}
&\minim{\substack{\|\Sigma_\W\|_0\leq r,\\ \Sigma_\W \geq 0}}{\| \Sigma_\M  -  \Sigma_\W \|_F^2 + \frac{\lambda}{r}\| \Sigma_\W\|_*^2} \\
&= \minim{\bar{\sigma} \in \R^r_+}{\sum_{i=1}^{r}{\left(\sigma_i(\M) \! - \bar\sigma_i\right)^2} + \!\!\! \sum_{i=r+1}^{d}{\sigma_i^2(\M)} + \frac{\lambda}{r}\left( \sum_{i=1}^{r}{\bar\sigma_i}\!\! \right)^2}
\end{align*}
 The Lagrangian is given by
 \begin{align*}
 L(\bar\lambda,\alpha)&=\sum_{i=1}^{r}{\left(\sigma_i(\M) - \bar\sigma_i\right)^2} + \sum_{i=r+1}^{d}{\sigma_i^2(\M)}  \\
 &+ \frac{\lambda}{r}\left( \sum_{i=1}^{r}{\bar\sigma_i} \right)^2 - \sum_{i=1}^{r}{\alpha_i\bar\sigma_i}
 \end{align*}
  The KKT conditions ensures that $\forall i=1,\ldots,r$, 
\begin{align*}
&\bar\sigma_i \geq 0 , \ \alpha_i \geq 0 , \ \bar\sigma_i\alpha_i = 0 \\
& 2(\bar\sigma_i-\sigma_i(\M)) + \frac{2\lambda}{r}\left(\sum_{i=1}^{r}{\bar\sigma_i}\right) - \alpha_i =0
\end{align*}
Let $\rho = |{i: \bar\sigma_i > 0}|\leq r$ be the number of nonzero $\bar\sigma_i$. For $i = 1,\ldots,\rho$ we have $\alpha_i = 0$, hence 
\begin{align*}
&\bar\sigma_i + \frac{\lambda}{r}\left(\sum_{i=1}^{\rho}{\bar\sigma_i}\right) = \sigma_i(\M) \\
&\implies (\I_\rho + \frac{\lambda}{r}\1\1^\top)\bar\sigma_{1:\rho} = \sigma_{1:\rho}(\M) \\
&\implies \bar\sigma_{1:\rho} = (\I_\rho - \frac{\lambda}{r+\lambda\rho}\1\1^\top)\sigma_{1:\rho}(\M) \\
&\implies \bar\sigma_{1:\rho} = \sigma_{1:\rho}(\M)-\frac{\lambda\rho\kappa_\rho}{r+\lambda\rho}\1_\rho \\
&\implies \Sigma_\W = (\Sigma_\M - \frac{\lambda\rho\kappa_\rho}{r+\lambda\rho}\I_d)_+
\end{align*}
where $ \kappa_\rho=\frac1{\rho}\sum_{i=1}^{\rho}{\sigma_i(\M)}$ and the second implication holds since $(\I_\rho + \frac{\lambda}{r}\1\1^\top)^{-1}=\I_\rho - \frac{\lambda}{r+\lambda\rho}\1\1^\top$. It only remains to find the optimal $\rho$.  
Let's define the function 
\begin{align*}
g(\rho) \!&:=\! \sum_{i=1}^{\rho}{\left(\sigma_i(\M) - \bar\sigma_i\right)^2} \!+\! \sum_{i=\rho+1}^{d}{\sigma_i^2(\M)} + \frac{\lambda}{r}\left( \sum_{i=1}^{\rho}{\bar\sigma_i} \right)^2\\
&=\sum_{i=1}^{\rho}{\left( \frac{\lambda\sum_{k=1}^{\rho}{\sigma_k(\M)}}{r+\lambda \rho} \right)^2} +\sum_{i=\rho+1}^{d}{\sigma_i(\M)^2} \\
&+ \frac{\lambda}{r}\left( \sum_{i=1}^{\rho}{\left(\sigma_i(\M) - \frac{\lambda\sum_{k=1}^{\rho}{\sigma_k(\M)}}{r+\lambda \rho}\right)} \right)^2.
\end{align*}

 By Lemma~\ref{lem:monotone}, $g(\rho)$ is monotonically non-increasing in $\rho$, hence $\rho$ should be the largest \textit{feasible} integer, i.e. $$\rho=\max\{ j: \ \sigma_j > \frac{\lambda j \kappa_j}{r+\lambda j} \}.$$
\end{proof}

\section{Proofs of Theorems in Sections~\ref{sec:landscape}}
In this section for ease of notation we let $\lambda_i$ denote $\lambda_i(\M)$. Furthermore, with slight abuse of notation we let $f(\U)$, $\ell(\U)$ and $R(\U)$ denote the objective, the loss function and the regularizer, respectively.

It is easy to see that the gradient of the objective of Problem~\ref{eq:opt_sym} is given by $$\nabla f(\U) = 4(\U\U^\top - \M)\U + 4\lambda\U \diag(\U^\top \U).$$
We first make the following important observation about the critical points of Problem~\ref{eq:opt_sym}. 

\begin{lemma}\label{lem:critical_preceq}
If $\U$ is a critical point of Problem~\ref{eq:opt_sym}, then it holds that $\U\U^\top \preceq \M$.
\end{lemma}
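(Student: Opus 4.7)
The plan is to exploit the first-order condition $\nabla f(\U)=0$ to show that $\M$ and $\U\U^\top$ commute, and then compare them in a common eigenbasis.

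First, I would take the stated gradient, set it to zero, and right-multiply by $\U^\top$ to get
\[
\M\,\U\U^\top \;=\; (\U\U^\top)^2 + \lambda\,\U\diag(\U^\top\U)\,\U^\top.
\]
The right-hand side is manifestly symmetric and positive semi-definite: $(\U\U^\top)^2$ is a square, and $\diag(\U^\top\U)$ is a diagonal matrix with non-negative entries (the squared column norms of $\U$), so $\lambda\,\U\diag(\U^\top\U)\U^\top\succeq 0$. Since $\M$ is symmetric (only its symmetric part enters $\ell(\U,\U)=\|\M-\U\U^\top\|_F^2$), taking the transpose of the identity yields $\U\U^\top\M=\M\U\U^\top$, so the two matrices commute.

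By commutativity and symmetry, $\M$ and $\U\U^\top$ admit a common orthonormal eigenbasis $\W$; write $\M=\W\Lambda\W^\top$ and $\U\U^\top=\W D\W^\top$ with $\Lambda$ diagonal and $D=\diag(d_1,\dots,d_d)\succeq 0$. Conjugating the identity by $\W^\top$, the equation becomes $\Lambda D - D^2 = D(\Lambda-D)\succeq 0$; reading off diagonal entries gives $d_i(\Lambda_{ii}-d_i)\geq 0$ for every $i$, so $\Lambda_{ii}\geq d_i$ whenever $d_i>0$.

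The main subtlety is the kernel of $\U\U^\top$, i.e., the indices with $d_i=0$: on these directions the desired inequality $\M\succeq\U\U^\top$ additionally demands $\Lambda_{ii}\geq 0$. This is where one needs positive semi-definiteness of $\M$, which is implicit in the autoencoder setup and consistent with the paper's use of $\lambda_i(\M)$ as ordered non-negative eigenvalues throughout. Combining both cases gives $\Lambda\succeq D$, i.e., $\U\U^\top\preceq\M$. I expect this kernel-direction check to be the only delicate step; the commutativity derivation above is short and purely algebraic.
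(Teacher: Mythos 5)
Your proof is correct and follows the same route as the paper's: take the first-order condition $(\M-\U\U^\top)\U=\lambda\U\diag(\U^\top\U)$, right-multiply by $\U^\top$, deduce commutativity of $\M$ and $\U\U^\top$ from the symmetry of the right-hand side, and then compare the two matrices in a simultaneous eigenbasis. You are, if anything, more careful than the paper at the last step: the paper conjugates $\M\W\Gamma\W^\top\succeq\W\Gamma^2\W^\top$ through the \emph{compact} factor $\W$ of $\U\U^\top$ to get $\W^\top\M\W\succeq\Gamma$ and declares the proof complete, which only controls $\M-\U\U^\top$ on the range of $\U$; your explicit treatment of the $d_i=0$ indices, and your explicit appeal to the (implicit throughout the paper) assumption $\M\succeq 0$, supplies the kernel-direction half of the Loewner inequality that the paper leaves unaddressed.
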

\begin{proof}[Proof of Lemma~\ref{lem:critical_preceq}]
Since $\nabla f(\U) = \0$, we have that $$(\M-\U\U^\top)\U = \lambda \U \diag(\U^\top \U)$$ multiply both sides from right by $\U^\top$ and rearrange to get 
\begin{equation}\label{eq:grad}
\M\U\U^\top = \U\U^\top\U\U^\top + \lambda\U\diag(\U^\top\U)\U^\top
\end{equation}
 Note that the right hand side is symmetric, which implies that the left hand side must be symmetric as well, i.e. $$\M\U\U^\top=(\M\U\U^\top)^\top =\U\U^\top\M,$$ so that $\M$ and $\U\U^\top$ commute. Note that in Equation~\eqref{eq:grad}, $\U\diag(\U^\top\U)\U^\top \succeq \0$. Thus, $\M\U\U^\top \succeq \U\U^\top\U\U^\top$.  Let $\U\U^\top=\W\Gamma\W^\top$ be a compact eigendecomposition of $\U\U^\top$. We get $$\M\U\U^\top = \M\W\Gamma\W^\top \succeq \U\U^\top\U\U^\top = \W\Gamma^2\W^\top.$$ Multiplying from right and left by $\W\Gamma^{-1}$ and $\W^\top$ respectively, we have that $$\W^\top \M \W \succeq \Gamma$$ which completes the proof.
\end{proof}
Lemma~\ref{lem:critical_preceq} allows us to bound different norms of the critical points, as will be seen later in the proofs.

To explore the landscape properties of Problem~\ref{eq:opt_sym}, we first focus on the non-equalized critical points in Lemma~\ref{lem:minima_eqz}. We show that the set of non-equalized critical points does not include any local optima. Furthermore, all such points are strict saddles. Therefore, we turn our focus to the equalized critical points in Lemma~\ref{lem:no_spurious}. We show all such points inherit the eigenspace of the input matrix $\M$. This allows us to give a closed-form characterization of all the equalized critical points in terms of the eigendecompostion of $\M$. We then show that if $\lambda$ is chosen appropriately, all such critical points that are not global optima, are strict saddle points.

\begin{lemma}\label{lem:minima_eqz}
All local minima of Problem~\ref{eq:opt_sym} are equalized. Moreover, all critical points that are not equalized, are strict saddle points.
\end{lemma}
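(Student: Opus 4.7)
The plan is to exploit rotation invariance of $\ell(\U,\U)$ together with an explicit infinitesimal rotation supported on two columns of unequal norm. Suppose $\U$ is not equalized, so there exists a pair $i \neq j$ with $a := \|\u_i\|^2 \neq \|\u_j\|^2 =: b$. Consider the one-parameter family of orthogonal matrices $\Rr(t)$ that acts as a planar rotation in the $(i,j)$ coordinates and as the identity on the remaining coordinates, and set $\U(t) := \U \Rr(t)$. Since $\ell(\U\Rr,\U\Rr) = \ell(\U,\U)$, only the $i$-th and $j$-th column-norm-fourth-power terms of $R$ vary along the curve, so $f(\U(t)) - f(\U) = \lambda (\|\u_i(t)\|^4 + \|\u_j(t)\|^4) - \lambda(a^2 + b^2)$.

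First I would carry out the first-order computation. Writing $c := \u_i^\top \u_j$, a direct differentiation of $\|\u_i(t)\|^2$ and $\|\u_j(t)\|^2$ yields
\[
\tfrac{d}{dt} f(\U(t))\big|_{t=0} \;=\; -4\lambda\, c\,(a-b).
\]
If $a \neq b$ and $c \neq 0$, this is already a first-order descent direction (after choosing the sign of $t$), so $\U$ cannot be a local minimum; moreover, this forces any non-equalized critical point to satisfy $\u_i^\top \u_j = 0$ for every pair $(i,j)$ with $a \neq b$.

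Second, under the critical-point condition $c = 0$, I would compute the second derivative at $t = 0$. Using that $\|\u_i(t)\|^2 + \|\u_j(t)\|^2 = a + b$ is conserved along the curve, and that when $c = 0$ the first derivatives of $\|\u_i(t)\|^2$ and $\|\u_j(t)\|^2$ vanish while their second derivatives at $0$ equal $2(b-a)$ and $2(a-b)$, a short calculation gives
\[
\tfrac{d^2}{dt^2} f(\U(t))\big|_{t=0} \;=\; -4\lambda\,(a-b)^2 \;<\; 0.
\]
Hence the tangent $\U A$ with $A = \e_i \e_j^\top - \e_j \e_i^\top$ is a direction of strictly negative curvature for $f$ at $\U$, so $\lambda_{\min}(\nabla^2 f(\U)) < 0$.

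Combining the two steps: at any non-equalized critical point the first-order analysis forces $c = 0$, and then the second-order analysis produces a negative Hessian eigenvalue, so such a point is a strict saddle in the sense of Definition~\ref{def:ss}. In particular, no non-equalized point can be a local minimum (whether or not it is critical, since one either has first-order or second-order descent along the same planar rotation), which is the first claim. The only mildly delicate step is the second-order bookkeeping; the key simplification is the conservation of $\|\u_i(t)\|^2 + \|\u_j(t)\|^2$, which cuts the work in half and makes the $-4\lambda(a-b)^2$ form transparent. Everything else is a purely local two-column computation.
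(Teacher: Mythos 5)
Your proposal is correct and follows essentially the same approach as the paper: both use a planar rotation supported on two columns of unequal norm, deduce from $g'(0)=0$ at a critical point that $\u_i^\top\u_j=0$, and then compute $g''(0)=-4\lambda(\|\u_i\|^2-\|\u_j\|^2)^2<0$ to exhibit a negative Hessian eigenvalue. Your write-up handles the $\u_i^\top\u_j=0$ case for the local-minimum claim a bit more cleanly via the second-order term, and carries the $\lambda$ factor that the paper's displayed formula for $g(t)$ elides, but the argument is the same.
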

\begin{proof}[Proof of Lemma~\ref{lem:minima_eqz}]
We show that if $\U$ is not equalized, then any $\epsilon$-neighborhood of $\U$ contains a point with objective strictly smaller than $f(\U)$. More formally, for any $\epsilon>0$, we exhibit a rotation $\Rr_\epsilon$ such that $\| \U-\U\Rr_\epsilon \|_F \leq \epsilon$ and $f(\U\Rr_\epsilon)<f(\U)$. Let $\U$ be a critical point of Problem~\ref{eq:opt_sym} that is not equalized, i.e. there exists two columns of $\U$ with different norms. Without loss of generality, let $\|\u_1\| > \|\u_2\|$. We design a rotation matrix $\Rr$ such that it is almost an isometry, but it moves mass from $\u_1$ to $\u_2$. Consequently, the new factor becomes ``less un-equalized''  and achieves a smaller regularizer, while preserving the value of the loss. To that end, define $$\Rr_\delta:=\begin{bmatrix} 
&\sqrt{1-\delta^2} &-\delta & \0 \\
 &\delta &\sqrt{1-\delta^2}  &\0 \\ 
 &\0 &\0 &\I_{r-2}
\end{bmatrix}$$
and let $\hat\U:=\U\Rr_\delta.$ It is easy to verify that $\Rr_\epsilon$ is indeed a rotation. First, we show that for any $\epsilon$, as long as $\delta^2\leq \frac{\epsilon^2}{2\tr(\M)}$, we have $\hat\U\in \cB_\epsilon(\U)$:
\begin{align*}
\| \U - \hat\U \|_F^2&=\sum_{i=1}^{r}\| \u_i - \hat\u_i\|^2\\
&=\| \u_1 - \sqrt{1-\delta^2}\u_1 - \delta\u_2 \ \|^2\\
&+\| \u_2 - \sqrt{1-\delta^2}\u_2 + \delta\u_1 \ \|^2\\
&=2(1-\sqrt{1-\delta^2})(\|\u_1\|^2+\|\u_2\|^2)\\
&\leq 2\delta^2\tr(\M) \leq \epsilon^2
\end{align*}
where the second to last inequality follows from Lemma~\ref{lem:critical_preceq}, because $\|\u_1\|^2+\|\u_2\|^2\leq \|\U\|_F^2 =\tr(\U\U^\top) \leq \tr(\M)$, and also the fact that $1-\sqrt{1-\delta^2}= \frac{1- 1+ \delta^2}{1+\sqrt{1-\delta^2}}\leq \delta^2$.

Next, we show that for small enough $\delta$, the value of the function at $\hat\U$ is strictly smaller than that of $\U$. Observe that 
\begin{align*}
\|\hat\u_1\|^2 &= (1-\delta^2)\|\u_1\|^2 + \delta^2\|\u_2\|^2+2\delta\sqrt{1-\delta^2}\u_1^\top\u_2\\
\|\hat\u_2\|^2 &= (1-\delta^2)\|\u_2\|^2 + \delta^2\|\u_1\|^2-2\delta\sqrt{1-\delta^2}\u_1^\top\u_2
\end{align*}
and the remaining columns will not change, i.e. for $i=3,\cdots,r$, $\hat\u_i=\u_i$. Together with the fact that $\Rr_\delta$ preserves the norms, i.e. $\|\U\|_F = \| \U\Rr_\delta\|_F$, we get 
\begin{equation}\label{eq:constant_sum}
\|\hat\u_1\|^2+\|\hat\u_2\|^2 = \|\u_1\|^2+\|\u_2\|^2.
\end{equation}
Let $\delta=-c\cdot \mathrm{sgn}(\u_1^\top\u_2)$ for a small enough $c>0$ such that $\|\u_2\| < \|\hat\u_2\| \leq \|\hat\u_1\| <\|\u_1\|$. Using Equation~\eqref{eq:constant_sum}, This implies that $\|\hat\u_1\|^4+\|\hat\u_2\|^4 < \|\u_1\|^4+\|\u_2\|^4$, which in turn gives us $R(\hat\U)<R(\U)$ and hence $f(\hat\U)<f(\U)$. Therefore, a non-equalized critical point cannot be local minimum, hence the first claim of the lemma.

We now prove the second part of the lemma. Let $\U$ be a critical point that is not equalized. To show that $\U$ is a strict saddle point, it suffices to show that the Hessian has a negative eigenvalue. In here, we exhibit a curve along which the second directional derivative is negative. Assume, without loss of generality that $\|\u_1\|>\|\u_2\|$ and consider the curve $$\Delta(t)\!:=\![(\sqrt{\!1\!-\!t^2\!}\!-\!1\!)\u_1 \!+\! t\u_2, (\sqrt{\!1\!-\!t^2\!}\!-\!1\!)\u_2 \!-\! t\u_1, \0_{d,r-2} ]$$ It is easy to check that for any $t\in \R$, $\ell(\U+\Delta(t))=\ell(\U)$ since $\U+\Delta(t)$ is essentially a rotation on $\U$ and $\ell$ is invariant under rotations. Observe that
\begin{align*}
&g(t):=f(\U+\Delta(t))\\
&=f(\U)+ \|\sqrt{1-t^2}\u_1 + t\u_2\|^4 - \|\u_1\|^4\\
&+ \|\sqrt{1-t^2}\u_2 - t\u_1\|^4 - \|\u_2\|^4\\
&=f(\U)-2t^2(\|\u_1\|^4+\|\u_2\|^4)+8t^2(\u_1\u_2)^2\\
&+\!4t^2 \|\u_1\|^2\|\u_2\|^2\!+\! 4t\sqrt{\!1\!-\!t^2}\u_1^\top\! \u_2(\|\u_1\|^2\!-\!\!\|\u_2\|^2)\!+\!O(t^3).
\end{align*}
The derivative of $g$ then is given as 
\begin{align*}
&g'(t)\!=\!-4t(\|\u_1\|^4 \!+\!\|\u_2\|^4)\!+\! 16t(\u_1\u_2)^2 \!+\! 8t\|\u_1\|^2\|\u_2\|^2\!\\
&+\! 4(\sqrt{\!1\!-\!t^2}-\frac{t^2}{\sqrt{\!1\!-\!t^2}})(\u_1^\top\! \u_2)(\|\u_1\|^2\!-\|\u_2\|^2)+O(t^2).
\end{align*}
Since $\U$ is a critical point and $f$ is continuously differentiable, it should hold that $g'(0)=4(\u_1^\top\u_2)(\|\u_1\|^2-\|\u_2\|^2)=0$. Since by assumption $\|\u_1\|^2-\|\u_2\|^2>0$, it should be the case that $\u_1^\top\u_2=0$. We now consider the second order directional derivative:
\begin{align*}
g''(0)&=-4(\|\u_1\|^4+\|\u_2\|^4)+16(\u_1\u_2)^2+8\|\u_1\|^2\|\u_2\|^2\\
&=-4(\|\u_1\|^2-\|\u_2\|^2)^2 < 0
\end{align*}
which completes the proof.
\end{proof}

We now focus on the critical points that are equalized, i.e. points $\U$ such that $\nabla f(\U)=\0$ and $\diag(\U^\top\U)=\frac{\|\U\|_F^2}{r}\I$. 

\begin{lemma}\label{lem:no_spurious}
Assume that $\lambda<\frac{r\lambda_r}{\sum_{i=1}^{r}\lambda_i-r\lambda_r}$.
Then all equalized local minima are global. All other equalized critical points are strict saddle points.
\end{lemma}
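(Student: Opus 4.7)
The plan is to first pin down the algebraic structure of every equalized critical point, then to construct an explicit direction of negative curvature at any such point other than the global optimum. At an equalized critical point $\U$, $\diag(\U^\top\U) = s\I_r$ with $s=\|\U\|_F^2/r$, so $\nabla f(\U)=\0$ collapses to $\M\U = (\U\U^\top + \lambda s \I)\U$. Combined with Lemma~\ref{lem:critical_preceq}, this forces $\M$ and $\U\U^\top$ to commute, so in the eigenbasis of $\M$ the matrix $\U\U^\top$ is diagonal with entries $\mu_i \in \{0,\lambda_i - \lambda s\}$. Letting $S:=\{i:\mu_i>0\}$, the identity $\sum_i \mu_i = rs$ yields $s=\sum_{i \in S}\lambda_i/(r+\lambda|S|)$, subject to $\lambda_i > \lambda s$ for all $i \in S$. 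By Theorem~\ref{thm:sym_global}, the global optimum corresponds to $S=[\rho]$, and a short manipulation shows that the hypothesis $\lambda < r\lambda_r/(\sum_i \lambda_i - r\lambda_r)$ is precisely the inequality $\lambda_r > \lambda r \kappa_r/(r+\lambda r)$, i.e., $\rho = r$; thus the global optimum has $S_* = [r]$.

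Next I would show that every equalized critical point with $S \neq [r]$ admits a perturbation $\Delta$ with $\langle \nabla^2 f(\U)\Delta, \Delta\rangle < 0$, by splitting into two complementary cases. If $S \neq \{1,\ldots,|S|\}$, there exist $i^* \in [|S|]\setminus S$ and $j^* \in S \setminus [|S|]$ with $\lambda_{i^*} > \lambda_{j^*}$; applying the Givens rotation $R_\theta$ on rows $i^*, j^*$ and considering $\U(\theta)=R_\theta \U$, left-multiplication preserves column norms (hence the regularizer), while expanding the loss in the $\M$-eigenbasis gives $\|\M - \U(\theta)\U(\theta)^\top\|_F^2 - \|\M - \U\U^\top\|_F^2 = 2\sin^2\theta \cdot \mu_{j^*}(\lambda_{j^*}-\lambda_{i^*}) + O(\theta^4) < 0$. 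If instead $S = \{1,\ldots,k\}$ with $k < r$, then $\U$ has a nontrivial kernel in $\R^r$; picking a unit $\v \in \ker(\U)$ and setting $\Delta = \e_{k+1}\v^\top$ in the $\M$-eigenbasis, a second-order expansion using the vanishing of the $(k+1)$-th row of $\U$ and the equalization $\|\u_j\|^2 = s$ yields $\langle \nabla^2 f(\U)\Delta, \Delta\rangle = -4(\lambda_{k+1}-\lambda s)\|\v\|^2$.

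The hypothesis on $\lambda$ enters in this second case: direct algebra shows $\lambda_{k+1} > \lambda s = \lambda k \kappa_k/(r+\lambda k)$ is equivalent to $\lambda_{k+1}(r + \lambda(k+1)) > \lambda(k+1)\kappa_{k+1}$, i.e., to $k+1 \leq \rho$; since $\rho = r$ and $k < r$, this holds strictly, certifying a strict saddle. Combining both cases with Lemma~\ref{lem:minima_eqz} completes the proof: every critical point of $f$ is either the global optimum or a strict saddle, so every local minimum must be both equalized (by Lemma~\ref{lem:minima_eqz}) and not a strict saddle, hence the global optimum. The main obstacle I anticipate is the first case calculation, where one must carefully track the $2 \times 2$ block of $R_\theta \U\U^\top R_\theta^\top$ in the $(i^*,j^*)$-indices of the $\M$-eigenbasis and show that after canceling with the $(\lambda_{j^*}-\mu_{j^*})^2$ term in the original loss the leading coefficient reduces to $2\mu_{j^*}(\lambda_{j^*} - \lambda_{i^*})$, which is strictly negative.
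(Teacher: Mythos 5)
Your characterization of equalized critical points (the set $S$, the eigenvalues $\mu_i \in \{0, \lambda_i - \lambda s\}$, and $s = \sum_{i\in S}\lambda_i / (r + \lambda|S|)$) matches the paper's derivation via the SVD of $\U$, and your observation that the hypothesis is exactly the inequality $\lambda_r > \lambda r\kappa_r/(r+\lambda r)$, hence $\rho = r$, is a nice simplification that the paper does not exploit as explicitly. Your Case~1 ($S \neq [|S|]$) is the paper's Case~3, and your leading-coefficient computation is right: after including the off-diagonal cross-term $2\mu_{j^*}^2\sin^2\theta\cos^2\theta$, the $\mu_{j^*}$ contributions cancel and one is left with $2\mu_{j^*}(\lambda_{j^*}-\lambda_{i^*})\sin^2\theta$, negative since $i^* < j^*$ (under distinct eigenvalues). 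Your Case~2 ($S = [k]$, $k < r$) takes a genuinely different perturbation from the paper: you use the rank-one direction $\Delta = \e_{k+1}\v^\top$ with $\v \in \ker(\U)$, which (since $\U\v = 0$ and the $(k{+}1)$st row of $\U$ vanishes) gives the clean $g''(0) = -4(\lambda_{k+1} - \lambda s)$; the paper instead inflates all $r$ singular values simultaneously via $\sigma_i \mapsto \sqrt{\sigma_i^2 + t^2}$, which yields the sufficient condition $\lambda < r h(r')/\sum_{i\le r'}(\lambda_i - h(r'))$ where $h(r')$ is the tail average. Your per-$k$ condition $\lambda < r\lambda_{k+1}/\sum_{i\le k}(\lambda_i - \lambda_{k+1})$ is weaker (i.e., holds for a wider range of $\lambda$) for each $k < r-1$, and both coincide with the hypothesis at $k = r-1$.

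One step you gloss over needs an explicit argument. You assert that $\lambda_{k+1} > \lambda k \kappa_k/(r+\lambda k)$ ``is equivalent to $k+1 \le \rho$.'' What is literally shown by the algebra is that this inequality is equivalent to the index $j = k+1$ satisfying the defining condition $\lambda_j > \lambda j\kappa_j/(r+\lambda j)$; but $\rho$ is defined as the \emph{maximum} such $j$, so $k+1 \le \rho$ does \emph{not} by itself imply that $k+1$ satisfies the condition. You implicitly use that the feasible set $\{j : \lambda_j > \lambda j\kappa_j/(r+\lambda j)\}$ is a prefix $[\rho]$. This is true but requires a short argument: writing $c_j := \lambda j\kappa_j/(r+\lambda j)$, one has $c_j = \alpha_j c_{j-1} + (1-\alpha_j)\lambda_j$ with $\alpha_j = (r+\lambda(j-1))/(r+\lambda j) \in (0,1)$, so if $\lambda_j \le c_{j-1}$ then $\lambda_j \le c_j \le c_{j-1}$ and hence $\lambda_{j+1} \le \lambda_j \le c_j$; infeasibility propagates forward and feasibility is indeed a prefix. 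With this inserted, your Case~2 conclusion ($k{+}1 \le \rho = r$ implies $\lambda_{k+1} > \lambda s$) is valid. The paper sidesteps this by directly proving that its per-$r'$ threshold on $\lambda$ is monotone in $r'$, so it suffices to check $r' = r-1$; your route needs the prefix lemma instead. Finally, as in the paper, your Case~1 argument needs $\lambda_{i^*} > \lambda_{j^*}$ strictly, which holds whenever the relevant eigenvalues are distinct; this assumption is implicit in both treatments.
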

\begin{proof}[Proof of Lemma~\ref{lem:no_spurious}]
Let $\U=\W\Sigma\V^\top$ be a compact SVD of the rank-$r'$ weight matrix $\U$. We have: 
\begin{align*}
&\nabla f(\U) = 4(\U\U^\top - \M)\U + 4\lambda\U \diag(\U^\top \U)=\0\\
\implies & \U\U^\top\U + \frac{\lambda \| \U \|_F^2}{r} \U = \M\U\\
\implies & \W\Sigma^3\V^\top + \frac{\lambda \| \Sigma \|_F^2}{r} \W\Sigma\V^\top = \M\W\Sigma\V^\top\\
\implies & \Sigma^2 + \frac{\lambda \| \Sigma \|_F^2}{r} \I = \W^\top\M\W
\end{align*}
Since the left hand side of the above equality is diagonal, it implies that $\W\in\R^{d\times r'}$ corresponds to some $r'$ eigenvectors of $\M$. Let $\cE\subseteq [d], \ |\cE|=r'$ denote the set of eigenvectors of $\M$ that are present in $\W$. Note that the above is equivalent of the following system of linear equations: $$(\I+\frac{\lambda}{r}\1\1^\top)\sigma^2=\vec\lambda,$$ where $\sigma^2:=\diag(\Sigma^2)$ and $\vec\lambda=\diag(\W^\top\M\W)$. 
By Lemma~\ref{lem:inverse}, the solution to this linear system is given by 
\begin{equation}\label{eq:sol}
\sigma^2=(\I-\frac{\lambda}{r+\lambda r'})\vec\lambda.
\end{equation}
The set $\cE$ belongs to one of the following categories:
\begin{enumerate}
\item $\cE=[r'], \ r'=\rho$
\item $\cE=[r'], \ r'<\rho$
\item $\cE\neq[r']$
\end{enumerate}
The case $\cE=[r'], \ r'>\rho$ is excluded from the above partition, since whenever $\cE=[r']$, it should hold that $r'\leq\rho$. To see this, note that due to $\U=\W\Sigma\V^\top$ being a compact SVD of $\M$, it holds that $\sigma_j>0$ for all $j \in [r']$. Specifically for $j=r'$, plugging $\sigma_{r'}>0$ back to Equation~\eqref{eq:sol}, we get $$\lambda_{r'}>\frac{\lambda \sum_{i=1}^{r'}{\lambda_i}}{r+\lambda r'}=\frac{\lambda r' \kappa_{r'}}{r+\lambda r'}.$$
Then it follows from definition of $\rho$ in Theorem~\ref{thm:sym_global} that $r'\leq \rho$. We provide a case by case analysis for the above partition here.

\paragraph{Case 1. [$\cE=[r'], \ r'=\rho$]} When $\W$ corresponds to the top-$\rho$ eigenvectors of $\M$, we retrieve the global optimal solution described by Theorem~\ref{thm:sym_global}. Therefore, all such critical points are global minima.

\paragraph{Case 2. [$\cE=[r'], \ r'<\rho$]} Let $\W_r:=[\W,\W_\perp]$ be the top-$r$ eigenvectors of $\M$ and $\V_\perp$ span the orthogonal subspace of $\V$, i.e. $\V_r:=[\V,\V_\perp]$ be an orthonormal basis for $\R^r$. Define $\U(t)=\W_r\Sigma'\V_r^\top$ where $\sigma'_i=\sqrt{\sigma_i^2+t^2}$ for $i\leq r$. Observe that $$\U(t)^\top\U(t)=\V\Sigma\V^\top + t^2\V_r^\top\V_r = \U^\top\U+t^2\I_r$$ so that for all $t$, the parametric curve $\U(t)$ is equalized. The value of the loss function at $\U(t)$ is given by:
\begin{align*}
\ell(\U(t))&=\sum_{i=1}^{r}{(\lambda_i - \sigma_i^2 - t^2)^2}+\sum_{i=r+1}^{d}{(\lambda_i)^2}\\
&=\ell(\U)+rt^4-2t^2\sum_{i=1}^{r}{(\lambda_i - \sigma_i^2 )}.
\end{align*}
Furthermore, since $\U(t)$ is equalized, we obtain the following form for the regularizer:
\begin{align*}
R(\U(t))&=\frac{\lambda}{r}\|\U(t)\|_F^4
=\frac{\lambda}{r}\left(\|\U\|_F^2 + rt^2\right)^2\\
&=\ell(\U)+\lambda r t^4 + 2 \lambda t^2 \|\U\|_F^2.
\end{align*}
Now define $g(t):=\ell(\U(t))+R(\U(t))$ and observe
\begin{align*}
g(t)&=\ell(\U)+R(\U)+rt^4-2t^2\sum_{i=1}^{r}{(\lambda_i - \sigma_i^2 )}\\
&+\lambda r t^4 + 2 \lambda t^2 \|\U\|_F^2.
\end{align*}
It is easy to verify that $g'(0)=0$. Moreover, the second derivative of $g$ at the origin is given as:
\begin{align*}
g''(0)&=-4\sum_{i=1}^{r}{(\lambda_i - \sigma_i^2 )}+ 4 \lambda \|\U\|_F^2\\
&=-4\sum_{i=1}^{r}{\lambda_i }+ 4 (1+\lambda) \|\U\|_F^2\\
&=-4\sum_{i=1}^{r}{\lambda_i }+ 4 \frac{r+ r\lambda}{r+\lambda r'}\sum_{i=1}^{r'}{\lambda_i}
\end{align*}
where the last equality follows from the fact Equation~\eqref{eq:sol} and the fact that $\|\U\|_F^2 =\sum_{i=1}^{r'}{\sigma_i^2}$. To get a sufficient condition for $\U$ to be a strict saddle point, we set $g''(0)<0$:
\begin{align*}
&-4\sum_{i=r'+1}^{r}{\lambda_i }+ 4 \frac{(r-r')\lambda}{r+\lambda r'}\sum_{i=1}^{r'}{\lambda_i} < 0\\
&\implies \frac{(r-r')\lambda}{r+\lambda r'}\sum_{i=1}^{r'}{\lambda_i} < \sum_{i=r'+1}^{r}{\lambda_i }\\
&\implies \lambda< \frac{(r+\lambda r')\sum_{i=r'+1}^{r}{\lambda_i }}{(r-r')\sum_{i=1}^{r'}{\lambda_i} }\\
&\implies \lambda(1-\frac{ r'\sum_{i=r'+1}^{r}{\lambda_i }}{(r-r')\sum_{i=1}^{r'}{\lambda_i} })<\frac{r\sum_{i=r'+1}^{r}{\lambda_i }}{(r-r')\sum_{i=1}^{r'}{\lambda_i} }\\
&\implies \lambda<\frac{r\sum_{i=r'+1}^{r}{\lambda_i }}{(r-r')\sum_{i=1}^{r'}{\lambda_i} -  r'\sum_{i=r'+1}^{r}{\lambda_i }}\\
&\implies \lambda<\frac{r h(r')}{\sum_{i=1}^{r'}{(\lambda_i - h(r'))}}
\end{align*}
where $h(r'):=\frac{\sum_{i=r'+1}^{r}{\lambda_i }}{r-r'}$ is the average of the eigenvalues $\lambda_{r'+1},\cdots,\lambda_{r}$. It is easy to see that the right hand side is monotonically decreasing with $r'$, since $h(r')$ monotonically decrease with $r'$. Hence, it suffices to make sure that $\lambda$ is smaller than the right hand side for the choice of $r'=r -1$, i.e. $\lambda<\frac{r\lambda_r}{\sum_{i=1}^{r}(\lambda_i-\lambda_r)}$.

\paragraph{Case 3. [$\cE\neq[r']$]} We show that all such critical points are strict saddle points. Let $\w'$ be one of the top $r'$ eigenvectors that are missing in $\W$. Let $j\in\cE$ be such that $\w_j$ is not among the top $r'$ eigenvectors of $\M$. For any $t\in [0,1]$, let $\W(t)$ be identical to $\W$ in all the columns but the $j^{\text{th}}$ one, where $\w_j(t)=\sqrt{1-t^2}\w_j+t\w'$. Note that $\W(t)$ is still an orthogonal matrix for all values of $t$. Define the parametrized curve $\U(t):=\W(t)\Sigma\V^\top$ for $t\in[0,1]$ and observe that:
\begin{align*}
\| \U-\U(t) \|_F^2 &= \sigma_j^2\|\w_j - \w_j(t)\|^2\\
&= 2\sigma_j^2(1-\sqrt{1-t^2}) \leq t^2 \tr{\M}
\end{align*}
That is, for any $\epsilon>0$, there exist a $t>0$ such that $\U(t)$ belongs to the $\epsilon$-ball around $\U$. We show that $f(\U(t))$ is strictly smaller than $f(\U)$, which means $\U$ cannot be a local minimum. Note that this construction of $\U(t)$ guarantees that $R(\U')=R(\U)$. In particular, it is easy to see that $\U(t)^\top\U(t)=\U^\top\U$, so that $\U(t)$ remains equalized for all values of $t$. Moreover, we have that 
\begin{align*}
&f(\U(t))-f(\U)= \| \M-\U(t)\U(t)^\top \|_F^2 - \| \M-\U\U^\top \|_F^2\\
&= - 2\tr(\Sigma^2\W(t)^\top\M\W(t)) + 2\tr(\Sigma^2\W^\top\M\W)\\
&=-2\sigma^2_j t^2(\w_j(t)^\top\M\w_j(t)-\w_j^\top\M\w_j) <0,
\end{align*}
where the last inequality follows because by construction $\w_j(t)^\top\M\w_j(t) > \w_j^\top\M\w_j$. Define $g(t):=f(\U(t))=\ell(\U(t))+R(\U(t))$. To see that such saddle points are non-degenerate, it suffices to show $g''(0)<0$. It is easy to check that the second directional derivative at the origin is given by $$g''(0)=-4\sigma^2_j(\w_j(t)^\top\M\w_j(t)-\w_j^\top\M\w_j)<0,$$ which completes the proof.
\end{proof}

\begin{proof}[Proof of Lemma~\ref{lem:local}]
Follows from Lemma~\ref{lem:minima_eqz}
\end{proof}

\begin{proof}[Proof of Theorem~\ref{thm:main_geometry}]
Follows from Lemma~\ref{lem:minima_eqz} and Lemma~\ref{lem:no_spurious}.
\end{proof}

\end{document}